\renewcommand{\algorithmiccomment}[1]{\bgroup\hfill//~#1\egroup}
\newcommand{\highlight}[2]{\colorbox{#1}{$\displaystyle#2$}} 
\theoremstyle{plain}
\newtheorem{theorem}{Theorem}
\newtheorem{lemma}{Lemma}
\newtheorem{corollary}{Corollary}
\theoremstyle{definition}
\newtheorem{definition}{Definition}
\newtheorem{assumption}{Assumption}
\newtheorem{example}{Example}
\newtheorem{remark}{Remark}
\crefname{algorithm}{Algorithm}{Algorithms}
\crefname{assumption}{Assumption}{Assumptions}
\crefname{corollary}{Corollary}{Corollaries}
\crefname{definition}{Definition}{Definitions}
\crefname{equation}{Equation}{Equations}
\crefname{example}{Example}{Examples}
\crefname{figure}{Figure}{Figures}
\crefname{lemma}{Lemma}{Lemmas}
\crefname{proposition}{Proposition}{Propositions}
\crefname{remark}{Remark}{Remarks}
\crefname{table}{Table}{Tables}
\crefname{theorem}{Theorem}{Theorems}
\Crefname{algorithm}{Algorithm}{Algorithms}
\Crefname{assumption}{Assumption}{Assumptions}
\Crefname{corollary}{Corollary}{Corollaries}
\Crefname{definition}{Definition}{Definitions}
\Crefname{equation}{Equation}{Equations}
\Crefname{example}{Example}{Examples}
\Crefname{figure}{Figure}{Figures}
\Crefname{lemma}{Lemma}{Lemmas}
\Crefname{proposition}{Proposition}{Propositions}
\Crefname{remark}{Remark}{Remarks}
\Crefname{table}{Table}{Tables}
\Crefname{theorem}{Theorem}{Theorems}
\newcommand{\proposal}{\hyperref[algo:double-loop]{\texttt{EpiRC-PGS}}\xspace}
\newcommand{\lagrange}{\texttt{LF}\xspace}
\newcommand{\lagrangepolavg}{\texttt{LF-$\pi^{(k)}$-avg}\xspace}
\newcommand{\lagrangeoccavg}{\texttt{LF-$\occ{\pi^{(k)}}_P$-avg}\xspace}
\newcommand{\cmark}{\ding{51}}
\newcommand{\xmark}{\ding{53}}
\definecolor{LightGray}{gray}{0.95}
\newcommand{\toshinori}[1]{\textcolor{orange}{\textbf{[Toshinori: }#1\textbf{]}}}
\newcommand{\kozuno}[1]{\textcolor{magenta}{\textbf{[Kozuno: }#1\textbf{]}}}
\newcommand{\paavo}[1]{\textcolor{blue}{\textbf{[Paavo: }#1\textbf{]}}}
\newcommand{\toshinori}[1]{}
\newcommand{\kozuno}[1]{}
\newcommand{\paavo}[1]{}
\title{Near-Optimal Policy Identification in Robust Constrained Markov Decision Processes via Epigraph Form}
\author{Toshinori Kitamura\textsuperscript{1*},\, Tadashi Kozuno\textsuperscript{2,4},\, Wataru Kumagai\textsuperscript{2},\, Kenta Hoshino\textsuperscript{3},\,\\ {\bf Yohei Hosoe\textsuperscript{3},\, Kazumi Kasaura\textsuperscript{2},\, Masashi Hamaya\textsuperscript{2},\, Paavo Parmas\textsuperscript{1},\, Yutaka Matsuo\textsuperscript{1}
}\\\\
\textsuperscript{1} The University of Tokyo,  \textsuperscript{2} OMRON SINIC X Corporation, 
\textsuperscript{3} Kyoto University,
\textsuperscript{4} Osaka University\\
\textsuperscript{*}\texttt{toshinori-k@weblab.t.u-tokyo.ac.jp}
}
\begin{document}

\maketitle

{\color{red}
Erratum. We draw the reader’s attention to a technical error in the proof of global optimality of stationary points of the maximum violation function $\Delta_{b_0}(\pi)$ in \cref{theorem: Phi gradient dominance}. Specifically, \cref{eq:Phi-grad-domiance-goal} is false in general. The proof requires the identity
$\min_{y\in \cY}\max_{x\in \cX} \langle x, y \rangle = \min_{y \in \conv (\cY)}\max_{x \in \cX} \langle x, y \rangle$ 
for compact sets $\cX, \cY \subseteq \R^d$ with $\cX$ convex, but this identity does not hold in general; for instance, one may take $\cY=\{-1,1\}$ and $\cX=[-1,1]$. See Appendix B.2 in \citet{kitamura2026robust} for details.
Unfortunately, \citet{kitamura2026robust} further show that general RMDPs may admit multiple local minima. Moreover, even under $(s,a)$-rectangularity, finding an $\varepsilon$-optimal policy in RCMDPs is NP-hard. As a result, the main result of this paper, \cref{corollary:DLPG-epigraph}, is also incorrect.
}

\begin{abstract}
\looseness=-1
Designing a safe policy for uncertain environments is crucial in real-world control systems. However, this challenge remains inadequately addressed within the Markov decision process (MDP) framework. This paper presents the first algorithm guaranteed to identify a near-optimal policy in a robust constrained MDP (RCMDP), where an optimal policy minimizes cumulative cost while satisfying constraints in the worst-case scenario across a set of environments. We first prove that the conventional policy gradient approach to the Lagrangian max-min formulation can become trapped in suboptimal solutions. This occurs when its inner minimization encounters a sum of conflicting gradients from the objective and constraint functions. To address this, we leverage the epigraph form of the RCMDP problem, which resolves the conflict by selecting a single gradient from either the objective or the constraints. Building on the epigraph form, we propose a bisection search algorithm with a policy gradient subroutine and prove that it identifies an $\varepsilon$-optimal policy in an RCMDP with $\widetilde{\mathcal{O}}(\varepsilon^{-4})$ robust policy evaluations.
\end{abstract}


\section{Introduction}\label{sec:intro}

\looseness=-1
In real-world decision-making, it is crucial to design policies that satisfy safety constraints even in uncertain environments.
For example, self-driving cars must drive efficiently while maintaining a safe distance from obstacles, regardless of environmental uncertainties such as road conditions, weather, or the state of the vehicle's components.
Traditionally, within the Markov decision process (MDP) framework, constraint satisfaction and environmental uncertainty have been addressed separately---through constrained MDP (CMDP; e.g., \citealp{altman1999constrained}), which aims to minimize cumulative cost while satisfying constraints, and robust MDP (RMDP; e.g., \citealp{iyengar2005robust}), which aims to minimize the worst-case cumulative cost over an uncertainty set of possible environments. 
However, in practice, both robustness and constraint satisfaction are important. 
The recent robust constrained MDP (RCMDP) framework addresses this dual need by aiming to minimize the worst-case cost while robustly satisfying the constraints.
Despite significant theoretical progress in CMDPs and RMDPs (see \cref{sec:related work}), theoretical results on RCMDPs remain scarce.
Even in the tabular setting, where the state and action spaces are finite, there exists no algorithm with guarantees to find a near-optimal policy in an RCMDP.

\looseness=-1
The difficulty of RCMDPs arises from the challenging optimization process, which simultaneously considers robustness and constraints. 
The dynamic programming (DP) approach, popular in unconstrained RMDPs, is unsuitable for constrained settings where Bellman's principle of optimality can be violated \citep{haviv1996constrained,bellman1957dynamic}.
Similarly, the linear programming (LP) approach, commonly used for CMDPs, is inadequate due to the nonconvexity of the robust formulation \citep{iyengar2005robust,grand2022convex}.
Consequently, the policy gradient method with the Lagrangian formulation has been studied as the primary remaining option \citep{russel2020robust,wang2022robust}. 
The Lagrangian formulation approximates the RCMDP problem $\min_\pi \brace*{f(\pi) \given h(\pi) \leq 0}$ by $\max_{\lambda \geq 0} \min_\pi f(\pi) + \lambda h(\pi)$, where $f(\pi)$ and $h(\pi)$ represent the worst-case cumulative cost---called the (cost) return\footnote{We often use the term \emph{return} to refer specifically to the objective cost return. When discussing a return value in the context of RCMDP's constraints, we refer to it as the \emph{constraint return}.}---and the worst-case constraint violation of a policy $\pi$, respectively.
There have been a few attempts to provide theoretical guarantees for the Lagrangian approach \citep{wang2022robust,zhang2024distributionally}; however, no existing studies offer rigorous and satisfactory guarantees that the max-min problem yields the same solution as the original RCMDP problem.
As a result, existing Lagrangian-based algorithms lack theoretical performance guarantees.
This leaves us with a fundamental question:
\begin{center}
\emph{How can we identify a near-optimal policy in a tabular RCMDP?}
\end{center}

\looseness=-1
We address this question by presenting three key contributions, which are summarized as follows:

\paragraph{Gradient conflict in the Lagrangian formulation (\cref{sec:multi-robust}).}
We first show that solving the Lagrangian formulation is inherently difficult, even when its max-min problem can yield an optimal policy. 
Given the limitations of DP and LP approaches as discussed, the policy gradient method might seem like a viable alternative to solve the max-min.
However, our \cref{theorem:non-gradient-dominance} reveals that policy gradient methods can get trapped in a local minimum during the inner minimization of the Lagrangian formulation.
This occurs when the gradients, $\nabla f(\pi)$ and $\nabla h(\pi)$, conflict with each other, causing their sum $\nabla f(\pi) + \lambda \nabla h(\pi)$ to cancel out, even when the policy $\pi$ is not optimal.
Consequently, the Lagrangian approach for RCMDPs may not reliably lead to a near-optimal policy.

\begin{table}[t]
\centering
\begin{tabular}{|c|c|c|c|c|}
\hline
Approach                                  & MDP   & CMDP  & RMDP  & RCMDP \\ \hline
Dynamic Programming                       & $\underset{\text{\citet{bellman1957dynamic}}}{\text{\cmark}}$& \xmark & $\underset{\text{\citet{iyengar2005robust}}}{\text{\cmark}}$ & \xmark \\ \hline
Linear Programming                        & $\underset{\text{\citet{denardo1970linear}}}{\text{\cmark}}$& $\underset{\text{\citet{altman1999constrained}}}{\text{\cmark}}$& \xmark & \xmark \\ \hline
Lagrangian + PG & $\underset{\text{\citet{agarwal2021theory}}}{\text{\cmark}}$ & $\underset{\text{\citet{ding2020natural}}}{\text{\cmark}}$& $\underset{\text{\citet{wang2023policy}}}{\text{\cmark}}$ & \xmark \\ \hline
\rowcolor{LightGray}
\textbf{Epigraph + PG (Ours)}  & \cmark & \cmark & \cmark & \cmark \\ \hline
\end{tabular}
\caption{
\looseness=-1
Summary of approaches and the problem settings. 
``PG'' denotes Policy Gradient.
Each cell displays a ``\cmark'' indicating the presence of an algorithm with this approach that guarantees yielding an $\varepsilon$-optimal policy.
Representative works supporting each ``\cmark'' are listed below it. 
Conversely, ``\xmark'' denotes settings where the approach either isn't suitable or lacks performance guarantees.
}\label{tb:approach-summary}
\end{table}

\paragraph{Epigraph form of RCMDP (\cref{sec:alternative form}).}
\looseness=-1
We then demonstrate that the \emph{epigraph form}, commonly used in constrained optimization literature \citep{boyd2004convex,beyer2007robust,rahimian2019distributionally}, entirely circumvents the challenges associated with the Lagrangian formulation.
The epigraph form transforms the RCMDP problem into $\min_{y} \brace*{y \given \min_{\pi} \max\brace*{f(\pi) - y, h(\pi)} \leq 0}$, introducing an auxiliary minimization problem of $\min_{\pi} \max\brace*{f(\pi) - y, h(\pi)}$ and minimizing its threshold variable $y$.
Unlike the Lagrangian approach, which necessitates \textbf{summing} $\nabla f(\pi)$ \textbf{and} $\nabla h(\pi)$, policy gradient methods for the auxiliary problem update the policy by \textbf{selecting either} $\nabla f(\pi)$ \textbf{or} $\nabla h(\pi)$, thanks to the maximum operator in the problem. 
As a result, the epigraph form avoids the problem of conflicting gradient sums, preventing policy gradient methods from getting stuck in suboptimal minima (\cref{theorem: Phi gradient dominance}).

\looseness=-1
\paragraph{A new RCMDP algorithm (\cref{sec:double-loop}).}
Finally, we propose a tabular RCMDP algorithm called \textbf{Epi}graph \textbf{R}obust \textbf{C}onstrained \textbf{P}olicy \textbf{G}radient \textbf{S}earch (\textbf{\proposal}, pronounced as ``Epic-P-G-S'').
The algorithm employs a double-loop structure:
the inner loop verifies the feasibility of the threshold variable $y$ by performing policy gradients on the auxiliary problem, while the outer loop employs a bisection search to determine the minimal feasible $y$.
\proposal is guaranteed to find an $\varepsilon$-optimal policy\footnote{The definition of an $\varepsilon$-optimal policy is provided in \cref{def:eps-optimal-policy}} with $\tiO(\varepsilon^{-4})$ robust policy evaluations (\cref{corollary:DLPG-epigraph}), where $\tiO(\cdot)$ represents the conventional big-O notation excluding polylogarithmic terms.
Since RCMDP generalizes plain MDP, CMDP, and RMDP, our \proposal is applicable to all these types of MDPs, ensuring near-optimal policy identification for each. 
\cref{tb:approach-summary} compares existing approaches in various MDP settings.
Due to the page limitation, more related work is provided in \cref{sec:related work}. 
We discuss limitations and potential future directions in \cref{sec:conclusion}.


\section{Preliminary}\label{sec:problem-setup}
\looseness=-1
We use the shorthand $\R_+ \df [0, \infty)$.
The set of probability distributions over $\cS$ is denoted by $\probsplx(\cS)$.
For two integers $a \leq b$, we define $\bbrack{a,b}\df \brace*{a,\dots, b}$.
If $a > b$, $\bbrack{a, b} \df \emptyset$.
For a vector $x \in \R^N$, its $n$-th element is denoted by $x_n$ or $x(n)$, and we use the convention that $\|x\|_2=\sqrt{\sum_i{x_i^2}}$ and $\|x\|_\infty=\max_i\abs{x_i}$.
For two vectors $x, y \in \R^N$, we denote $\langle x, y\rangle = \sum_i x_i y_i$.
We let $\bzero \df \paren{0, \ldots, 0}^\top$ and $\bone \df \paren{1, \ldots, 1}^\top$, with their dimensions being clear from the context.
All scalar operations and inequalities should be understood point-wise when applied to vectors and functions.
Given a finite set $\cS$, we often treat a function $f: \cS \to \R $ as a vector $f \in \R^{|\cS|}$. 
Both notations, $f: \cS \to \R$ and $f \in \R^{|\cS|}$, are used depending on notational convenience.
Finally, $\partial f(x)$ and $\nabla f(x)$ denote the set of (Fr\'{e}chet) subgradients and gradient of $f: \cX \to \R$ at a point $x$, respectively. 
Their formal definitions are deferred to \cref{def:subgradient}.

\subsection{Constrained Markov Decision Process}

\looseness=-1
Let $N \in \Z_{\geq 0}$ be the number of constraints.
An infinite-horizon discounted constrained MDP (CMDP) is defined as a tuple $(\S, \A, \gamma, P, \cC, b, \mu)$, where
$\S$ denotes the finite state space with size $\aS$,
$\A$ denotes the finite action space with size $\aA$,
$\gamma \in (0, 1)$ denotes the discount factor,
and $\mu \in \probsplx(\S) $ denotes the initial state distribution.
For notational brevity, let $H\df (1-\gamma)^{-1}$ denotes the effective horizon.
Further, $b \df (b_1, \dots, b_N)\in [0, H]^N$ denotes the constraint threshold vector, where $b_n$ is the threshold scalar for the $n$-th constraint,
$\cC\df\brace*{c_n}_{n\in \bbrack{0,N}}$ denotes the set of cost functions, where
$c_n: \SA \to [0, 1]$ denotes the $n$-th cost function and $c_n(s, a)$ denotes the $n$-th cost when taking an action $a$ at a state $s$.
$c_0$ is for the objective to optimize and $\brace{c_1, \dots, c_N}$ are for the constraints. 
$P: \SA \to \probsplx(\S)$ denotes the transition probability kernel, 
which can be interpreted as the environment with which the agent interacts. $P\paren{s' \given s, a}$ denotes the state transition probability to a new state $s'$ from a state $s$ when taking an action $a$.

\subsection{Policy and Value Functions}
\looseness=-1
A (Markovian stationary) policy is defined as $\pi \in \R^{\aSA}$ such that $\pi \paren{s, \cdot} \in \probsplx(\A)$ for any $s \in \S$. $\pi(s, a)$ denotes the probability of taking an action $a$ at state $s$.
The set of all the policies is denoted as $\Pi$, which corresponds to the \emph{direct parameterization} policy class presented in \citet{agarwal2021theory}.
Although non-Markovian policies can yield better performance in general RMDP problems \citep{wiesemann2013robust}, for simplicity, we focus on Markovian stationary policies in this paper. 
With an abuse of notation, for two functions $\pi, g \in \R^{\aSA}$, we denote $\langle \pi, g\rangle = \sum_{s, a \in \SA} \pi(s, a) g(s, a)$.

\looseness=-1
For a policy $\pi$ and transition kernel $P$, let $\occ{\pi}_P: \S \to \R_+$ denote the occupancy measure of $\pi$ under $P$. 
$\occ{\pi}_P(s)$ represents the expected discounted number of times $\pi$ visits state $s$ under $P$, such that
$
\occ{\pi}_P(s) = 
(1-\gamma)
\E\brack*{
\sum_{h=0}^\infty
\gamma^{h}
\mathbbm{1}\brace*{s_h=s} \given s_0 \sim \mu, \pi, P}
$.
Here, the notation means that the expectation is taken over all possible trajectories, where $a_h \sim \pi \paren{s_h, \cdot}$ and $s_{h+1} \sim P \paren{\cdot \given s_h, a_h}$.

\looseness=-1
For a $\pi \in \R^{\aSA}$ and a cost $c \in \R^{\aSA}$, let $\qf{\pi}_{c, P}: \SA \to \R$ be the action-value function such that 
\footnote{The domain of $Q^\pi_{c, P}$ is not restricted to $\Pi$ to ensure well-defined policy gradients over $\pi \in \Pi$.}
$$
\qf{\pi}_{c, P}(s, a) = c(s, a) + \gamma \sum_{s' \in \S}P\paren*{s'\given s, a}\sum_{a' \in \A}\pi(s', a')\qf{\pi}_{c, P}(s', a')\quad \forall (s, a) \in \SA\;.
$$
\looseness=-1
Let $\vf{\pi}_{c, P} : \S \to \R$ be the state-value function such that $\vf{\pi}_{c, P} (s) = \sum_{a \in \A} \pi(s, a)\qf{\pi}_{c, P}(s, a)$ for any $s \in \S$.
If $\pi \in \Pi$, $\vf{\pi}_{c, P}(s)$ represents the expected cumulative cost of $\pi$ under $P$ with an initial state $s$.
We denote the (cost) return function as $J_{c, P}(\pi) \df \sum_{s \in \S} \mu(s) \vf{\pi}_{c, P}(s)$.

\paragraph{Policy gradient method.}
\looseness=-1
For a problem $\min_{\pi \in \Pi} f(\pi)$ where $f: \Pi \to \R$ is differentiable at $\pi \in \Pi$, policy gradient methods with direct parameterization update $\pi$ to a new policy $\pi'$ as follows:
\begin{equation}\label{eq:Psi-pol-grad}
\begin{aligned}
\pi' \df \proj_{\Pi}\paren*{\pi - \alpha \nabla f(\pi)}
\numeq{=}{a} \argmin_{\pi' \in \Pi}
\left\langle 
\nabla f(\pi), \pi' - \pi
\right\rangle + \frac{1}{2\alpha} \norm*{\pi' - \pi}^2_2\;,
\end{aligned}
\end{equation}
\looseness=-1
where $\alpha > 0$ is the learning rate and $\proj_{\Pi}$ denotes the Euclidean projection operator onto $\Pi$.
The equality (a) is a standard result (see, e.g., \citet{parikh2014proximal}). 
The following lemma provides the gradient of $J_{c, P}(\pi)$ for the direct parameterization policy class $\Pi$ (e.g., \citet{agarwal2021theory}). 
\begin{lemma}[Policy gradient theorem]\label{lemma:policy gradient}
For any $\pi \in \Pi$, transition kernel $P:\SA \to \probsplx(\S)$, and cost $c \in \R^{\aSA}$, the gradient is given by
$
\paren*{\nabla J_{c, P}(\pi)}(s, a) = 
H\occ{\pi}_{P}(s) \qf{\pi}_{c, P}(s, a) \quad\forall (s, a) \in \SA\;.
$
\end{lemma}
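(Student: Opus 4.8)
The plan is to differentiate the return through its recursive Bellman structure and to recognize the resulting discounted series as the occupancy measure. Since $J_{c, P}(\pi) = \sum_{s \in \S} \mu(s) \vf{\pi}_{c,P}(s)$, it suffices to compute $\partial \vf{\pi}_{c,P}(s)/\partial \pi(\bar s, \bar a)$ for a fixed pair $(\bar s, \bar a) \in \SA$ and then average over $\mu$. First I would record the well-definedness and smoothness: writing $P^\pi(s'\given s) \df \sum_{a} \pi(s,a) P(s'\given s, a)$ for the state-transition kernel induced by $\pi$, the matrix $I - \gamma P^\pi$ is invertible on a neighborhood of $\Pi$ in $\R^{\aSA}$ because $\gamma < 1$ and the row sums of $P^\pi$ stay near $1$. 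Hence $\vf{\pi}_{c,P} = (I - \gamma P^\pi)^{-1} c^\pi$ (with $c^\pi(s) = \sum_a \pi(s,a) c(s,a)$) is a genuine differentiable rational function of the entries of $\pi$, which is exactly why the footnote extends the domain of $\qf{\pi}_{c,P}$ beyond $\Pi$; this legitimizes differentiating the Bellman recursions entrywise.

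The second step is the differentiation itself. Applying the product rule to $\vf{\pi}_{c,P}(s) = \sum_{a} \pi(s,a)\qf{\pi}_{c,P}(s,a)$, and using that $c(s,a)$ does not depend on $\pi$ while $\qf{\pi}_{c,P}(s,a) = c(s,a) + \gamma \sum_{s'} P(s'\given s,a)\vf{\pi}_{c,P}(s')$, the only explicit (non-recursive) contribution comes from differentiating the leading factor $\pi(s,a)$, which survives precisely when $s = \bar s$ and $a = \bar a$. Every other $\pi$-dependence is funnelled through $\vf{\pi}_{c,P}(s')$ and reproduces the same gradient one transition later. This yields the fixed-point equation
\[
\frac{\partial \vf{\pi}_{c,P}(s)}{\partial \pi(\bar s, \bar a)}
= \mathbbm{1}\brace*{s = \bar s}\,\qf{\pi}_{c,P}(\bar s, \bar a)
+ \gamma \sum_{s' \in \S} P^\pi(s' \given s)\,\frac{\partial \vf{\pi}_{c,P}(s')}{\partial \pi(\bar s, \bar a)}\,.
\]

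The third step solves this linear system. Writing it as $g = r + \gamma P^\pi g$ with $g(s) = \partial \vf{\pi}_{c,P}(s)/\partial \pi(\bar s, \bar a)$ and $r(s) = \mathbbm{1}\brace*{s=\bar s}\qf{\pi}_{c,P}(\bar s, \bar a)$, invertibility of $I - \gamma P^\pi$ gives the Neumann series $g = \sum_{h=0}^\infty \gamma^h (P^\pi)^h r$, which I interpret probabilistically as $g(s) = \qf{\pi}_{c,P}(\bar s, \bar a)\,\E\brack*{\sum_{h=0}^\infty \gamma^h \mathbbm{1}\brace*{s_h = \bar s} \given s_0 = s, \pi, P}$. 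Averaging over $s_0 \sim \mu$ and comparing with the definition $\occ{\pi}_P(\bar s) = (1-\gamma)\E\brack*{\sum_{h=0}^\infty \gamma^h \mathbbm{1}\brace*{s_h = \bar s} \given s_0 \sim \mu, \pi, P}$ converts the discounted visitation sum into $H\occ{\pi}_P(\bar s)$, yielding $\paren*{\nabla J_{c,P}(\pi)}(\bar s, \bar a) = H\occ{\pi}_P(\bar s)\qf{\pi}_{c,P}(\bar s, \bar a)$; renaming $(\bar s, \bar a) \to (s,a)$ gives the claim.

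I expect the main obstacle to be bookkeeping in the differentiation step rather than any deep difficulty: one must track that differentiating $\pi(s,a)$ with respect to $\pi(\bar s, \bar a)$ produces an indicator on \emph{both} coordinates, so that the action index $\bar a$ enters only through the factor $\qf{\pi}_{c,P}(\bar s, \bar a)$ and never through the visitation term, while the recursive summand carries no explicit dependence on $\bar a$ at all. The remaining justifications — convergence of the series and the exchange of differentiation with the infinite sum — are immediate from $\gamma < 1$ together with the row-stochasticity of $P^\pi$, and are routine.
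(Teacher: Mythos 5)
Your proof is correct: the fixed-point equation for $\partial \vf{\pi}_{c,P}(s)/\partial \pi(\bar s,\bar a)$, its Neumann-series solution, and the conversion of the discounted visitation sum into $H\occ{\pi}_{P}(\bar s)$ via the paper's normalized definition of the occupancy measure are all sound, and your neighborhood-invertibility remark properly licenses differentiating the extended value function even at boundary policies, which is exactly the role of the paper's footnote. The paper itself offers no proof --- it cites \citet{agarwal2021theory} --- and your derivation is essentially the standard one from that reference, so there is nothing further to reconcile.
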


\subsection{Robust Constrained Markov Decision Process}

\looseness=-1
An infinite-horizon discounted robust constrained MDP (RCMDP) is defined as a tuple $(\S, \A, \gamma, \U, \cC, b, \mu)$, where $\U$ is a compact set of transition kernels, called the uncertainty set, which can be either finite or infinite. 
The infinite uncertainty set typically requires some structural assumptions.
A common structure is the $(s, a)$-rectangular set \citep{iyengar2005robust,nilim2005robust}, defined as $\U = \times_{s, a}\; \U_{s, a}$, where $\U_{s, a} \subseteq \probsplx(\S)$ and $\times_{s, a}$ denotes a Cartesian product over $\SA$.
We remark that our work is \textbf{not} limited to any specific structural assumption, but rather considers a general, tractable uncertainty set (see \cref{assumption:uncertainty set,,assumption:eval-oracle,,assumption:grad-approx}).
When $N=0$, an RCMDP reduces to an RMDP.
When $\U=\{P\}$, an RCMDP becomes a CMDP.


\looseness=-1
For a cost $c \in \R^{\aSA}$, let $J_{c, \U} (\pi) \df \max_{P \in \U} J_{c, P}(\pi)$ denote the worst-case (cost) return function, which represents the cost return of $\pi$ under the most adversarial environment within $\U$.
The goal of an RCMDP is to find a solution to the following constrained optimization problem:
\begin{equation}\label{eq:RCMDP problem}
    (\textbf{RCMDP})\quad 
    J^\star \df \min_{\pi \in \Pi} J_{c_0,\U}(\pi) \; \text{ such that } \;
    J_{c_n, \U}(\pi) \leq b_n \quad \forall n \in \bbrack*{1, N}\;.
\end{equation}

\looseness=-1
Let $\Pisafe \df \brace*{\pi \in \Pi \given \max_{n \in \oneN} J_{c_n, \U}(\pi) - b_n \leq 0}$ be the set of all the feasible policies.
We assume that $\Pisafe$ is non-empty.
An optimal policy $\pi^\star \in \Pisafe$ is a solution to \cref{eq:RCMDP problem}.

\begin{definition}\label{def:eps-optimal-policy}
\looseness=-1
$\pi \in \Pi$ is $\varepsilon$-optimal\footnote{
Strict constraint satisfaction is straightforward by using a slightly stricter threshold $b' \df b - \varepsilon$.}
if 
$J_{c_0, \U}(\pi) - \optret \leq \varepsilon$
and
$\max_{n \in \oneN}J_{c_n, \U}(\pi) - b_n \leq \varepsilon$. 
\end{definition}

\section{Challenges of Lagrangian Formulation}\label{sec:multi-robust}

\looseness=-1
To motivate our formulations and algorithms presented in subsequent sections, this section illustrates the limitations of using the conventional Lagrangian approach for RCMDPs. 
By introducing Lagrangian multipliers $\lambda\df \paren{\lambda_1, \dots, \lambda_N} \in \R^N_+$, \cref{eq:RCMDP problem} is equivalent to
\begin{equation*}
\optret =
\min_{\pi \in \Pi} \max_{\lambda \in \R^N_+} J_{c_0, \U}(\pi) + \sum_{n =1}^N\lambda_n \paren*{J_{c_n, \U}(\pi) - b_n} \;.
\end{equation*}
\looseness=-1
As this $\min_{\pi}$ is hard to solve due to the inner maximization, \citet{russel2020robust,mankowitz2020robust,wang2022robust} swap the min-max and consider the following Lagrangian formulation:
\begin{equation}\label{eq:Lagrangian-problem}
(\textbf{Lagrange})\quad  \optlag \df
\max_{\lambda \in \R^N_+} 
\min_{\pi \in \Pi} 
\Lag{\lambda}{\pi}\;
\text{ where }\;
\Lag{\lambda}{\pi} \df 
J_{c_0, \U}(\pi) + \sum_{n=1}^N\lambda_n \paren*{J_{c_n, \U}(\pi) - b_n}\;.
\end{equation}

\looseness=-1
Let $\lambda^\star$ be a solution to \cref{eq:Lagrangian-problem}.
The Lagrangian approach aims to solve \cref{eq:Lagrangian-problem} by expecting $\pi^\star \in \argmin_{\pi \in \Pi} \Lag{\lambda^\star}{\pi}$. 
However, this expectation may not hold, as swapping the min-max is not necessarily equivalent to the original min-max problem \citep{boyd2004convex}.
Therefore, to guarantee the performance of the Lagrange approach, the two questions must be addressed:
\begin{center}
\emph{
$\mathrm{(i)}$ Can we ensure $\pi^\star \in \argmin_{\pi \in \Pi} \Lag{\lambda^\star}{\pi}$?$\quad$  $\mathrm{(ii)}$ If so, is it tractable to solve $\min_{\pi \in \Pi} \Lag{\lambda}{\pi}$?
}
\end{center}
However, answering these questions affirmatively is challenging due to the following issues:

\paragraph{$(\mathrm{i})$ $\bpi^\star$ solution challenge.}
\looseness=-1
To ensure that \(\pi^\star \in \argmin_{\pi \in \Pi} \Lag{\lambda^\star}{\pi}\), the standard approach is to establish the strong duality, i.e., \(\optret = \optlag\) \citep{boyd2004convex}. 
In the CMDP setting, where \(\U = \brace{P}\), strong duality has been proven to hold \citep{altman1999constrained,paternain2019constrained,paternain2022safe}. 
However, proving strong duality for RCMDPs is highly non-trivial compared to CMDPs.

\looseness=-1
When $\U = \brace{P}$, a typical proof strategy is to combine the sum of cost returns $J_{c_0, P}, \dots, J_{c_N, P}$ in $\Lag{\lambda}{\pi}$ into a single return function.
For example, when $N=1$, we have $\Lag{\lambda}{\pi} = J_{c', P}(\pi)$, where $c'\df c_0 + \lambda \paren*{c_1 - b_1 / H \bone}$.
Since $J_{c', P}(\pi)$ is linearly represented as $J_{c', P}(\pi) = \langle c', \occ{\pi}_P \rangle$\footnote{With an abuse of notation, here we denote $\occ{\pi}_P(s, a) = \E\brack*{
\sum_{h=0}^\infty
\gamma^{h}
\mathbbm{1}\brace*{s_h=s, a_h=a} \given s_0 \sim \mu, \pi, P}$.}, it is easy to show $\min_\pi \max_\lambda J_{c', P}(\pi) = \max_\lambda \min_\pi J_{c', P}(\pi)$ by Sion's minimax theorem \citep{sion1958general}.

\looseness=-1
However, applying this return-combining strategy to RCMDPs is difficult.
For $n \in \bbrack{0, N}$, let $P_n \in \argmax_{P \in \U} J_{c_n, P}(\pi)$.
When $\abs{\U} \neq 1$, $\Lag{\lambda}{\pi}$ has a sum of returns $J_{c_0, P_0}(\pi) \dots J_{c_N, P_N}(\pi)$, where $P_0 \neq \dots \neq P_N$ may differ.
Thus, $\Lag{\lambda}{\pi}$ no longer takes the form of $\langle c', \occ{\pi}_P \rangle$ for a single environment $P$, making well-established duality proof techniques in CMDP literature inapplicable.



\begin{figure*}[t!]
    \centering
    \begin{subfigure}[b]{0.62\linewidth}
        \centering
        \includegraphics[width=\linewidth]{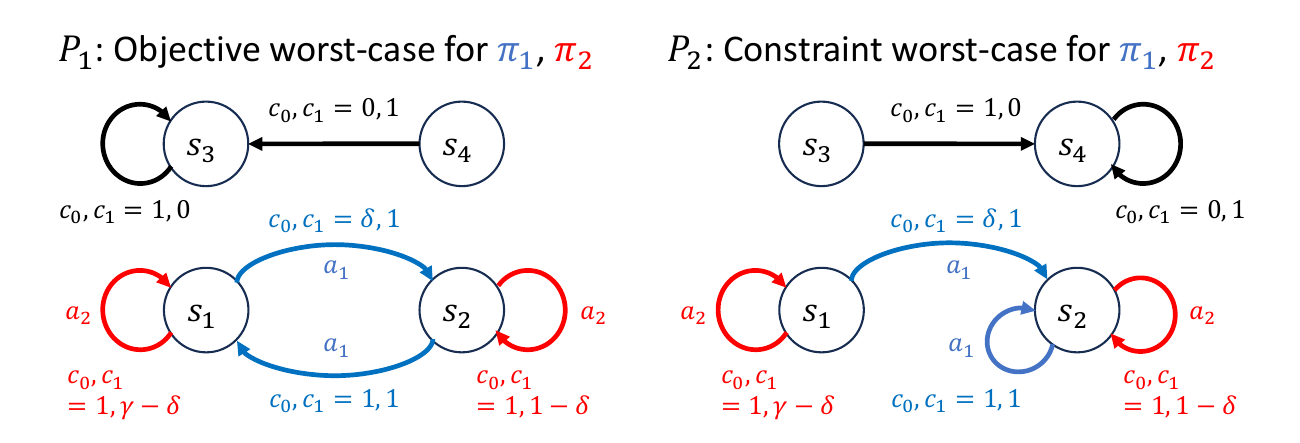}
        \caption{RCMDP presented in \cref{example:RCMDP}, where $\delta \geq 0$.} \label{fig:non-grad-CMDP}
    \end{subfigure}
    \hfill
    \begin{subfigure}[b]{0.35\linewidth}
        \centering
        \includegraphics[width=\linewidth]{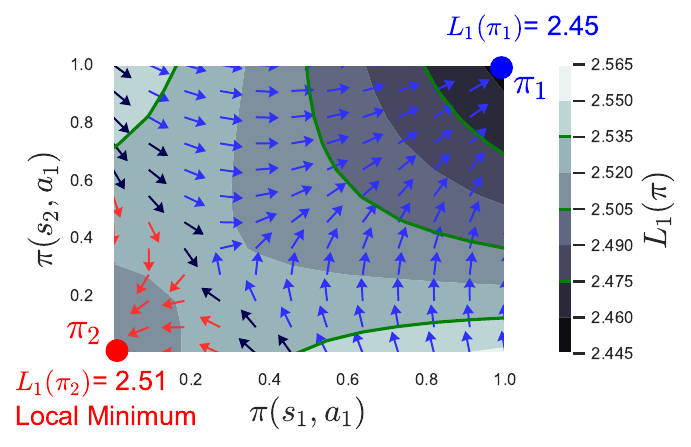}
        \caption{$L_1(\pi)$ and policy gradients.}
         \label{fig:grad-conflict-graph}
    \end{subfigure}
    \caption{\looseness=-1 
    \textbf{(a)}: 
    An RCMDP example illustrating the \textbf{gradient conflict challenge}.
    Action labels are omitted when transitions are action-independent. 
    \textbf{(b)}: Policy gradients in the example with $(\gamma, \delta, b_1)=(0.4, 0.09, 0)$. 
    Arrows represent the gradient to decrease $\Lag{1}{\pi}$. 
    ${\color{red}\pi_2}$ attracts policy gradients but is a local minimum since $\Lag{1}{{\color{red} \pi_2}} > \Lag{1}{{\color{blue} \pi_1}}$, where ${\color{blue} \pi_1}(\cdot, {\color{blue} a_1})=1$ and ${\color{red} \pi_2}(\cdot, {\color{red} a_2})=1$.}
\end{figure*}

\paragraph{$(\mathrm{ii})$ Gradient conflict challenge.}
\looseness=-1
Unfortunately, even if strong duality holds and $\pi^\star$ can be found by $\pi^\star \in \argmin_{\pi \in \Pi} L_{\lambda^\star}(\pi)$, solving this minimization remains challenging.
Since the sum of robust cost returns in $\Lag{\lambda^\star}{\pi}$ excludes the use of DP and convex-optimization approaches \citep{iyengar2005robust,altman1999constrained,grand2022convex}, the policy gradient method such as \cref{eq:Psi-pol-grad} is the primary remaining option to solve $\min_{\pi \in \Pi} L_{\lambda}(\pi)$.

\looseness=-1
In CMDP setting when $\abs{\U}=1$, the problem $\min_{\pi\in \Pi} \Lag{\lambda}{\pi}$ reduces to solving a standard MDP and thus the policy gradient method is ensured to solve $\min_{\pi\in \Pi} \Lag{\lambda}{\pi}$ \citep{agarwal2021theory}.
However, the following \cref{theorem:non-gradient-dominance} shows that even when $\abs*{\U}=2$, RCMDP can trap the policy gradient in a local minimum that does not solve $\min_{\pi\in \Pi} \Lag{\lambda}{\pi}$:
\begin{theorem}\label{theorem:non-gradient-dominance}
\looseness=-1
For any $\gamma \in (0, 1)$, there exist a $\widebar{\lambda} > 0$, a policy $\barpi \in \Pi$ and an RCMDP with $\mu> \bzero$ satisfying the following condition: There exists a positive constant $R > 0$ such that, for any $b_1 \in \R$, 
\begin{equation}\label{eq:non-grad-dom}
\Lag{\widebar{{\lambda}}}{\barpi} < \Lag{\widebar{{\lambda}}}{\pi}\;\; \forall \pi \in \brace*{\pi \in \Pi \given \norm*{\pi - \barpi}_2 \leq R,\; \pi \neq \barpi}\;
\text{ but }\; 
\Lag{\widebar{\lambda}}{\barpi}  \geq \min_{\pi \in \Pi}\Lag{\widebar{\lambda}}{\pi} + \frac{3 \gamma H}{16}\;.
\end{equation}
Moreover, there exists a $b_1 \in (0, H)$ where $\widebar{\lambda}$ satisfies $\widebar{\lambda} \in \argmax_{\lambda \in \R^N_+} \min_{\pi \in \Pi}  \Lag{\lambda}{\pi}$.
\end{theorem}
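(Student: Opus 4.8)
The plan is to prove the statement by an explicit construction: I would exhibit the RCMDP of \cref{example:RCMDP} (depicted in \cref{fig:non-grad-CMDP}) with a two-point uncertainty set $\U = \brace*{P_1, P_2}$, a single constraint ($N=1$), and the multiplier $\widebar{\lambda} = 1$. Because the only nontrivial decision is a binary choice between $a_1$ and $a_2$ at one state $\hat s$, I would parameterize the relevant policies by the scalar $p \df \pi(\hat s, a_2) \in [0,1]$ and write $\pi_p$; in this notation the spurious local minimizer is $\barpi = \pi_1$ and the global minimizer is $\pi_0$ (the policies labeled $\pi_2$ and $\pi_1$, respectively, in \cref{fig:non-grad-CMDP}). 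First I would solve the value-function recursions to obtain closed forms for $J_{c_0, P_i}(\pi_p)$ and $J_{c_1, P_i}(\pi_p)$ as functions of $p$ for $i \in \brace*{1,2}$; these are elementary since the chain is small and the two kernels differ only in a few transitions governed by $\delta$. Taking the pointwise maximum over $i$ yields $J_{c_0, \U}(\pi_p)$ and $J_{c_1, \U}(\pi_p)$ as max-of-two functions, and hence $\Lag{\widebar{\lambda}}{\pi_p}$, which the construction makes nonconvex in $p$ with its global minimum at $p = 0$ and a spurious stationary point at $p=1$.

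The core of the argument is the local-minimum claim at $\barpi$. I would compute the subdifferential of $\Lag{\widebar{\lambda}}{\cdot}$ at this vertex via Danskin's theorem: since $J_{c,\U} = \max_{P \in \U} J_{c,P}$ is a maximum over a finite set, its subdifferential is the convex hull of the gradients $\nabla J_{c,P}(\pi)$ over the active kernels $P \in \argmax_{P' \in \U} J_{c,P'}(\pi)$. The key is an \emph{active-kernel switch} at $\barpi$: the objective return is maximized by one kernel while the constraint return is maximized by the other. By \cref{lemma:policy gradient} each gradient has the form $H\occ{\pi}_P(\hat s)\,\qf{\pi}_{c,P}(\hat s, \cdot)$, and since $\mu > \bzero$ forces $\occ{\pi}_P(\hat s) > 0$, the sign of each directional derivative along the only feasible edge (toward $a_1$) is fixed by the relevant $Q$-values. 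I would choose the costs so that $\nabla J_{c_0,\U}$ pushes toward $a_1$ while $\widebar{\lambda}\,\nabla J_{c_1,\U}$ pushes back toward $a_2$ and strictly dominates, so that the one-sided directional derivative of $\Lag{\widebar{\lambda}}{\cdot}$ is strictly positive into the simplex; this yields the strict local minimum on a ball of some radius $R>0$. The gap follows by evaluating $\Lag{\widebar{\lambda}}{\pi_1} - \Lag{\widebar{\lambda}}{\pi_0}$ in closed form: the additive constant $-\widebar{\lambda}b_1$ cancels, so the difference is independent of $b_1$, and I would fix the numerical costs and $\delta$ so that it equals $\tfrac{3\gamma H}{16}$ for every $\gamma \in (0,1)$; since $\pi_0$ witnesses $\min_{\pi} \Lag{\widebar{\lambda}}{\pi} \le \Lag{\widebar{\lambda}}{\pi_0}$, this gives the required lower bound on $\Lag{\widebar{\lambda}}{\barpi} - \min_\pi \Lag{\widebar{\lambda}}{\pi}$.

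For the final sentence I must show that for some admissible $b_1 \in (0,H)$, the chosen $\widebar{\lambda} = 1$ lies in $\argmax_{\lambda \ge 0} \min_{\pi} \Lag{\lambda}{\pi}$. Writing $g(\lambda) \df \min_{\pi \in \Pi} \Lag{\lambda}{\pi}$, I would use that $g$ is concave (a pointwise minimum of functions affine in $\lambda$), so since $\widebar{\lambda} = 1$ is interior to $\R_+$ it suffices to verify the stationarity condition $0 \in \partial g(\widebar{\lambda})$. By the envelope theorem the superdifferential of $g$ at $\widebar{\lambda}$ is the convex hull of $\brace*{J_{c_1,\U}(\pi^\star) - b_1}$ over minimizers $\pi^\star$ of $\Lag{\widebar{\lambda}}{\cdot}$, so this reduces to complementary slackness. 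I would therefore set $b_1 \df J_{c_1,\U}(\pi_0)$, the worst-case constraint return at the global minimizer, making the slack vanish, and then check that this value lies strictly inside $(0,H)$.

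The step I expect to be the main obstacle is making all these requirements hold simultaneously for a single instance and a single $\widebar{\lambda}$. The gradient-conflict condition (opposing active $Q$-values at $\barpi$), the exact size of the gap, and complementary slackness each constrain the transition probabilities, the costs, and $\delta$, and I would need to check that the resulting parameters stay mutually consistent and admissible---costs in $[0,1]$, $\mu > \bzero$, $\widebar{\lambda} > 0$, and $b_1 \in (0,H)$---uniformly in $\gamma$. Ensuring that $\barpi$ is genuinely attracting (not merely stationary) while the same multiplier remains dual-optimal is where the construction is most delicate: the active-kernel switch at $\barpi$ must produce a first-order-cancelling gradient sum, yet leave a strictly lower branch at $p=0$, and I would expect most of the effort to go into tuning $P_1$ and $P_2$ so that both properties coexist.
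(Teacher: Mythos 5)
Your proof of the first half is essentially the paper's own argument: the same four-state example with $\U=\{P_1,P_2\}$, $\widebar{\lambda}=1$, a $\delta>0$ tilt that makes $a_2$ strictly preferred at $\barpi=\pi_2$, the Danskin/convex-hull subdifferential with the active-kernel switch ($P_1$ active for the objective, $P_2$ for the constraint), and a $b_1$-independent gap obtained from closed-form returns (the paper gets $\Lag{1}{\pi_2}-\min_\pi\Lag{1}{\pi}\geq \frac{H\gamma}{4}-\frac{3H\delta}{4}$, so taking $\delta$ small enough, e.g.\ $\delta\leq\gamma/12$, delivers the $\frac{3\gamma H}{16}$ constant). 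For the second half you take a genuinely different and cleaner route: the paper argues existence of a suitable $b_1$ indirectly, bounding the slopes of $\Psi_{b_1}(\lambda)=\min_\pi\Lag{\lambda}{\pi}$ to show the dual argmax sweeps from $\infty$ to $0$ as $b_1$ runs over $[0,H]$, then invoking the intermediate value theorem and the generalized (Clarke) mean value theorem on $b_1\mapsto\max_\lambda\Psi_{b_1}(\lambda)$ to extract a $b_1^\star$ with $1\in\argmax_\lambda\Psi_{b_1^\star}(\lambda)$; you instead pick $b_1$ constructively by complementary slackness, setting $b_1=J_{c_1,\U}(\pi^\dagger)$ at a global minimizer $\pi^\dagger$ of $\Lag{1}{\cdot}$ (legitimate, since $b_1$ enters $\Lag{1}{\cdot}$ only as an additive constant, so the minimizer set is $b_1$-free), which places $0$ in the superdifferential of the concave dual $g(\lambda)=\min_\pi\Lag{\lambda}{\pi}$ at the interior point $\lambda=1$ via the same envelope theorem the paper already uses; the range check $b_1\in(0,H)$ follows from the paper's own bounds $\tfrac14\leq J_{c_1,P}(\pi)\leq H-\tfrac14$. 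Your version avoids the nonsmooth mean-value machinery and yields an explicit $b_1$, which is a genuine simplification. Two caveats you should repair when writing this up: (i) the example has nontrivial action choices at \emph{two} states ($s_1$ and $s_2$), not one, so your scalar parameterization misdescribes it—harmless since the theorem is existential, but the per-state gradient comparison must then be done at both states as in the paper; (ii) a strictly positive one-sided directional derivative at the vertex $\barpi$ does not by itself give a strict local minimum of a nonsmooth function on a ball—you need, as the paper makes explicit via the Lipschitz continuity of $J_{c_n,P}$ and $\nabla J_{c_n,P}$ (\cref{lemma:V-lipschitz}), that the strict argmax kernels persist on a neighborhood of $\barpi$, so that $\Lag{1}{\cdot}$ is smooth there and the strict gradient inequalities survive; you flag this delicacy but should carry out that persistence step explicitly.
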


\looseness=-1
The detailed proof is deferred to \cref{subsec:proof-of-non-gradient-dominance}. Essentially, the proof constructs a simple RCMDP where \textbf{the policy gradients for the objective and the constraint are in conflict}.

\begin{example}\label{example:RCMDP}
\looseness=-1
Consider the RCMDP with $\U=\brace{P_1, P_2}$ presented in \cref{fig:non-grad-CMDP} with $\delta = 0$, $\mu=0.25 \bone$, and set $\lambda =1$ for simplicity. 
Let ${\color{blue}\pi_1}$ and ${\color{red}\pi_2}$ be policies that select ${\color{blue}a_1}$ and ${\color{red}a_2}$ for all states, respectively. 
For both policies, the objective worst-case is $P_1$ and the constraint worst-case is $P_2$ (see \cref{subsec:proof-of-non-gradient-dominance}).
Hence, switching from policy ${\color{red}\pi_2}$ to taking action ${\color{blue}a_1}$ decreases the objective return in $P_1$ but increases the constraint return in $P_2$.
This conflict causes the gradients of ${\color{red}\pi_2}$ for the objective $(\nabla J_{c_0, P_1}({\color{red}\pi_2}))$ and for the constraint $(\nabla J_{c_1, P_2}({\color{red}\pi_2}))$ to sum to a constant vector, i.e., 
\begin{equation}\label{eq:Lagrange-grad}
\paren*{\nabla \Lagnot{1}({\color{red}\pi_2})}(s, \cdot) = \paren*{\nabla J_{c_0, P_1}({\color{red}\pi_2}) + \nabla J_{c_1, P_2}({\color{red}\pi_2})}(s, \cdot) = \mathrm{constant} \cdot \bone\quad \forall s \in \S\;,
\end{equation}
showing that ${\color{red}\pi_2}$ is a stationary point. 
However, ${\color{red}\pi_2}$ cannot solve $\min_{\pi \in \Pi} L_{1}(\pi)$ because ${\color{blue} \pi_1}$ would clearly result in a smaller $L_{1}(\pi)$.
This stationary point becomes a strict local minimum when $\delta > 0$, where ${\color{red} \pi_2}$ slightly prefers ${\color{red} a_2}$ over ${\color{blue} a_1}$ (see \cref{subsec:proof-of-non-gradient-dominance} for details).

\looseness=-1
\cref{fig:grad-conflict-graph} computationally illustrates this negative result by plotting the landscape of $\Lag{1}{\pi}$ in the RCMDP example across all possible policies for $(\gamma, \delta) = (0.4, 0.09)$. 
We set $b_1 = 0$ as it does not influence the landscape of $\Lag{1}{\pi}$. 
In this example, ${\color{red}\pi_2}$ becomes a local minimum that attracts the policy gradient but fails to solve $\min_{\pi \in \Pi}\Lag{1}{\pi}$, as ${\color{blue}\pi_1}$ achieves $\Lag{\lambda}{{\color{blue}\pi_1}} < \Lag{\lambda}{{\color{red}\pi_2}}$.

\end{example}



\section{Epigraph Form of RCMDP}\label{sec:alternative form}
\looseness=-1
This section introduces the epigraph form of RCMDP, which overcomes the challenges discussed in \cref{sec:multi-robust}.
For a constrained optimization problem of the form $\min_{x}\brace*{f(x) \given h(x) \leq 0}$ with $x \in \R^n$ and $f, h: \R^n \to \R$, its epigraph form is defined as:
\begin{equation}\label{eq:epigraph-optimize}
\min_{x, y} y \;\suchthat\; f(x) \leq y \; \text{ and }\; h(x) \leq 0
\end{equation}
with variables $x \in \R^n$ and $y \in \R$.
It is well-known that $(x, y)$ is optimal for \cref{eq:epigraph-optimize} if and only if $x$ is optimal for the original problem and $y = f(x)$ (see, e.g., \citealp{boyd2004convex}).

\looseness=-1
Using \cref{eq:epigraph-optimize} and by introducing a new optimization variable ${\color{red}b_0} \in [0, H]$, an RCMDP becomes 
\begin{equation}\label{eq:epigraph-pre}
\optret = 
\min_{{\color{red}b_0} \in [0, H], \pi\in \Pi} {\color{red}b_0} \;\suchthat 
\underbrace{J_{c_0, \U}(\pi) \leq {\color{red}b_0}}_{\text{constraint to minimize objective}} \text{and }\; \underbrace{J_{c_n, \U}(\pi) \leq b_n \; \forall n \in \bbrack{1, N}}_{\text{constraints for } \pi \in \Pisafe}\;.
\end{equation}
\looseness=-1
Intuitively, \cref{eq:epigraph-pre} seeks the smallest objective threshold $b_0$ such that a feasible policy $\pi \in \Pisafe$ exists with its cumulative objective cost satisfying $J_{c_0, \cU}(\pi) \leq b_0$.

\looseness=-1
We transform \cref{eq:epigraph-pre} into a more convenient form. Define $\epivionot{b_0}: \R^{\aSA} \to \R$ such that
\begin{equation}\label{eq:Delta-def}
\epivionot{b_0}(\pi) = \max_{n \in \bbrack{0, N}} J_{c_n, \cU}(\pi) - b_n \quad \forall \pi \in \Pi\;,
\end{equation}
which denotes the maximum violation of the constraints $\max_{n \in \bbrack{1,N}} J_{c_n, \mathcal{U}}(\pi) - b_n \leq 0$ with the additional constraint $J_{c_0, \mathcal{U}}(\pi) - b_0 \leq 0$.
By moving $\min_{\pi \in \Pi}$ in \cref{eq:epigraph-pre} to its constraint and using $\epivio{b_0}{\pi}$, \cref{eq:epigraph-pre} can be transformed as follows:
\begin{theorem}\label{theorem:epigraph is RCMDP-zero}
\looseness=-1
Let $\epivios{b_0} \df \min_{\pi \in \Pi}\epivio{b_0}{\pi}$, where $\epivio{b_0}{\pi}$ is defined in \cref{eq:Delta-def}. Then,
\begin{equation}\label{eq:epigraph problem-zero}
({\normalfont\textbf{Epigraph Form}})\quad \optret = \min_{b_0 \in [0, H]} b_0 \; \text{ such that }\; \epivios{b_0} \leq 0 \;.
\end{equation}
\looseness=-1
Furthermore, if $b_0=\optret$, any $\pi \in \argmin_{\pi \in \Pi}\epivio{b_0}{\pi}$ is optimal.
\end{theorem}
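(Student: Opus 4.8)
The plan is to massage the already-established formulation \cref{eq:epigraph-pre} into the claimed \cref{eq:epigraph problem-zero} through two elementary but careful rewritings, and then to read the optimality assertion directly off the definition of $\epivionot{b_0}$ in \cref{eq:Delta-def}. I take \cref{eq:epigraph-pre} as given, since it follows from the standard epigraph-form equivalence for \cref{eq:epigraph-optimize} applied to the RCMDP \cref{eq:RCMDP problem}.

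First I would observe that the conjunction of the $N+1$ constraints in \cref{eq:epigraph-pre}---namely $J_{c_0, \U}(\pi) \leq b_0$ together with $J_{c_n, \U}(\pi) \leq b_n$ for all $n \in \bbrack{1, N}$---is logically equivalent to the single scalar inequality $\epivio{b_0}{\pi} \leq 0$. This is immediate from \cref{eq:Delta-def}, because a maximum of finitely many terms $\max_{n \in \bbrack{0,N}}\paren*{J_{c_n, \U}(\pi) - b_n}$ is nonpositive if and only if every term is nonpositive. Substituting this equivalence recasts \cref{eq:epigraph-pre} as $\optret = \min_{b_0 \in [0, H],\, \pi \in \Pi} b_0$ subject to $\epivio{b_0}{\pi} \leq 0$.

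Next, since the objective $b_0$ does not depend on $\pi$, I would push the minimization over $\pi$ into the feasibility condition: a value $b_0$ is feasible for the joint problem precisely when there exists some $\pi \in \Pi$ with $\epivio{b_0}{\pi} \leq 0$, and this existential statement is equivalent to $\min_{\pi \in \Pi}\epivio{b_0}{\pi} = \epivios{b_0} \leq 0$. This projection leaves the objective and feasible values unchanged, so the resulting outer problem is literally \cref{eq:epigraph problem-zero} and inherits both the optimal value $\optret$ and its attainment from \cref{eq:epigraph-pre}. The one point requiring care is that the existential-to-minimum passage needs the infimum over $\Pi$ to be attained, so that $\epivios{b_0} \leq 0$ genuinely certifies a witnessing policy; I expect this to be the main (though mild) obstacle. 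It rests on $\Pi$ being compact and $\pi \mapsto \epivio{b_0}{\pi}$ being continuous, which holds because each robust return $J_{c_n, \U}(\pi) = \max_{P \in \U} J_{c_n, P}(\pi)$ is a maximum over a compact uncertainty set $\U$ of maps $\pi \mapsto J_{c_n, P}(\pi)$ that are continuous, and a finite maximum of continuous functions is continuous.

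Finally, for the optimality claim I would set $b_0 = \optret$. By the first part $\optret$ is feasible for \cref{eq:epigraph problem-zero}, so $\epivios{\optret} \leq 0$. Taking any $\pi \in \argmin_{\pi \in \Pi}\epivio{\optret}{\pi}$ gives $\epivio{\optret}{\pi} = \epivios{\optret} \leq 0$, which by \cref{eq:Delta-def} unpacks into $J_{c_0, \U}(\pi) \leq \optret$ and $J_{c_n, \U}(\pi) \leq b_n$ for all $n \in \bbrack{1, N}$. The latter inequalities say $\pi \in \Pisafe$, so the definition of $\optret$ as the minimal objective return over feasible policies forces $J_{c_0, \U}(\pi) \geq \optret$. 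Combining the two bounds yields $J_{c_0, \U}(\pi) = \optret$ with $\pi$ feasible, i.e. $\pi$ solves \cref{eq:RCMDP problem}, which is exactly the asserted optimality.
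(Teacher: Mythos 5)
Your proof is correct, but it follows a genuinely different route from the paper's. The paper does not work from \cref{eq:epigraph-pre} at all in its proof of this theorem: it first establishes \cref{lemma: phi-properties} (monotonicity of $\epivios{b_0}$ and the exact identity $\epivios{\optret}=0$, both proved by contradiction with the definition of an optimal policy), and then derives \cref{eq:epigraph problem-zero} by contradiction --- if the epigraph optimum $x$ satisfied $x<\optret$, feasibility at $x$ would yield a policy in $\Pisafe$ with $J_{c_0,\U}(\pi)\leq x<\optret$, while $\epivios{\optret}=0$ certifies that $b_0=\optret$ is feasible; the ``furthermore'' part is read off the same identity. You instead transform \cref{eq:epigraph-pre} directly: aggregating the $N+1$ constraints into the single inequality $\epivio{b_0}{\pi}\leq 0$ and projecting out $\pi$, which reduces everything to attainment of $\min_{\pi\in\Pi}\epivio{b_0}{\pi}$ --- a point the paper silently assumes (it freely writes $\min$ and $\argmin$) but you explicitly justify. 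Your ``furthermore'' step is also marginally more economical, needing only $\epivios{\optret}\leq 0$ and upgrading $J_{c_0,\U}(\pi)\leq\optret$ to equality via the definition of $\optret$, rather than the identity $\epivios{\optret}=0$; the paper's detour through \cref{lemma: phi-properties} buys the monotonicity fact, which it needs anyway for the bisection search. One small repair to your attainment argument: you conflate the two maxima. A finite maximum over $n\in\bbrack{0,N}$ of continuous functions is continuous, but for an infinite compact $\U$ the pointwise maximum $\max_{P\in\U}J_{c_n,P}(\pi)$ of continuous maps is in general only lower semicontinuous; you need the family $\brace*{J_{c_n,P}}_{P\in\U}$ to be uniformly Lipschitz in $\pi$ (or joint continuity in $(P,\pi)$ plus Berge's maximum theorem). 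Fortunately \cref{lemma:V-lipschitz} supplies exactly this --- $J_{c,\U}$ itself is $\Lipcnt$-Lipschitz --- so the slip is cosmetic and your argument stands as written once you cite that lemma instead.
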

\looseness=-1
The proof is provided in \cref{subsec:proof-epigraph is RCMDP-zero}.
Instead of \cref{eq:epigraph-pre}, we call \cref{eq:epigraph problem-zero} the epigraph form of RCMDP.
Since the epigraph form provides $\optret$ and $\pi^\star$, it overcomes the \textbf{$\bpi^\star$ solution challenge} discussed in \cref{sec:multi-robust}.
The remaining task is to develop an algorithm to solve \cref{eq:epigraph problem-zero}.


\section{EpiRC-PGS Algorithm}\label{sec:double-loop}

\looseness=-1 According to \cref{theorem:epigraph is RCMDP-zero}, we can solve an RCMDP by first identifying the optimal return value $b_0 = J^\star$ and then solving $\min_{\pi \in \Pi} \epivio{b_0}{\pi}$. 
Our \textbf{Epi}graph \textbf{R}obust \textbf{C}onstrained \textbf{P}olicy \textbf{G}radient \textbf{S}earch (\textbf{\proposal}) algorithm implements these two steps using a double-loop structure: 
$(\mathrm{i})$ an outer loop that determines $b_0 = J^\star$ through a bisection search over $b_0 \in [0, H]$ (\cref{sec:find b0}), and 
$(\mathrm{ii})$ an inner loop that solves $\min_{\pi \in \Pi} \epivio{b_0}{\pi}$ using a policy gradient subroutine (\cref{subsec:solve phi}).

\looseness=-1  
Note that without any assumptions about the uncertainty set $\U$, solving an RCMDP is NP-hard \citep{wiesemann2013robust}. 
However, imposing concrete structures on $\U$ can restrict the applicability of \proposal. 
To enable \proposal to handle a broader class of $\U$, we consider $\U$ where we can approximate the robust cost return value $J_{c_n, \U}(\pi)$ and its subgradient $\partial J_{c_n, \U}(\pi)$ as follows:
\begin{assumption}[Robust policy evaluator]\label{assumption:eval-oracle}
\looseness=-1
For each $n \in \bbrack{0, N}$, we have an algorithm $\evaloracle_n:\Pi \to \R$ such that $\abs{\evaloracle_n(\pi) - J_{c_n, \U}(\pi)} \leq \epsest$ for any $\pi \in \Pi$, where $\epsest \geq 0$.
\end{assumption}
\begin{assumption}\label{assumption:uncertainty set}
$\cU$ is either ($\mathrm{i}$) a finite set or ($\mathrm{ii}$) a compact set such that, for any $\pi \in \Pi$, $\nabla J_{c_n, P}(\pi)$ is continuous with respect to $P \in \U$.
\end{assumption}
\begin{assumption}[Subgradient evaluator]\label{assumption:grad-approx}
\looseness=-1
For each $n \in \bbrack{0, N}$, we have an algorithm $\gradoracle_n: \Pi \to \R^{\aSA}$ such that 
$\min_{\grad\in \partial J_{c_n, \U}(\pi)}
\norm{
\gradoracle_n(\pi) - \grad
}_2 \leq \epsgrd
$ for any $\pi \in \Pi$, where $\epsgrd \geq 0$.
\end{assumption}

\looseness=-1
\cref{assumption:eval-oracle,,assumption:uncertainty set,,assumption:grad-approx} are satisfied for most tractable uncertainty sets, such as finite, ball \citep{kumar2024policy}, $R$-contamination \citep{wang2022policy}, $L_1$, $\chi^2$, and Kullback–Leibler (KL) sets \citep{yang2022toward}. 
For these tractable uncertainty sets $\U$, the robust policy evaluator ($\evaloracle_n$) and subgradient evaluator ($\gradoracle_n$) can be efficiently implemented using robust DP methods \citep{iyengar2005robust,kumar2022efficient,kumar2024policy,wang2022policy}. 
As concrete examples, we provide detailed implementations of $\evaloracle_n$ and $\gradoracle_n$ for finite and KL sets in \cref{appendix:example oracles}.
We assumed \cref{assumption:uncertainty set} because the envelope theorem (\cref{lemma:danskin's theorem,,lemma:finite-danskin}), together with this assumption, guarantees that $\partial J_{c_n, \U}(\pi)$ is well-defined. 

\subsection{Bisection Search with \texorpdfstring{$\min_{\pi \in \Pi} \epivio{b_0}{\pi}$ Subroutine}{Subroutine}}\label{sec:find b0}
\begin{wrapfigure}[7]{r}{0.3\textwidth}
  \begin{center}
    \vspace*{-15mm}
    \includegraphics[width=0.29\textwidth]{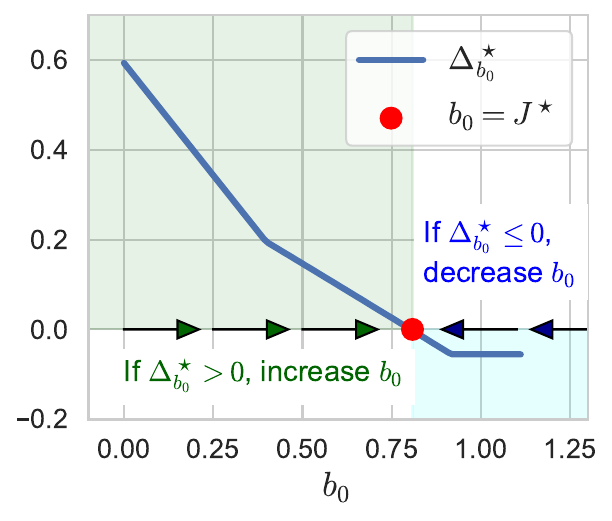}
  \end{center}
  \vspace*{-6.5mm}
  \caption{\looseness=-1 \small 
  Algorithmic idea to find $b_0 = \optret$ in \cref{example:RCMDP} with $(\gamma, \delta, b_1) = (0.1, 0, 2/3)$.}
  \label{fig:algo-concept}
\end{wrapfigure}

\looseness=-1
This section describes the outer loop of \proposal to identify $b_0 = J^\star$.
The outer loop utilizes the following monotonicity of $\epivios{b_0}$ for the identification. The proof is deferred to \cref{subsec:proof of phi-properties}:
\begin{lemma}\label{lemma: phi-properties}
$\epivios{b_0}$ is monotonically decreasing in $b_0$ and $\epivios{\optret}=0$.
\end{lemma}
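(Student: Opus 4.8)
The plan is to prove the two claims of \cref{lemma: phi-properties} separately, since both follow almost directly from the definition $\epivios{b_0} = \min_{\pi \in \Pi}\epivio{b_0}{\pi}$ with $\epivio{b_0}{\pi} = \max_{n \in \bbrack{0,N}} J_{c_n, \U}(\pi) - b_n$ in \cref{eq:Delta-def}.

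\textbf{Monotonicity.} First I would establish that $\epivio{b_0}{\pi}$ is monotonically non-increasing in $b_0$ for every fixed $\pi$. The only place $b_0$ enters \cref{eq:Delta-def} is through the $n=0$ term $J_{c_0, \U}(\pi) - b_0$; the remaining terms $J_{c_n, \U}(\pi) - b_n$ for $n \in \bbrack{1,N}$ do not depend on $b_0$. Hence for $b_0 \leq b_0'$ we have $J_{c_0, \U}(\pi) - b_0 \geq J_{c_0, \U}(\pi) - b_0'$, and since the maximum of a finite collection is a non-decreasing function of each of its arguments, $\epivio{b_0}{\pi} \geq \epivio{b_0'}{\pi}$. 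Taking the pointwise minimum over $\pi \in \Pi$ preserves this inequality, because the minimum of a non-increasing family of functions is itself non-increasing: $\epivios{b_0} = \min_\pi \epivio{b_0}{\pi} \geq \min_\pi \epivio{b_0'}{\pi} = \epivios{b_0'}$. This gives monotonic decrease.

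\textbf{The value at $b_0 = \optret$.} For the second claim I would show $\epivios{\optret} = 0$ by a two-sided argument. For the direction $\epivios{\optret} \leq 0$: let $\pi^\star$ be an optimal policy for \cref{eq:RCMDP problem}. Then $\pi^\star \in \Pisafe$, so $J_{c_n, \U}(\pi^\star) - b_n \leq 0$ for all $n \in \bbrack{1,N}$, and by optimality $J_{c_0, \U}(\pi^\star) = \optret$, so the $n=0$ term is $J_{c_0, \U}(\pi^\star) - \optret = 0$. Thus $\epivio{\optret}{\pi^\star} = \max_{n}\paren{J_{c_n, \U}(\pi^\star) - b_n} = 0$, which forces $\epivios{\optret} \leq 0$. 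For the reverse direction $\epivios{\optret} \geq 0$, I would argue by contradiction: suppose $\epivios{\optret} < 0$, so there is some $\pi'$ with $\epivio{\optret}{\pi'} < 0$. Then every term in the max is strictly negative, in particular $J_{c_n, \U}(\pi') - b_n < 0$ for $n \in \bbrack{1,N}$ (so $\pi' \in \Pisafe$) and $J_{c_0, \U}(\pi') - \optret < 0$, i.e. $J_{c_0, \U}(\pi') < \optret$. But this exhibits a feasible policy with objective strictly below the optimal value $\optret$, contradicting the definition of $\optret$ as the minimum in \cref{eq:RCMDP problem}. Hence $\epivios{\optret} \geq 0$, and combining the two directions yields $\epivios{\optret} = 0$.

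The reasoning is elementary, and I do not anticipate a genuinely hard step; the main thing to be careful about is the reverse inequality $\epivios{\optret} \geq 0$, where one must correctly translate strict negativity of the max into both constraint feasibility and a strict improvement of the objective, thereby contradicting optimality. It is also worth noting explicitly that the minimum defining $\epivios{b_0}$ is attained (and hence well-defined), which follows from compactness of $\Pi$ together with continuity of each $J_{c_n, \U}$; if desired this could be invoked via the envelope-type regularity guaranteed under \cref{assumption:uncertainty set}, but for the monotonicity and boundary-value claims the infimum formulation alone suffices.
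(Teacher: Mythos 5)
Your proof is correct and follows essentially the same route as the paper's: the paper establishes monotonicity by evaluating $\epivionot{y}$ at the minimizer of $\epivionot{x}$ (equivalent to your pointwise-monotonicity argument), and it proves $\epivios{\optret}=0$ via the same two observations, merely phrasing your direct witness $\epivio{\optret}{\pi^\star}=0$ as a contradiction with the existence of an optimal policy. Your explicit remark about attainment of the minimum over the compact set $\Pi$ is a small point of added care that the paper leaves implicit.
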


\looseness=-1
Thanks to this monotonicity of $\epivios{b_0}$, \textbf{if $\epivios{b_0}$ can be efficiently computed, a line search over $b_0 \in [0, H]$ will readily find $b_0 = \optret$.} 
Increase $b_0$ if $\epivios{b_0} > 0$, and decrease it if $\epivios{b_0} \leq 0$. \cref{fig:algo-concept} summarizes this idea to solve the epigraph form.

\looseness=-1
To implement this idea, let us assume for now that we have a subroutine algorithm $\sA$ that computes $\epivios{b_0}=\min_{\pi \in \Pi}\epivio{b_0}{\pi}$ with sufficient accuracy. 
We will implement $\sA$ in \cref{sec:find b0}.

\begin{assumption}[Subroutine algorithm]\label{assumption:Phi-oracle}
\looseness=-1
We have an algorithm $\algooracle: \R_+ \to \Pi$ that takes a value $b_0 \geq 0$ and returns $\pi \in \Pi$ such that $\epivio{b_0}{\pi} \leq \min_{\pi' \in \Pi}\epivio{b_0}{\pi'} + \epsopt$, where $\epsopt \geq 0$.
\end{assumption}

\looseness=-1
Using this subroutine $\sA$, the outer loop conducts a bisection search over $b_0 \in [0, H]$.
Let $K \in \N$ be the number of iterations.
For each iteration $k$, let $[i^{(k)}, j^{(k)}] \subseteq [0, H]$ be the search space where $i^{(k)}  \leq j^{(k)}$.
Set $i^{(0)}= 0$ and $j^{(0)} = H$, and define $b_0^{(k)} \df (i^{(k)} + j^{(k)})/2$.
Additionally, given $b_0^{(k)}$, we denote the returned policy from $\algooracle$ as $\pi^{(k)} \df \algooracle(b^{(k)}_0)$ and its estimated $\Delta$ value as
\begin{equation}\label{eq:estvio-def}
\estvio^{(k)} \df \max_{n \in \bbrack{0, N}} \evaloracle_n(\pi^{(k)}) - b_n 
\;\text{ where }\; b_0 = b_0^{(k)}\;.
\end{equation}
Based on \cref{fig:algo-concept}, our bisection search increases $b_0^{(k)}$ if $\estvio^{(k)} > 0$; otherwise, it decreases $b_0^{(k)}$:
\begin{equation}\label{eq:binary update}
i^{(k+1)} \df 
\begin{cases}
b_0^{(k)} & \text{ if } \;\estvio^{(k)} > 0 \\
i^{(k)} & \text{ otherwise } 
\end{cases}
\quad\text{ and }\quad
j^{(k+1)} \df 
\begin{cases}
j^{(k)} & \text{ if } \; \estvio^{(k)} > 0 \\
b_0^{(k)} & \text{ otherwise } 
\end{cases}
\end{equation}
\looseness=-1
We summarize the pseudocode of this bisection search in \cref{algo:double-loop}.
The following \cref{theorem:binary-search-performance} ensures that \cref{algo:double-loop} returns a near-optimal policy.
We provide the proof in \cref{subsec:proof-of-binary-search}.
\begin{theorem}\label{theorem:binary-search-performance}
Suppose that \cref{algo:double-loop} is run with algorithms $\evaloracle_n$ and $\algooracle$ that satisfy \cref{assumption:eval-oracle,,assumption:Phi-oracle}.
Then, \cref{algo:double-loop} returns an $\tilde{\varepsilon}$-optimal policy, where $\tilde{\varepsilon} \df 2(\epsopt+\epsest)+2^{-K}H$.
\end{theorem}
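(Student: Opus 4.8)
The plan is to control the outer bisection by tracking, at every tested threshold $b_0^{(k)}$, how the estimated violation $\estvio^{(k)}$ compares to the exact optimal violation $\epivios{b_0^{(k)}}$. The first step is a two-sided sandwich. Since each $\evaloracle_n$ is $\epsest$-accurate (\cref{assumption:eval-oracle}) and taking a maximum over $n$ can shift the value by at most $\epsest$, the estimate $\estvio^{(k)}$ differs from $\epivio{b_0^{(k)}}{\pi^{(k)}}$ by at most $\epsest$; combining this with the $\epsopt$-optimality of the subroutine output, $\epivio{b_0^{(k)}}{\pi^{(k)}} \le \epivios{b_0^{(k)}} + \epsopt$ (\cref{assumption:Phi-oracle}), yields
\begin{equation*}
\epivios{b_0^{(k)}} - \epsest \;\le\; \estvio^{(k)} \;\le\; \epivios{b_0^{(k)}} + \epsopt + \epsest \, .
\end{equation*}
This is the only place the oracle errors enter, and it is routine.

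Next I would prove, by induction on $k$, the bisection invariant $\epivios{i^{(k)}} \ge -(\epsopt+\epsest)$ and $\epivios{j^{(k)}} \le \epsopt+\epsest$. The base case holds because $0 \le J_{c_0,\U}(\pi) \le H$ forces $\epivios{0} \ge 0$ and $\epivios{H} \le 0$. For the inductive step the update rule \cref{eq:binary update} only resets the endpoint that is moved to $b_0^{(k)}$: when $\estvio^{(k)} > 0$ we set $i^{(k+1)} = b_0^{(k)}$, and the upper half of the sandwich gives $\epivios{b_0^{(k)}} \ge \estvio^{(k)} - (\epsopt+\epsest) > -(\epsopt+\epsest)$; when $\estvio^{(k)} \le 0$ we set $j^{(k+1)} = b_0^{(k)}$, and the lower half gives $\epivios{b_0^{(k)}} \le \estvio^{(k)} + \epsest \le \epsopt+\epsest$. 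The untouched endpoint keeps its invariant by hypothesis.

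Finally I would convert this invariant, which pins down the \emph{value} of $\epivios$ at the endpoints, into a statement about the \emph{location} of the interval $[i^{(K)}, j^{(K)}]$ relative to $\optret$. By \cref{lemma: phi-properties} the map $\epivios$ is decreasing with $\epivios{\optret} = 0$, and it is moreover $1$-Lipschitz (shifting $b_0 \mapsto b_0 + t$ shifts the term $J_{c_0,\U}(\pi)-b_0$ inside $\epivionot{b_0}$ by exactly $-t$, so the pointwise maximum, hence its minimum over $\Pi$, moves by at most $t$). Writing $g(\pi) \df \max_{n \in \oneN} J_{c_n,\U}(\pi) - b_n$, evaluating $\epivionot{\optret + t}$ at an optimal policy $\pi^\star$ gives $\max\{-t,\, g(\pi^\star)\}$, which equals $-t$ as long as $t$ is below the feasibility margin $-g(\pi^\star)$ of $\pi^\star$; together with the Lipschitz lower bound this calibrates $\epivios{\optret + t} = -t$ near $\optret$. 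Monotonicity then turns $\epivios{i^{(K)}} \ge -(\epsopt+\epsest)$ into $i^{(K)} \le \optret + (\epsopt+\epsest)$, and since the interval halves each round, $j^{(K)} = i^{(K)} + 2^{-K}H \le \optret + (\epsopt+\epsest) + 2^{-K}H$. Reading off the returned policy $\pi$ (the subroutine output at the final feasibility-certified threshold $b_0 = j^{(K)}$, for which $\estvio \le 0$ and hence $\epivio{j^{(K)}}{\pi} \le \epsest$) bounds its constraint violation by $\epsest$ and its objective by $J_{c_0,\U}(\pi) \le j^{(K)} + \epsest \le \optret + 2(\epsopt+\epsest) + 2^{-K}H = \tilde\varepsilon$, giving $\tilde\varepsilon$-optimality.

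I expect the main obstacle to be precisely this last inversion. The invariant only constrains the values of $\epivios$ at $i^{(K)}$ and $j^{(K)}$, and turning it into a tight location bound around $\optret$ requires $\epivios$ to fall at unit rate immediately above $\optret$ (equivalently, an optimal policy that is strictly feasible with margin at least $\epsopt+\epsest$). Monotonicity alone leaves open the degenerate constraint-binding case in which $\epivios$ is flat at $0$ for all $b_0 \ge \optret$, where a single noisy comparison can let $i^{(K)}$ drift upward without the invariant detecting it; the delicate part of the argument is exactly the quantitative use of the Lipschitz bound together with $\epivios{\optret} = 0$ that rules this out and pins the error at $2(\epsopt+\epsest) + 2^{-K}H$.
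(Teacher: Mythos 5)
Your sandwich inequality and the endpoint-value induction are both correct, and they are essentially the paper's \cref{lemma: good search-space} in different bookkeeping: the paper maintains $[i^{(k)}, j^{(k)}] \cap [\optret_{\epsest},\, \optret + \epsest + \epsopt] \neq \emptyset$ (using the relaxed optimum $\optret_\varepsilon$ of \cref{eq:RTM problem}), while you maintain $\epivios{i^{(k)}} \geq -(\epsopt+\epsest)$ and $\epivios{j^{(k)}} \leq \epsopt+\epsest$. The genuine gap is the final value-to-location inversion, exactly where you predicted it --- and your claimed resolution does not work. The $1$-Lipschitz property of $\epivios$ yields only $\epivios{\optret + t} \geq -t$, which is the harmless direction; the calibration $\epivios{\optret + t} = -t$ requires the reverse inequality, which holds precisely when some optimal policy has feasibility margin at least $t$, and no such margin is assumed in the theorem. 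Concretely, take $N=1$, $\U=\{P\}$, a single state with two self-looping actions, $c_1 \equiv 1/2$, $b_1 = H/2$, $c_0(s,a_1)=0$, $c_0(s,a_2)=1$. Every policy satisfies $J_{c_1,\U}(\pi) = b_1$ exactly, so $\epivios{b_0} = 0$ for all $b_0 \in [0,H]$ while $\optret = 0$: the function is flat, your invariant $\epivios{i^{(K)}} \geq -(\epsopt+\epsest)$ holds for \emph{every} $i^{(K)}$ and localizes nothing. Worse, an admissible subroutine may return $\pi^{(k)}(s,a_2) = \min\{1, (b_0^{(k)}+\epsopt)/H\}$, so that even with exact evaluators $\estvio^{(k)} = \min\{\epsopt,\, H - b_0^{(k)}\} > 0$ at every round; then $i^{(k)}$ ratchets upward, $j^{(K)} = H$, and $\algooracle(j^{(K)})$ may legitimately return $\pi(s,a_2)=1$, which is feasible but has $J_{c_0,\U}(\pi) = H = \optret + H$. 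So the constraint-binding case is not a removable technicality: without a Slater-type margin of order $\epsopt + \epsest$, the value invariant provably cannot be converted into $i^{(K)} \leq \optret + \epsopt + \epsest$.

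For comparison, the paper's own proof takes the same route and passes over the same point: in \cref{eq:phi geq 0}, equality (c) asserts $\epivio{b_0^{(k)}}{\pi^\star} = \optret - b_0^{(k)}$ ``since $\pi^\star$ is feasible,'' but feasibility gives only $\epivio{b_0^{(k)}}{\pi^\star} = \max\{\optret - b_0^{(k)},\, \max_{n \geq 1} J_{c_n,\U}(\pi^\star) - b_n\}$, and the needed direction ``$\leq \optret - b_0^{(k)}$'' is exactly the margin condition above (in the example it would read $0 = -b_0^{(k)}$, which is false). Your diagnosis of where the difficulty sits is therefore sharper than the paper's exposition; what is missing, in both arguments, is either a strict-feasibility assumption on $\pi^\star$ (or exact oracles, $\epsopt = \epsest = 0$) or a different mechanism preventing the upward drift of $i^{(k)}$. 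One smaller slip to fix regardless: the algorithm returns a \emph{fresh} call $\algooracle(j^{(K)})$, not the iterate whose $\estvio \leq 0$ was certified, so your bound $\epivio{j^{(K)}}{\pi} \leq \epsest$ should be $\epivio{j^{(K)}}{\pi_{\mathrm{ret}}} \leq \epsest + \epsopt$ (your $j$-invariant combined with \cref{assumption:Phi-oracle}); the final arithmetic still closes within $\tilde\varepsilon$.
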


\begin{algorithm}[t!]
\caption{\looseness=-1 Bisection Search with $\min_{\pi \in \Pi} \epivio{b_0}{\pi}$ Subroutine\\
(also referred to as \proposal when using \cref{algo:Min-Phi} as the subroutine)}\label{algo:double-loop}
    \begin{algorithmic}[1]
    \STATE \textbf{Input: }Iteration length $K \in \N$, evaluator $\evaloracle_n$ and subroutine $\algooracle$ (see \cref{assumption:eval-oracle,,assumption:Phi-oracle})
    \STATE Initialize the search space: $i^{(0)}\df0 $ and $j^{(0)}\df H$
    \FOR{$k = 0, \cdots, K-1$}
        \STATE $\pi^{(k)} \df \algooracle(b_0^{(k)})\;$ where $\;b_0^{(k)} \df (i^{(k)} + j^{(k)})/2$
        \COMMENT{Compute policy by subroutine}
        \STATE $\estvio^{(k)} \df \max_{n \in \bbrack{0, N}} \evaloracle_n(\pi^{(k)}) - b_n $ where $b_0 = b_0^{(k)}$
        \COMMENT{Robust policy evaluation}
        \STATE Compute $i^{(k+1)}$ and $j^{(k+1)}$ by \cref{eq:binary update} using $\estvio^{(k)}$
        \COMMENT{Update search space}
    \ENDFOR
    \RETURN $\pi_{\mathrm{ret}}$ computed by $\algooracle(j^{(K)})$
    \end{algorithmic}
\end{algorithm}

\subsection{\texorpdfstring{Subroutine Algorithm to Solve $\min_{\pi \in \Pi} \epivio{b_0}{\pi}$}{Subroutine Algorithm}}\label{subsec:solve phi}

\looseness=-1
The remaining task is to implement the subroutine $\sA$ which satisfies \cref{assumption:Phi-oracle}. 
In other words, for a given $b_0$, we need to solve the following auxiliary problem:
\begin{equation}\label{eq:min-Phi}
\text{(\textbf{Epigraph's Auxiliary Problem})}\quad 
\min_{\pi \in \Pi}\epivio{b_0}{\pi} = \min_{\pi \in \Pi}
\max_{n \in \zeroN} \max_{P \in \U} J_{c_n, P}(\pi) - b_n\;.
\end{equation}
The right-hand side of \cref{eq:min-Phi} can be seen as an RMDP with additional robustness over the set of modified cost functions $\cC_{b_0} \df \brace*{c_n - b_n / H \bone}_{n \in \zeroN}$. 
Note that since $\cC_{b_0}$ is not a rectangular set, even if $\U$ is rectangular, the combination $\cC_{b_0} \times \U$ does not retain rectangularity. 
As a result, existing RMDP algorithms designed for rectangular sets, such as DP \citep{iyengar2005robust}, natural policy gradient \citep{li2022first}, and convex optimization \citep{grand2022convex}, are inapplicable.

\looseness=-1
Due to this non-rectangularity issue, we employ the projected policy gradient method, together with the following subgradient of $\partial \epivio{b_0}{\pi}$. The proof is deferred to \cref{subsec: proof of weak-convexity}:
\begin{lemma}\label{lemma:Phi-subgrad}
\looseness=-1
Define $
\subgrad_{b_0}(\pi) \df 
\brace*{\nabla J_{c_n, P}(\pi) \given 
n, P \in  \argmax_{(n, P) \in \zeroN \times \cU} J_{c_n, P}(\pi) - b_n}
$ for any $b_0 \in \R$ and $\pi \in \Pi$.
Let $\conv B$ denote the convex hull of a set $B \subset \R^{\aSA}$.
Under \cref{assumption:uncertainty set}, for any $\pi \in \Pi$ and $b_0 \in \R$, the subgradient of $\epivio{b_0}{\cdot}$ at $\pi$ is given by
$
\partial \epivio{b_0}{\pi}= \conv \subgrad_{b_0}(\pi)\;.
$
\end{lemma}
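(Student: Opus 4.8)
The plan is to view $\epivio{b_0}{\cdot}$ as a two-level maximum of the smooth functions $(n,P)\mapsto J_{c_n,P}(\pi)-b_n$, indexed by $n\in\zeroN$ and $P\in\U$, and to propagate the subdifferential through both maxima by envelope (Danskin-type) arguments. The ingredients I will rely on are: (i) the policy gradient theorem (\cref{lemma:policy gradient}), which shows each $J_{c_n,P}(\cdot)$ is differentiable at $\pi$ with gradient $\nabla J_{c_n,P}(\pi)$; (ii) \cref{assumption:uncertainty set}, which ensures the maxima are over a finite set or over a compact set on which $\nabla J_{c_n,P}(\pi)$ is continuous in $P$; and (iii) the fact that a pointwise maximum of continuously differentiable functions is subdifferentially regular, so the relevant max-rules hold with equality rather than mere inclusion.

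First I would treat the inner maximum over $\U$. Fixing $n$ and writing $g_n(\pi)\df J_{c_n,\U}(\pi)-b_n=\max_{P\in\U}\paren*{J_{c_n,P}(\pi)-b_n}$, I apply the finite envelope result \cref{lemma:finite-danskin} in case $(\mathrm{i})$ of \cref{assumption:uncertainty set} and \cref{lemma:danskin's theorem} in case $(\mathrm{ii})$; the hypotheses of the latter hold because $\U$ is compact and $\nabla J_{c_n,P}(\pi)$ is continuous in $P$. Both yield $\partial g_n(\pi)=\conv\brace*{\nabla J_{c_n,P}(\pi)\given P\in\mathcal{M}_n(\pi)}$, where $\mathcal{M}_n(\pi)\df\argmax_{P\in\U}J_{c_n,P}(\pi)$; subtracting the constant $b_n$ leaves the subdifferential unchanged, and $\mathcal{M}_n(\pi)$ is nonempty and compact by continuity of $J_{c_n,\cdot}(\pi)$ together with compactness of $\U$.

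Next I would take the outer maximum $\epivio{b_0}{\pi}=\max_{n\in\zeroN}g_n(\pi)$ over the finite index set. Each $g_n$ is itself a maximum of $C^1$ functions and hence regular, so the finite-max subdifferential rule gives $\partial\epivio{b_0}{\pi}=\conv\bigcup_{n\in\mathcal{A}(\pi)}\partial g_n(\pi)$, where $\mathcal{A}(\pi)\df\argmax_{n\in\zeroN}g_n(\pi)$. Substituting $\partial g_n(\pi)$ and using $\conv\bigcup_i\conv S_i=\conv\bigcup_i S_i$, I obtain $\partial\epivio{b_0}{\pi}=\conv\brace*{\nabla J_{c_n,P}(\pi)\given n\in\mathcal{A}(\pi),\,P\in\mathcal{M}_n(\pi)}$. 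The closing step is to verify that the condition ``$n\in\mathcal{A}(\pi)$ and $P\in\mathcal{M}_n(\pi)$'' coincides with the joint condition $(n,P)\in\argmax_{(n,P)\in\zeroN\times\U}\paren*{J_{c_n,P}(\pi)-b_n}$ defining $\subgrad_{b_0}(\pi)$: the equality $J_{c_n,P}(\pi)-b_n=\epivio{b_0}{\pi}$ forces simultaneously $g_n(\pi)=\epivio{b_0}{\pi}$ and $P\in\mathcal{M}_n(\pi)$, and the converse is immediate. This identifies the right-hand side with $\conv\subgrad_{b_0}(\pi)$, as claimed. As an alternative one-shot route, since $\zeroN\times\U$ is compact and $(n,P)\mapsto\nabla J_{c_n,P}(\pi)$ is continuous on it, a single joint application of Danskin's theorem to $\max_{(n,P)}\paren*{J_{c_n,P}(\pi)-b_n}$ reaches the same conclusion directly.

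I expect the main obstacle to be justifying equality, rather than inclusion, in the subdifferential calculus for the nested maxima, which rests on the subdifferential regularity of the intermediate functions $g_n$. I would address this by invoking that a pointwise maximum of continuously differentiable functions is regular, so its Fréchet subdifferential (the notion fixed in \cref{def:subgradient}) equals the convex hull of the active gradients supplied by the envelope theorems. A secondary care point is the uniform treatment of both cases of \cref{assumption:uncertainty set} and the nonemptiness and compactness of the argmax sets $\mathcal{M}_n(\pi)$, which is precisely what makes the envelope results applicable.
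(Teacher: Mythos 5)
Your proposal is correct and follows essentially the same route as the paper's proof: a case split on \cref{assumption:uncertainty set}, with \cref{lemma:finite-danskin} handling the finite case and \cref{lemma:danskin's theorem} giving $\partial J_{c_n,\U}(\pi)$ in the compact case, followed by the finite-max rule (\cref{lemma:finite-danskin} again) for the outer maximum over $n$ and a merge of the nested convex hulls into the joint argmax set. The only cosmetic difference is that you justify equality in the max rule via subdifferential regularity of maxima of $C^1$ functions, whereas the paper routes this through weak convexity of $J_{c_n,P}$ and $J_{c_n,\U}$ (\cref{lemma:V-lipschitz,lemma:point-wise maximum}), which is the hypothesis its \cref{lemma:finite-danskin} actually requires.
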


\looseness=-1
We implement the subroutine $\sA(b_0)$ based on this subgradient lemma.
Starting from an arbitrary policy $\pi^{(0)}$, let $\pi^{(1)}, \dots, \pi^{(T)}$ be the updated policies where $T \in \N$ is the iteration length.
Using the evaluators $\evaloracle_n$, $\gradoracle_n$, and a learning rate $\alpha > 0$, for a given $b_0$, our subroutine updates policy as follows:
\begin{equation}\label{eq:policy update}
\pi^{(t+1)} \df \proj_{\Pi}(\pi^{(t)} - \alpha \grad^{(t)})\; 
\text{ where } \;\grad^{(t)} \df \gradoracle_{n^{(t)}}(\pi^{(t)})
\;\text{ and }\;
n^{(t)} \in \argmax_{n \in \bbrack{0, N}} \evaloracle_n(\pi^{(t)}) - b_n\;.
\end{equation}
\looseness=-1
We summarize the pseudocode of this policy update subroutine in \cref{algo:Min-Phi}.

\looseness=-1
\begin{remark}[Comparison to Lagrange]
Recall \cref{eq:Lagrange-grad} that the subgradient of the Lagrangian's auxiliary problem, $\partial \Lag{\lambda}{\pi}$, involves a summation of policy gradients over different environments. 
On the other hand, \cref{eq:policy update} focuses on the policy gradient of a single worst-case environment by taking $\max_{n \in \zeroN}$.
Intuitively, our policy update avoids the sum of conflicting policy gradients, thereby circumventing the \textbf{gradient conflict challenge} discussed in \cref{sec:multi-robust}.
Indeed, when the initial distribution satisfies the following coverage assumption\footnote{Such coverage assumption is necessary to ensure the global convergence of policy gradient methods \citep{mei2020global}.
Additionally, note that the Lagrange performs poorly even under \cref{assumption:init-dist} (see \cref{theorem:non-gradient-dominance}).
}, \textbf{there is no local minimum in $\epivionot{b_0}(\pi)$}:
\end{remark}
\begin{assumption}[Initial distribution coverage]\label{assumption:init-dist}
The initial distribution $\mu \in \probsplx(\S)$ satisfies $\mu > \bzero$.    
\end{assumption}
\begin{theorem}[Optimality of stationary points]\label{theorem: Phi gradient dominance}
\looseness=-1
Under \cref{assumption:uncertainty set,assumption:init-dist}, for any $(\pi, b_0) \in \Pi \times \R$,
$$
\epivio{b_0}{\pi} - \min_{\pi' \in \Pi}\epivio{b_0}{\pi'} 
\leq 
D H \max_{\pi' \in \Pi}\left\langle \pi - \pi', \grad \right\rangle \quad \forall \grad \in \partial \epivio{b_0}{\pi}\;,
$$
where $D \df \max_{n, P \in \zeroN\times \cU}\norm*{\occ{\pi^\star_{n, P}}_{P} / \mu}_\infty$ with $\pi^\star_{n, P} \in \argmin_{\pi' \in \Pi}J_{c_n,P}(\pi')$.
\end{theorem}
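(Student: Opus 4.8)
The plan is to prove a Polyak–{\L}ojasiewicz-type gradient-domination inequality for the nonsmooth, nonconvex objective $\pi \mapsto \epivio{b_0}{\pi}$ by reducing the suboptimality gap at any point to that of a \emph{single} robust value $J_{c_n, P}$, for which the MDP structure supplies a domination bound. Throughout, fix $\pi \in \Pi$ and $b_0$, let $\pi^\dagger \in \argmin_{\pi' \in \Pi}\epivio{b_0}{\pi'}$ so that $\epivios{b_0} = \epivio{b_0}{\pi^\dagger}$, and recall from \cref{lemma:Phi-subgrad} that every $\grad \in \partial \epivio{b_0}{\pi}$ is a convex combination $\grad = \sum_i w_i \nabla J_{c_{n_i}, P_i}(\pi)$ of gradients at \emph{active} pairs, i.e.\ $(n_i, P_i) \in \argmax_{(n,P) \in \zeroN \times \cU} J_{c_n, P}(\pi) - b_n$. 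My first step is a reduction: for any single active $(n,P)$, since $\epivios{b_0} \geq J_{c_n, P}(\pi^\dagger) - b_n \geq \min_{\pi'} J_{c_n, P}(\pi') - b_n$ while $\epivio{b_0}{\pi} = J_{c_n, P}(\pi) - b_n$, I obtain $\epivio{b_0}{\pi} - \epivios{b_0} \leq J_{c_n, P}(\pi) - \min_{\pi'} J_{c_n, P}(\pi')$. Thus the $\Phi$-gap is controlled by the single-environment suboptimality of \emph{every} active component.

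Second, I would establish the single-environment gradient domination $J_{c_n, P}(\pi) - \min_{\pi'} J_{c_n, P}(\pi') \leq D H \max_{\pi' \in \Pi}\langle \pi - \pi', \nabla J_{c_n, P}(\pi)\rangle$, the classical direct-parameterization bound of \citet{agarwal2021theory}. Using the performance-difference lemma with the minimizer $\pi^\star_{n,P}$ gives $J_{c_n, P}(\pi) - J_{c_n, P}(\pi^\star_{n,P}) = H \sum_s \occ{\pi^\star_{n,P}}_{P}(s) \sum_a (\pi - \pi^\star_{n,P})(s,a)\, \qf{\pi}_{c_n,P}(s,a)$; bounding the inner sum state-wise by the nonnegative quantity $\vf{\pi}_{c_n,P}(s) - \min_a \qf{\pi}_{c_n,P}(s,a)$ and changing the measure from $\occ{\pi^\star_{n,P}}_P$ to $\occ{\pi}_P$ through $\occ{\pi}_P \geq \mu/H > \bzero$ --- which is exactly where \cref{assumption:init-dist} is needed --- produces the mismatch factor $\norm{\occ{\pi^\star_{n,P}}_P / \mu}_\infty \leq D$. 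Identifying the remaining state-weighted sum with $\max_{\pi'}\langle \pi - \pi', \nabla J_{c_n, P}(\pi)\rangle$ via \cref{lemma:policy gradient}, the $H$-factors collapse to the advertised $DH$. Combined with the reduction, this already proves the theorem whenever $\grad$ is a single active gradient, i.e.\ a vertex of $\subgrad_{b_0}(\pi)$ --- which, notably, is the only case the subroutine in \cref{eq:policy update} ever encounters.

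Third, to upgrade the bound to \emph{every} $\grad \in \conv \subgrad_{b_0}(\pi)$, I would rewrite the claim as $\min_{\grad \in \partial \epivio{b_0}{\pi}} \max_{\pi' \in \Pi}\langle \pi - \pi', \grad\rangle \geq (DH)^{-1}\paren*{\epivio{b_0}{\pi} - \epivios{b_0}}$ and apply Sion's minimax theorem \citep{sion1958general} --- both $\Pi$ and $\conv \subgrad_{b_0}(\pi)$ are convex and compact and the pairing is bilinear --- to swap the two operators. This reduces the goal to producing a \emph{single} direction $\pi'$ for which $\langle \pi - \pi', \nabla J_{c_n, P}(\pi)\rangle \geq (DH)^{-1}\paren*{\epivio{b_0}{\pi} - \epivios{b_0}}$ holds \emph{simultaneously} for all active $(n,P)$.

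The main obstacle is precisely this common-direction step. The natural candidate $\pi' = \pi^\dagger$ improves every active component globally, because $J_{c_n, P}(\pi^\dagger) - b_n \leq \epivios{b_0} < \epivio{b_0}{\pi} = J_{c_n, P}(\pi) - b_n$ forces $J_{c_n,P}(\pi^\dagger) < J_{c_n,P}(\pi)$; yet the per-state terms entering $\langle \pi - \pi^\dagger, \nabla J_{c_n, P}(\pi)\rangle$ are sign-indefinite, so the measure-change argument of the second step cannot be run with this fixed reference --- the state-wise nonnegativity that made it work relied on taking the \emph{per-component} maximizer over $\pi'$, and these greedy directions differ across components. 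I therefore expect the decisive effort to be constructing the simultaneous ascent direction (equivalently, lower-bounding the minimax value), reconciling the global improvement furnished by $\pi^\dagger$ with the per-component greedy directions while preserving the nonnegativity of the gradient-mapping terms; everything else is routine bookkeeping of the $H$-factors and the mismatch constant $D$.
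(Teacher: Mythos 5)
Your first two steps reproduce the paper's own argument. The reduction $\epivio{b_0}{\pi} - \min_{\pi'}\epivio{b_0}{\pi'} \leq J_{c_n, P}(\pi) - \min_{\pi'} J_{c_n, P}(\pi')$ for every active pair $(n,P)$ is exactly the chain in \cref{eq:Phi-dominance-tmp1}, and your single-environment domination bound is \cref{lemma:V-gradient dominance} (which the paper simply invokes rather than re-derives). Together these give, just as you say, $\epivio{b_0}{\pi} - \min_{\pi'}\epivio{b_0}{\pi'} \leq DH \min_{\grad \in \subgrad_{b_0}(\pi)} \max_{\pi' \in \Pi}\langle \pi - \pi', \grad\rangle$, i.e., the theorem restricted to vertex subgradients. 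Your use of Sion's minimax theorem to attack the convex hull is also the paper's move, and you correctly observe that after the swap (plus linearity of the inner problem) the claim is equivalent to producing a single $\pi'$ that is a uniformly good ascent direction for all active $(n,P)$.

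The genuine gap is your third step, and you concede it yourself: you identify the common-direction construction as ``the decisive effort,'' note that the natural candidate $\pi^\dagger$ fails because the per-state terms are sign-indefinite, and stop there. This is not routine bookkeeping --- it is the entire content of the theorem. For a generic bilinear game the passage from vertices to the hull is simply false: if two active gradients at some state average to a constant vector across actions (precisely the conflict phenomenon of \cref{eq:Lagrange-grad}), then $\min_{\grad \in \conv \subgrad_{b_0}(\pi)}\max_{\pi'}\langle \pi - \pi', \grad\rangle$ can vanish while every vertex has a strictly positive gradient mapping, so some structural argument is unavoidable. The paper never constructs your common direction; it closes the step on the dual side. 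Letting $z^\star$ be a maximin point, Sion (\cref{lemma:sion-minimax}) identifies the minimax value with $\min_{\grad \in \conv \subgrad_{b_0}(\pi)}\langle \pi - z^\star, \grad\rangle$, \cref{lemma:lp on convex hull} replaces the hull by $\subgrad_{b_0}(\pi)$ in this now-linear minimization, and the equality chain in \cref{eq:duality-solution-equality} forces a minimizer of the hull problem to lie in $\subgrad_{b_0}(\pi)$ itself, yielding \cref{eq:Phi-grad-domiance-goal}: $\min_{\grad \in \subgrad_{b_0}(\pi)}\max_{\pi'}\langle \pi - \pi', \grad\rangle = \min_{\grad \in \partial \epivio{b_0}{\pi}}\max_{\pi'}\langle \pi - \pi', \grad\rangle$, after which your (and the paper's) vertex bound finishes the proof. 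Finally, your aside that only vertex subgradients arise in the algorithm of \cref{eq:policy update} does not rescue the proposal: the convergence analysis applies the theorem at the Moreau proximal point $\barpi$ (\cref{theorem: Phi gradient dominance-Moreau}, via \cref{lemma:moreau-sugrad-bound}) to a subgradient known only to lie in the convex hull, so the full-hull statement is genuinely needed downstream.
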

\looseness=-1
The detailed proof can be found in \cref{subsec:proof of grad-dominance}. 
Our proof is similar to \textbf{Theorem 3.2} in \citet{wang2023policy}, but it is more rigorous and corrects a crucial error that can invalidate their result\footnote{For example, their proof around \textbf{Equation (32)} incorrectly bounds a positive value by a negative value.}. 
Moreover, while their proof is limited to cases where $\argmax_{P \in \U} J_{c_0, P}(\pi)$ is finite, ours is not.
We leverage Sion's minimax theorem \citep{sion1958general} for this refinement.

\begin{algorithm}[t!]
    \caption{Projected Policy Gradient Subroutine}\label{algo:Min-Phi}
    \begin{algorithmic}[1]
    \STATE \textbf{Input: } Threshold parameter $b_0 \geq 0$, learning rate $\alpha > 0$, iteration length $T \in \N$, policy evaluator $\evaloracle_n$ and subgradient evaluator $\gradoracle_n$ (see \cref{assumption:grad-approx,assumption:eval-oracle})
    \STATE Set an arbitrary initial policy $\pi^{(0)} \in \Pi$
    \FOR{$t = 0, \cdots, T-1$}
        \STATE $n^{(t)} \in \argmax_{n \in \bbrack{0, N}} \evaloracle_n(\pi^{(t)}) - b_n$.
        \COMMENT{Select the most violated constraint}
        \STATE 
        $\pi^{(t+1)} \df \proj_{\Pi}(\pi^{(t)} - \alpha \grad^{(t)})\;$ where  $\;\grad^{(t)} \df \gradoracle_{n^{(t)}}(\pi^{(t)})$
        \COMMENT{Update policy}
    \ENDFOR
    \RETURN $\pi^{(t^\star)}$ where $t^\star \in \argmin_{t \in \bbrack{0,T-1}} \estvio^{(t)}$ and $\estvio^{(t)} \df
    \max_{n \in \bbrack{0, N}} \evaloracle_n(\pi^{(t)}) - b_n$
    \end{algorithmic}
\end{algorithm}



\looseness=-1
Thanks to the optimality of stationary points of $\epivio{b_0}{\pi}$ (\cref{theorem: Phi gradient dominance}), \cref{algo:Min-Phi} is guaranteed to solve $\min_{\pi \in \Pi} \epivio{b_0}{\pi}$ and satisfies the requirement of $\sA(b_0)$ in \cref{assumption:Phi-oracle} as follows:
\begin{theorem}\label{theorem:policy grad convergence}
Suppose \cref{assumption:uncertainty set,,assumption:init-dist,,assumption:eval-oracle,,assumption:grad-approx} hold.
Then, there exist problem-dependent constants $C_\partial, C_J, C_\alpha, C_T > 0$ that do not depend on $\varepsilon$ such that, when \cref{algo:Min-Phi} is run with $\alpha= C_\alpha \varepsilon^2$ and $T = C_T \varepsilon^{-4}$, if the evaluators are sufficiently accurate such that $\epsgrd = C_\partial \varepsilon^2$ and $\epsest = C_J \varepsilon^2$, \cref{algo:Min-Phi} returns a policy $\pi^{(t^\star)}$ satisfying
$
\epivio{b_0}{\pi^{(t^\star)}} - \min_{\pi \in \Pi} \epivio{b_0}{\pi} \leq \varepsilon\;.
$
\end{theorem}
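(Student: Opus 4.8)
The plan is to treat $f \df \epivio{b_0}{\cdot}$ as a weakly convex function on the compact set $\Pi$, run the classical projected-subgradient argument \emph{through the Moreau envelope}, and then convert the resulting near-stationarity into a global optimality gap via the gradient dominance of \cref{theorem: Phi gradient dominance}. First I would record three structural facts. (i) By \cref{lemma:policy gradient} and boundedness of $\occ{\pi}_P$ and the action-values, every element of $\partial f(\pi)$ (whose form is given by \cref{lemma:Phi-subgrad}) has $\ell_2$-norm at most a constant $G = O(H^2\sqrt{\aSA})$, so $f$ is $G$-Lipschitz. (ii) Each $J_{c_n,\cU}$ is smooth uniformly over the compact $\cU$ (under \cref{assumption:uncertainty set}), hence $\rho$-weakly convex, and since $f$ is a finite maximum of the $J_{c_n,\cU}-b_n$, it is $\rho$-weakly convex as well. (iii) $\Pi$ has finite diameter $R_\Pi$ and $f$ has bounded range $\Delta_0 \df f(\pi^{(0)}) - \epivios{b_0} = O(H)$. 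Fix $\bar\rho \df 2\rho$ and define the Moreau point $\widehat\pi^{(t)} \df \argmin_{\pi\in\Pi}\{f(\pi) + \tfrac{\bar\rho}{2}\|\pi-\pi^{(t)}\|_2^2\}$, which is unique since the objective is $(\bar\rho-\rho)$-strongly convex.

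The key preliminary step is to control the two sources of inexactness in the update direction $\grad^{(t)} = \gradoracle_{n^{(t)}}(\pi^{(t)})$. By \cref{assumption:eval-oracle} the selected index obeys $J_{c_{n^{(t)}},\cU}(\pi^{(t)}) - b_{n^{(t)}} \ge f(\pi^{(t)}) - 2\epsest$, i.e.\ $n^{(t)}$ attains the outer maximum up to $2\epsest$, and by \cref{assumption:grad-approx} $\grad^{(t)}$ lies within $\epsgrd$ of a true subgradient of $J_{c_{n^{(t)}},\cU}$ at $\pi^{(t)}$. Combining the weak convexity of $J_{c_{n^{(t)}},\cU}$ with these two estimates (and Cauchy--Schwarz against $R_\Pi$) I would show that $\grad^{(t)}$ is a weakly convex $\delta$-subgradient of $f$: for all $\pi\in\Pi$, $f(\pi) \ge f(\pi^{(t)}) + \langle \grad^{(t)}, \pi-\pi^{(t)}\rangle - \tfrac{\rho}{2}\|\pi-\pi^{(t)}\|_2^2 - \delta$ with $\delta \df 2\epsest + \epsgrd R_\Pi$. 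This is exactly the property that lets the inexact direction play the role of a subgradient in the descent estimate.

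Next I would run the Moreau-envelope (Davis--Drusvyatskiy-style) recursion. Writing $M(\pi^{(t)})$ for the envelope value, using $\widehat\pi^{(t)}$ as a feasible test point and nonexpansiveness of $\proj_{\Pi}$ gives $M(\pi^{(t+1)}) \le M(\pi^{(t)}) + \bar\rho\alpha\langle \grad^{(t)}, \widehat\pi^{(t)}-\pi^{(t)}\rangle + \tfrac{\bar\rho\alpha^2 G^2}{2}$. Bounding the inner product via the $\delta$-subgradient inequality (at $\pi=\widehat\pi^{(t)}$) together with the prox inequality $f(\widehat\pi^{(t)}) + \tfrac{\bar\rho}{2}\|\widehat\pi^{(t)}-\pi^{(t)}\|_2^2 \le f(\pi^{(t)})$ yields $\langle \grad^{(t)}, \widehat\pi^{(t)}-\pi^{(t)}\rangle \le -\tfrac{\bar\rho-\rho}{2}\|\widehat\pi^{(t)}-\pi^{(t)}\|_2^2 + \delta$, hence the one-step decrease $M(\pi^{(t+1)}) \le M(\pi^{(t)}) - \tfrac{\bar\rho\alpha(\bar\rho-\rho)}{2}\|\widehat\pi^{(t)}-\pi^{(t)}\|_2^2 + \bar\rho\alpha\delta + \tfrac{\bar\rho\alpha^2 G^2}{2}$. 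Telescoping and using $M(\pi^{(0)}) - M(\pi^{(T)}) \le \Delta_0$ (since $M(\pi^{(0)})\le f(\pi^{(0)})$ and $M(\pi^{(T)})\ge \epivios{b_0}$) gives $\tfrac1T\sum_t\|\widehat\pi^{(t)}-\pi^{(t)}\|_2^2 \le \tfrac{2\Delta_0}{\bar\rho\rho\alpha T} + \tfrac{2\delta}{\rho} + \tfrac{\alpha G^2}{\rho}$.

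Finally I would turn the smallest $\|\widehat\pi^{(t)}-\pi^{(t)}\|_2$ into a function-value gap. The optimality condition for the prox subproblem supplies a genuine subgradient $\widehat{\grad}^{(t)}\in\partial f(\widehat\pi^{(t)})$ with $\max_{\pi\in\Pi}\langle \widehat\pi^{(t)}-\pi, \widehat{\grad}^{(t)}\rangle \le \bar\rho R_\Pi\|\widehat\pi^{(t)}-\pi^{(t)}\|_2$; plugging this into \cref{theorem: Phi gradient dominance} \emph{at $\widehat\pi^{(t)}$} (valid under \cref{assumption:uncertainty set,assumption:init-dist}) and adding the Lipschitz gap $f(\pi^{(t)}) - f(\widehat\pi^{(t)}) \le G\|\widehat\pi^{(t)}-\pi^{(t)}\|_2$ gives $f(\pi^{(t)}) - \epivios{b_0} \le (DH\bar\rho R_\Pi + G)\|\widehat\pi^{(t)}-\pi^{(t)}\|_2$ for every $t$. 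Taking the $t$ minimizing $\|\widehat\pi^{(t)}-\pi^{(t)}\|_2$ and substituting the bound above yields $\min_t[f(\pi^{(t)}) - \epivios{b_0}] = O\!\big(\sqrt{(\alpha T)^{-1} + \delta + \alpha}\big)$; since the returned $\pi^{(t^\star)}$ minimizes $\estvio^{(t)}$ and $|\estvio^{(t)} - f(\pi^{(t)})|\le\epsest$, this gap is inherited up to an additional $2\epsest$. Choosing $\alpha = C_\alpha\varepsilon^2$, $T = C_T\varepsilon^{-4}$, $\epsest = C_J\varepsilon^2$, and $\epsgrd = C_\partial\varepsilon^2$ makes each term under the square root $O(\varepsilon^2)$, so the whole gap is $O(\varepsilon)$, and the constants $C_\alpha,C_T,C_J,C_\partial$ can be fixed to absorb the problem-dependent factors $\rho,G,D,H,R_\Pi,\Delta_0$ and deliver the claimed $\varepsilon$-bound. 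I expect the main obstacle to be the inexact-index handling of the second paragraph---certifying that the approximate direction is still a valid weakly convex $\delta$-subgradient of the \emph{max} $f$---together with the conceptual necessity of routing the argument through the Moreau envelope: weak convexity used naively against a fixed optimum provably leaves an irreducible $\rho R_\Pi^2$ term, so the gradient dominance must be invoked at the prox point rather than at the iterate.
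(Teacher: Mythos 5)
Your proposal is correct and takes essentially the same route as the paper's proof: an inexact projected-subgradient analysis run through the Moreau envelope with parameter $1/(2\smtcnt)$ (your $\bar\rho = 2\rho$), with the approximate index selection and gradient error absorbed as a $(2\epsest + \epsgrd R_\Pi)$-perturbed weakly convex subgradient inequality (the paper's \cref{eq:Delta-t-gap} and \cref{eq:g-dash-def} play the same role), and—crucially, as you anticipated—\cref{theorem: Phi gradient dominance} invoked at the prox point via the prox optimality condition (the paper packages this as \cref{lemma:moreau-sugrad-bound} and \cref{theorem: Phi gradient dominance-Moreau}), followed by the identical best-iterate-by-estimated-value step costing an extra $2\epsest$. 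One cosmetic slip worth fixing: $J_{c_n,\cU}$ itself is \emph{not} smooth (it is a pointwise maximum over $P$), only each $J_{c_n,P}$ is, uniformly in $P$; your intended conclusion—weak convexity of the max—is exactly what the paper derives from \cref{lemma:V-lipschitz} together with \cref{lemma:point-wise maximum}, so nothing in your argument breaks.
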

We provide the proof and the concrete values of $C_\partial, C_J, C_\alpha$ and $C_T$ in \cref{sec:policy-grad-convergence-proof}. 
The proof is primarily based on the weakly convex function analysis \citep{beck2017first,wang2022robust}.

\begin{remark}[Best-iterate convergence]
\looseness=-1
Lagrangian-based CMDP algorithms typically rely on averaging past policies (e.g., \citet{li2021faster,liu2021learning}), which can be impractical in memory-intensive settings, such as deep RL applications.
In contrast, \cref{theorem:policy grad convergence} avoids policy averaging and guarantees that the best policy among $\brace*{\pi^{(0)}, \dots, \pi^{(T-1)}}$ converges to the solution.
\end{remark}


\subsection{Combining Bisection Search with the Policy Gradient Subroutine}
\looseness=-1
Finally, we combine \cref{algo:double-loop} with \cref{algo:Min-Phi} subroutine and refer to the combination as \proposal.
According to \cref{theorem:policy grad convergence,,theorem:binary-search-performance}, \proposal is ensured to find an $\varepsilon$-optimal policy:
\begin{corollary}\label{corollary:DLPG-epigraph}
\looseness=-1
Consider the same settings and notations as in \cref{theorem:policy grad convergence}.
Set \cref{algo:Min-Phi} as the subroutine $\algooracle$ with parameters $(\evaloracle_n, \gradoracle_n, \alpha, T)$, where we set $\alpha = C_\alpha \varepsilon^2 / 4$, $T=16 C_T \varepsilon^{-4}$.
Then, given inputs $\evaloracle_n$ and $\algooracle$, \cref{algo:double-loop} returns an $\varepsilon$-optimal policy after $K=\lfloor \log \paren*{2 H\varepsilon^{-1}}\rfloor$ iterations.
\end{corollary}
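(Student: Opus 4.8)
The plan is to prove \cref{corollary:DLPG-epigraph} purely by composing the two convergence guarantees already in hand: \cref{theorem:policy grad convergence} for the inner loop and \cref{theorem:binary-search-performance} for the outer loop. The only real work is bookkeeping, namely verifying that the prescribed parameters make \cref{algo:Min-Phi} a valid subroutine in the sense of \cref{assumption:Phi-oracle} and then tracking how the resulting error constants combine. First I would note that the constants $C_\alpha, C_T, C_\partial, C_J$ in \cref{theorem:policy grad convergence} are independent of the target accuracy, so that theorem may be reinvoked with any target in place of $\varepsilon$. I would apply it with target $\varepsilon/2$: then $C_\alpha(\varepsilon/2)^2 = C_\alpha \varepsilon^2/4$ and $C_T(\varepsilon/2)^{-4} = 16 C_T \varepsilon^{-4}$ match exactly the $\alpha$ and $T$ prescribed in the corollary, while the induced oracle requirements become $\epsgrd = C_\partial(\varepsilon/2)^2$ and $\epsest = C_J(\varepsilon/2)^2$, both of order $\varepsilon^2$. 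The conclusion of \cref{theorem:policy grad convergence} then certifies that \cref{algo:Min-Phi}, used as $\algooracle$, returns a policy with $\epivio{b_0}{\pi^{(t^\star)}} - \min_{\pi' \in \Pi}\epivio{b_0}{\pi'} \le \varepsilon/2$; equivalently, it satisfies \cref{assumption:Phi-oracle} with $\epsopt = \varepsilon/2$.

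With the subroutine accuracy established, the second step is to feed $\epsopt$ and $\epsest$ into the outer-loop guarantee. \cref{theorem:binary-search-performance} states that \cref{algo:double-loop}, driven by evaluators $\evaloracle_n$ satisfying \cref{assumption:eval-oracle} and a subroutine satisfying \cref{assumption:Phi-oracle}, returns a $\tilde{\varepsilon}$-optimal policy with $\tilde{\varepsilon} = 2(\epsopt + \epsest) + 2^{-K}H$. The remaining degree of freedom is the bisection discretization term $2^{-K}H$, which is controlled entirely through $K$. Taking $K = \lfloor \log(2H\varepsilon^{-1})\rfloor$ and using the floor bound $K \ge \log(2H\varepsilon^{-1}) - 1$ gives $2^{K} \ge H\varepsilon^{-1}$, hence $2^{-K}H \le \varepsilon$; the share of the budget this term may claim is then pinned down by the choice of $K$.

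The combination step is arithmetic: substituting $\epsopt = \varepsilon/2$, the higher-order evaluation term $\epsest = C_J(\varepsilon/2)^2 = O(\varepsilon^2)$, and the bound on $2^{-K}H$ into $\tilde{\varepsilon} = 2(\epsopt + \epsest) + 2^{-K}H$, the dominant contributions come from the subroutine and discretization errors while $2\epsest$ is negligible for small $\varepsilon$ (and in any case absorbable by a constant adjustment of the budget). I anticipate that the main obstacle is precisely this reconciliation of the three independent error sources so that they sum to $\varepsilon$ rather than a constant multiple of it: the inner-loop optimization error $\epsopt$, the robust-evaluation error $\epsest$ entering both the sign decisions of the bisection and the final evaluation, and the discretization error $2^{-K}H$. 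The delicate points are (i) rescaling the subroutine target (here to $\varepsilon/2$) so that the stated $\alpha, T$ arise while keeping $\epsgrd, \epsest = O(\varepsilon^2)$ negligible, and (ii) handling the floor in $K$ so that $2^{-K}H$ is bounded by its intended fraction of $\varepsilon$. Beyond this careful constant tracking, the proof requires no new machinery and follows by direct appeal to \cref{theorem:policy grad convergence,theorem:binary-search-performance}.
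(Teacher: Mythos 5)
Your proposal takes exactly the route the paper intends: the paper gives no separate proof of \cref{corollary:DLPG-epigraph} beyond citing \cref{theorem:policy grad convergence,theorem:binary-search-performance}, and your key observation --- that $\alpha = C_\alpha\varepsilon^2/4$ and $T = 16C_T\varepsilon^{-4}$ are precisely \cref{theorem:policy grad convergence} re-invoked at target accuracy $\varepsilon/2$, so that \cref{algo:Min-Phi} satisfies \cref{assumption:Phi-oracle} with $\epsopt = \varepsilon/2$, after which \cref{theorem:binary-search-performance} and the floor bound $2^{-K}H \le \varepsilon$ finish the job --- is the intended composition. Two bookkeeping caveats, both of which you flag but leave unresolved (and which the paper itself glosses over): (i) the corollary's ``same settings'' fixes $\epsest = C_J\varepsilon^2$ and $\epsgrd = C_\partial\varepsilon^2$, whereas invoking \cref{theorem:policy grad convergence} at target $\varepsilon/2$ formally requires the four-times-stricter accuracies $C_J(\varepsilon/2)^2$ and $C_\partial(\varepsilon/2)^2$; this is repaired not by re-invocation but by rerunning the final chain of \cref{theorem:policy grad convergence-restate} with the stated accuracies, which gives an optimization term of $\varepsilon/4$ plus two oracle terms of $\varepsilon/8$ each, hence $\epsopt \le \varepsilon/2 + 2\epsest$; (ii) substituting $\epsopt \approx \varepsilon/2$ and $2^{-K}H \le \varepsilon$ into $\tilde\varepsilon = 2(\epsopt + \epsest) + 2^{-K}H$ yields roughly $2\varepsilon + O(\varepsilon^2)$, not $\varepsilon$, so the stated parameters only deliver $\varepsilon$-optimality after a constant rescaling (e.g.\ running the subroutine at target $\varepsilon/4$ and replacing the floor by a ceiling, or simply absorbing the factor $2$); your appeal to ``a constant adjustment of the budget'' is the honest acknowledgment of this, and the looseness sits in the paper's statement as much as in your write-up.
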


\begin{remark}[Computational complexity]
\proposal outputs an $\varepsilon$-optimal policy by querying $\evaloracle_n$ and $\gradoracle_n$ a total of $\tiO((N+1)K T)$ times.
Thus, the computational complexity of \proposal can be expressed as $\tiO((N+1)KT \times [\text{querying cost}])$.
As a simple example, consider the case where $\U$ is finite, where querying $\evaloracle_n(\pi)$ and $\gradoracle_n(\pi)$ require $\tiO(S^2 A|\U|)$ operations \citet{kumar2024policy}. 
Using the concrete value of $KT$, the computational complexity of \proposal for finite $\U$ becomes $\tiO(D^4 S^5 A^4 H^{14} \abs{\U} (N+1) \varepsilon^{-4})$.
Similar analyses can be applied to other types of uncertainty sets.
\end{remark}


\section{Experiments}\label{sec:experiments}

\begin{figure*}[t!]
    \centering
    \includegraphics[width=\linewidth]{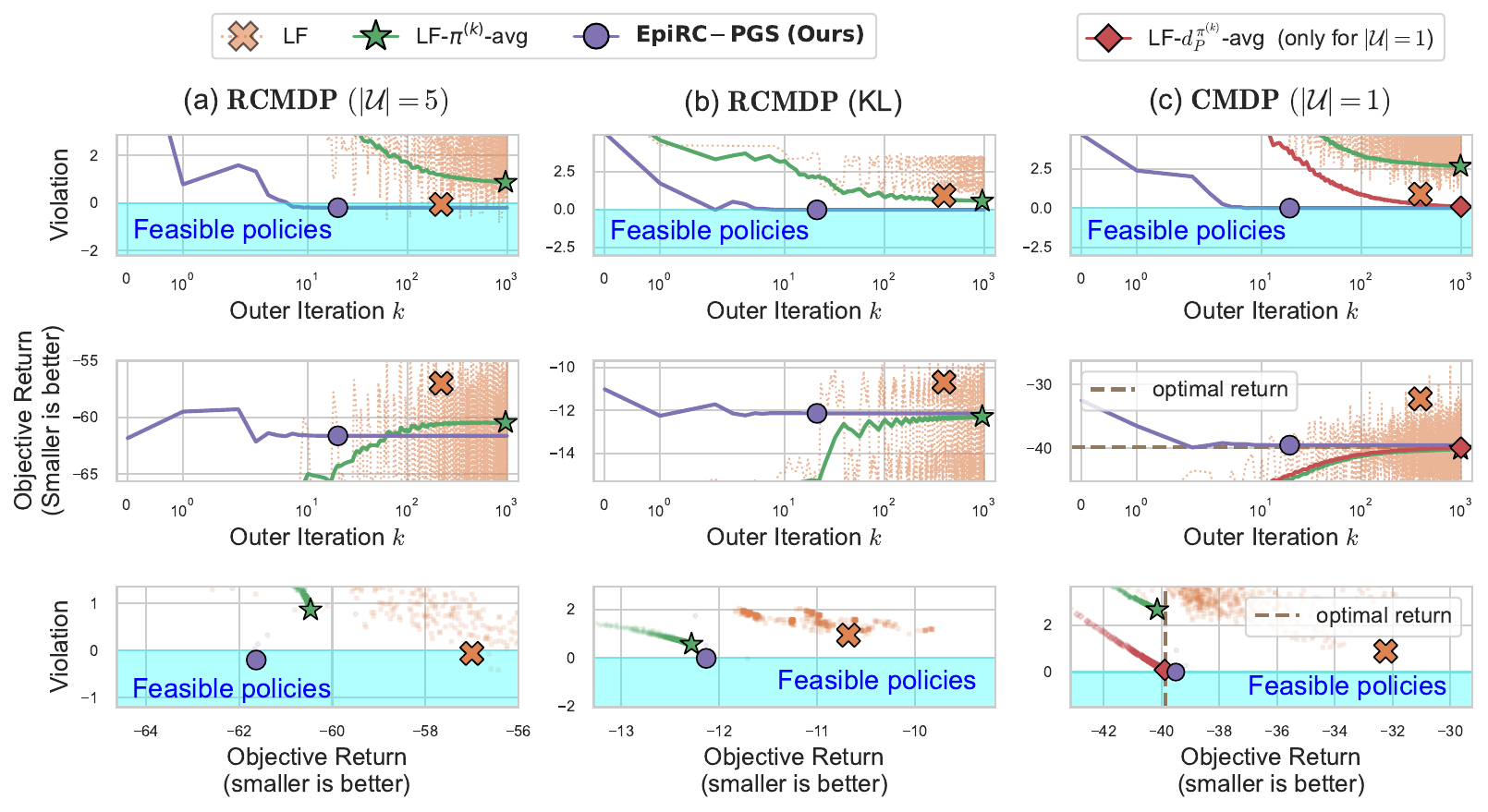}
    \caption{\looseness=-1 
    Comparison of the algorithms in different settings \textbf{(a)}, \textbf{(b)}, and \textbf{(c)}, defined in \cref{sec:experiments}.
    The feasible $\pi^{(k)}$ with the smallest return is marked; if none is feasible, the one with the smallest violation is marked. 
    In all the settings, \proposal quickly identifies a feasible and low-return policy ({\color{Orchid} \CIRCLE}).
    \textbf{Top row}: Constraint violation ($y$-axis: $J_{c_1, \cU}(\pi^{(k)}) - b_1$). Policies in the \highlight{cyan!20}{\text{blue area}} satisfy the constraints.
    \textbf{Middle row}: Objective return relative to the uniform policy ($y$-axis: $J_{c_0, \cU}(\pi^{(k)}) - J_{c_0, \cU}(\pi_{\mathrm{unif}})$). 
    Negative values indicate that the policies achieve non-trivial low cumulative objective cots.
    \textbf{Bottom row}: Constraint violation vs. relative objective return. 
    }
     \label{fig:RCMDP-experiment-double-loop}
\end{figure*}


\looseness=-1
To support the theoretical guarantees of \proposal and demonstrate the limitations of the Lagrangian methods in identifying near-optimal policies, this section empirically evaluates \proposal in three settings with five constraints ($N=5$):
\begin{enumerate} 
\item[\textbf{(a)}] RCMDP with $\U=\brace{P_1, \dots, P_5}$, where each environment $P_\cdot$ is randomly generated.
\item[\textbf{(b)}] RCMDP with a $(s, a)$-rectangular KL uncertainty set \citep{iyengar2005robust}, which considers $\U=\times_{s, a} \brace*{p \in \probsplx(\S) \given \KL{p}{P\paren*{\cdot \given s, a}} \leq C_{\mathrm{KL}}}$\footnote{$\KL{p}{q}=\sum_{s\in \S}p(s) \ln p(s) / q(s)$ represents the KL divergence between two probability distributions $p > \bzero$ and $q > \bzero$ defined over $\S$. $C_{\mathrm{KL}} > 0$ is a positive constant.} where $P$ is a nominal environment.
\item[\textbf{(c)}] CMDP, which is equivalent to RCMDP with $\U=\brace{P}$.
\end{enumerate}

\paragraph{Environment construction.}
\looseness=-1
In $\textbf{(a)}$, $\textbf{(b)}$, and $\textbf{(c)}$, we randomly generate environments following the experimental setup of \citet{dann2017unifying}. 
For each state-action pair $(s, a)$, transition probabilities $P\paren*{\cdot \given s, a}$ are independently sampled from a $\operatorname{Dirichlet}(0.1, \dots, 0.1)$ distribution. 
This produces a concentrated yet non-deterministic transition model, resembling the widely used GARNET benchmark with a \emph{branching factor} of $1$ \citep{archibald1995generation}.
Cost values $c_n(s, a)$ are assigned as $0$ with probability $0.1$ and $1$ otherwise. 
Initial state probabilities $\mu(\cdot)$ are sampled from a $\operatorname{Dirichlet}(0.5, \dots, 0.5)$ distribution. 
Constraint thresholds $b_1, \dots, b_5$ are set to ensure the existence of a feasible policy.
Additional environmental parameters are described in \cref{sec:experiment details}.

\paragraph{Baseline algorithms.}
We compare \proposal with a Lagrangian counterpart, denoted \lagrange, which abstracts most existing Lagrangian-based algorithms for RCMDPs (e.g., \citealp{russel2020robust,wang2022robust}).
\lagrange aims to solve the problem $\max_{\lambda \in \R^N_+} \min_{\pi \in \Pi} \Lag{\lambda}{\pi}$ in \cref{eq:Lagrangian-problem} by performing gradient ascent on $\lambda$ while using a policy gradient subroutine to solve $\min_{\pi \in \Pi}\Lag{\lambda}{\pi}$.

\looseness=-1
We also evaluate the averaged policies generated by \lagrange, defined as $\frac{1}{k}\sum_{j=0}^k \pi^{(k)}$.
Such policy averaging is employed in Lagrangian methods \citep{miryoosefi2019reinforcement,zhang2024distributionally}, though it often lacks theoretical guarantees (see \cref{sec:mixing-policy-notes}).
In the CMDP setting \textbf{(c)}, we further evaluate the averaged occupancy measures, where the $k$-th policy is derived from $\frac{1}{k}\sum_{j=0}^k \occ{\pi^{(k)}}_P$. 
Averaging $\occ{\pi^{(k)}}_P$ is ensured to identify a near-optimal policy \citep{zahavy2021reward}, but is well-defined only when $\abs{\U}=1$. 
We refer to these two averagings as \lagrangepolavg and \lagrangeoccavg, respectively.
Moreover, \textbf{(c)} reports the optimal return value computed via an LP method \citep{altman1999constrained}.
The detailed implementation of algorithms are provided in \cref{sec:experiment details}.

\paragraph{Results.}
\looseness=-1
\cref{fig:RCMDP-experiment-double-loop} illustrates the performance of the algorithms averaged over $10$ random seeds.
Across all settings, \proposal rapidly converges to a feasible policy with a low objective cost return, while both \lagrange and \lagrangepolavg fail to identify feasible policies in certain settings (e.g., \lagrangepolavg in \textbf{(a)} and \lagrange in \textbf{(b)}).
In the CMDP \textbf{(c)}, \proposal and \lagrangeoccavg converge to a near-optimal policy, but \lagrangepolavg does not.
These results empirically validate that \proposal yields a near-optimal policy in RCMDPs, contrasting with the conventional Lagrangian-based algorithm's inability in robust settings.

\section{Conclusion and Limitations}\label{sec:conclusion}
\looseness=-1
In this work, we propose \proposal, the first algorithm guaranteed to find a near-optimal policy in an RCMDP (\cref{corollary:DLPG-epigraph}). 
At its core, \proposal leverages the epigraph form of RCMDPs, which not only ensures the optimal policy $\pi^\star$ can be obtained (\cref{sec:alternative form}) but also enables a policy gradient algorithm to efficiently find it (\cref{theorem: Phi gradient dominance}). 
These features address the optimization challenges encountered in the conventional Lagrangian formulation (\cref{sec:multi-robust}).

\paragraph{Limitations and future work.}
\looseness=-1
A double-loop algorithm like \proposal is often inefficient when the inner problem requires high computational cost \citep{lin2024single}.
Developing a single-loop algorithm is a promising direction for future research, and we discuss the challenges in \cref{sec:single-loop}.

\looseness=-1
Another research avenue is improving the iteration complexity of our $\tiO(\varepsilon^{-4})$.
This may not be tight, since for RMDPs with $(s, a)$-rectangularity, the natural policy gradient method is ensured to find an $\varepsilon$-optimal policy with $\tiO(\varepsilon^{-2})$ iterations \citet{li2022first}. 

\looseness=-1
Finally, the coverage assumption on the initial distribution (\cref{assumption:init-dist}) is not necessary in CMDPs \citep{ding2023last}.
We leave the removal of \cref{assumption:init-dist} in RCMDPs for future work. 

\subsubsection*{Acknowledgments}
This work is supported by JST Moonshot R\&D Program Grant Number JPMJMS2236.
We thank Arnob Ghosh for his advice on improving the expression of assumptions in this paper.

\bibliography{iclr2025_conference}
\bibliographystyle{iclr2025_conference}

\newpage
\appendix

\setlength{\cftbeforesecskip}{8pt}
\tableofcontents
\newpage

\section{Additional Related Work}\label{sec:related work}
\looseness=-1
This section reviews existing approaches for CMDPs (\cref{subsec:CMDP-works}), RMDPs (\cref{subsec:RMDP-works}), and RCMDPs (\cref{subsec:RCMDP-works}). It also highlights their inherent limitations and the challenges they face when applied to RCMDPs.

\subsection{Constrained Markov Decision Processes}\label{subsec:CMDP-works}
\looseness=-1
CMDP is a specific subclass of RCMDP where the uncertainty set consists of a single element, i.e., $\cU = \{P\}$. 
This section describes the primary approaches to the CMDP problem: the linear programming (LP) approach, the Lagrangian approach, and the epigraph approach.

\looseness=-1
\paragraph{Linear programming approach.} 
The LP approach has been extensively studied in the theoretical literature \citep{efroni2020exploration, liu2021learning, bura2022dope, hasanzadezonuzy2021learning, zheng2020constrained}. 
Although it is a fundamental method in CMDP, it is less popular in practice due to its difficulty in scaling to high-dimensional problem settings, such as those encountered in deep RL.
Additionally, incorporating environmental uncertainty into the LP approach for CMDPs is challenging. 
The LP approach utilizes the fact that the return minimization problem of an MDP can be formulated as a convex optimization problem with respect to the occupancy measure \citep{altman1999constrained,nachum2020reinforcement}.
However, RMDPs do not permit a convex formulation in terms of occupancy measures \citep{iyengar2005robust, grand2022convex}. 
While \citet{grand2022convex} recently introduced a convex optimization approach for RMDPs, their formulation is convex for the transformed objective value function, not for the occupancy measure, making it challenging to incorporate constraints as seen in RCMDPs.

\paragraph{Lagrangian approach.}
\looseness=-1
The Lagrangian approach is perhaps the most popular approach to CMDPs both in theory \citep{ding2020natural,wei2021provably,hasanzadezonuzy2021learning,kitamura2024policy} and practice \citep{achiam2017constrained,tessler2018reward,wang2022robust,le2019batch,russel2020robust}.
This popularity stems from its compatibility with policy gradient methods, making it readily extendable to deep RL. 
The Lagrangian approach benefits from the strong duality in CMDPs. 
When $\cU$ consists of a single element, it is well established that strong duality holds, meaning that $\optlag = \optret$ holds, where $\optlag$ is from \cref{eq:Lagrangian-problem} and $\optret$ is from \cref{eq:RCMDP problem} \citep{altman1999constrained, paternain2019constrained, paternain2022safe}.

\looseness=-1
The challenge with the Lagrangian method is the identification of an optimal policy.
Even if \cref{eq:Lagrangian-problem} is solved, there's no guarantee that the solution to the inner minimization problem will represent an optimal policy.
In some CMDPs, where feasible policies in $\Pisafe$ must be stochastic \citep{altman1999constrained}, the inner minimization may yield a deterministic solution that is infeasible.
\citet{zahavy2021reward,miryoosefi2019reinforcement,chen2021primal,li2021faster,liu2021learning} addressed this challenge by averaging policies (or occupancy measures) obtained during the optimization process. 
However, policy averaging can be impractical for large-scale algorithms (e.g., deep RL) because it necessitates storing all past policies, which is often infeasible.
On the other hand, \citet{ying2022dual, ding2023last, muller2024truly, kitamura2024policy} tackled the issue by introducing entropy regularization into the objective return. 
However, the regularization can lead to biased solutions and result in a policy design that may deviate from what is intended by the cost function.

\looseness=-1
In contrast, \proposal requires neither policy averaging nor regularization, thereby offering advantageous properties even in CMDP settings.

\looseness=-1
\paragraph{Epigraph approach.}
Few studies have investigated the epigraph form in the CMDP setting. 
\citet{so2023solving} proposed a deep RL algorithm aimed at system stabilization under constraints, and \citet{sosolving} developed a deep RL algorithm for goal-reaching tasks with risk-avoidance constraints. 
Although these studies empirically demonstrated the effectiveness of the epigraph form in constrained RL problems, they did not establish theoretical guarantees, such as global convergence.
On the other hand, this paper is the first work to provide theoretical guarantees for the epigraph form in CMDPs. 
Furthermore, unlike existing constrained RL studies, we consider not only constraints but also the robustness against the transition kernel.

\subsection{Robust Markov Decision Processes}\label{subsec:RMDP-works}
\looseness=-1
RMDP is a specific subclass of RCMDP where there are no constraints, i.e., $N=0$. RMDP is a crucial research area for the practical success of RL applications, where the environmental mismatch between the training phase and the testing phase is almost unavoidable. 
Without robust policy design, even a small mismatch can lead to poor performance of the trained policy in the testing phase \citep{li2022first,jiang2018pac}.

\paragraph{Dynamic programming approach.}
\looseness=-1
Since the seminal work by \citet{iyengar2005robust}, numerous studies have explored dynamic programming (DP) approaches for RMDPs \citep{nilim2005robust, clavier2023towards,panaganti2022sample,mai2021robust,grand2021scalable,derman2021twice,wang2021online,kumar2022efficient,yang2023robust}. 
The DP approach decomposes the original problem into smaller sub-problems using Bellman's principle of optimality \citep{bellman1957dynamic}. 
To apply this principle, DP approaches enforce rectangularity on the uncertainty set, which assumes independent worst-case transitions at each state or state-action pair.
However, as pointed out by \citet{goyal2023robust}, the rectangularity assumption can result in a very conservative optimal policy.
Moreover, applying DP to constrained settings is challenging since CMDPs typically do not satisfy the principle of optimality \citep{haviv1996constrained}.
Although several studies have attempted to apply DP to CMDPs, they face issues such as excessive memory consumption, due to the use of non-stationary policy classes, or are restricted to deterministic policy classes \citep{chang2023approximate, chen2004dynamic, chen2006non}.

\looseness=-1
\paragraph{Epigraph application to DP approach.}
\citet{chen2019distributionally,wiesemann2013robust,ho2021partial} employed the epigraph form to implement a robust DP algorithm for the $s$-rectangular uncertainty set setting.
Specifically, they showed that the $s$-rectangular robust Bellman operator, which is the $s$-rectangular counterpart of \cref{eq:robust DP}, can be efficiently implemented using the epigraph form. 
However, since their algorithms rely on Bellman's principle of optimality, similar to standard DP, they are likely to encounter the same challenges in CMDP settings as those discussed above.

\paragraph{Policy gradient approach.}
\looseness=-1
Another promising approach for RMDPs is the policy gradient method. 
Similar to the DP approach, most existing policy gradient algorithms also work only under the rectangularity assumption \citep{kumar2024policy,wang2022policy,li2022first}, and thus suffer from the same conservativeness issue.
It is important to note that robust policy evaluation can be NP-hard without any structural assumptions on the uncertainty set \citep{wiesemann2013robust}, but such assumptions are potentially not required for the robust policy optimization step.
Our policy gradient algorithm abstracts the evaluation step by \cref{assumption:eval-oracle} and avoids the need for the rectangularity assumption during the policy optimization phase, similar to the recent work by \citet{wang2023policy}.

\subsection{Robust Constrained Markov Decision Processes}\label{subsec:RCMDP-works}
\looseness=-1
\citet{russel2020robust,mankowitz2020robust} proposed heuristic algorithms for RCMDPs, but their approaches lack theoretical guarantees for convergence to a near-optimal policy. 
\citet{wang2022robust} introduced a Lagrangian approach with convergence guarantee to a stationary point. 
However, they do not ensure the optimality of this stationary point. 
Moreover, their method is heavily dependent on the restrictive $R$-contamination set assumption \citep{du2018robust,wang2021online,wang2022policy}.

\looseness=-1
\citet{sun2024constrained} applied a trust-region method to RCMDPs. The policy is updated to remain sufficiently similar to the previous one, ensuring that performance and constraint adherence do not degrade, even in the face of environmental uncertainty. However, while they ensure that each policy update step maintains performance, convergence to a near-optimal policy is not guaranteed.

\looseness=-1
\citet{ghoshsample} employed a penalty approach which considers the optimization problem of the form $\min_{\pi} f(\pi) + \lambda \max \brace{h(\pi), 0}$, where $f$ and $h$ are defined in \cref{sec:intro}. 
While this approach can yield a near-optimal policy for a sufficiently large value of $\lambda > 0$, the author does not provide a concrete optimization method for the minimization and instead assumes the availability of an oracle to solve it.
As we demonstrate in \cref{sec:multi-robust}, this minimization is intrinsically difficult, making it challenging to implement such an oracle.

\looseness=-1
Finally, \citet{zhang2024distributionally} tackled RCMDPs using the policy-mixing technique \citep{miryoosefi2019reinforcement,le2019batch}. In this technique, a policy is sampled from a finite set of deterministic policies according to a sampling distribution at the start of each episode, and it remains fixed throughout the episode. 
However, even if a good sampling distribution is determined, there is no guarantee that the resulting expected policy will be optimal due to the non-convexity of the return function with respect to policies \citep{agarwal2021theory}. 
We discuss the limitations of the policy-mixing technique in \cref{sec:mixing-policy-notes}.
Additionally, \citet{zhang2024distributionally} assume an $R$-contamination uncertainty set, limiting its applicability similarly to the work of \citet{wang2022robust}.

\looseness=-1
Although the RCMDP problem remains unsolved, the control theory community has long studied the computation of safe controllers under environmental uncertainties. Notable methods include robust model predictive control \citep{bemporad2007robust} and $H_\infty$ optimal control \citep{anderson2019system,zames1981feedback,doyle1982analysis}. These approaches are specifically tailored for a specialized class of MDPs, known as the linear quadratic regulator (LQR, \citet{du2021bilinear}). However, because LQR and tabular MDPs operate within distinct frameworks, these control methods are unsuitable for tabular RCMDPs.
Given that most modern RL algorithms, such as DQN \citep{mnih2015human}, are based on the tabular MDP framework, our results bridge the gap between the RL and control theory communities, laying the foundation for the development of reliable RL applications in the future.

\subsection{Notes on the Policy-Mixing Technique}\label{sec:mixing-policy-notes}

\looseness=-1
This section explains the theoretical limitations of the policy-mixing technique \citep{zhang2024distributionally,miryoosefi2019reinforcement,le2019batch} for identifying a near-optimal policy.

\paragraph{Policy-mixing technique.}
\looseness=-1
Let $\widetilde{\Pi} \df \brace*{\pi_1, \dots, \pi_m}$ be a finite set of policies with $m \in \N$.
Consider a non-robust, single-constraint CMDP $\paren*{\S, \A, \gamma, P, \cC=\brace*{c_0, c_1}, b, \mu}$.
Given a distribution $\rho \in \probsplx\paren*{\widetilde{\Pi}}$, define
$$
\widetilde{J}_{c_0, P}(\rho) \df \sum_{\pi \in \widetilde{\Pi}} \rho(\pi) J_{c_0, P}(\pi) \; \text{ and }\; 
\widetilde{J}_{c_1, P}(\rho) \df \sum_{\pi \in \widetilde{\Pi}} \rho(\pi) J_{c_1, P}(\pi)\;.
$$

\looseness=-1
The policy-mixing technique considers the following optimization problem:
\begin{align}
\widetilde{J}^\star \df
&\min_{\rho \in \probsplx\paren*{\widetilde{\Pi}}}
\widetilde{J}_{c_0, P}(\rho)\; \suchthat\; \widetilde{J}_{c_1, P}(\rho) \leq b_1 \label{eq:policy-mixing}\\
=&\min_{\rho \in \probsplx\paren*{\widetilde{\Pi}}}\max_{\lambda \in \R_+} \sum_{\pi \in \widetilde{\Pi}} \rho(\pi) \paren*{J_{c_0, P}(\pi) + \lambda \paren*{J_{c_1, P}(\pi) - b_1}}
\fd\min_{\rho \in \probsplx\paren*{\widetilde{\Pi}}}\max_{\lambda \in \R_+} \widetilde{L}(\rho, \lambda)\;.\nonumber
\end{align}
Let $\rho^\star$ be the solution of \cref{eq:policy-mixing} such that
$\rho^\star \in \argmin_{\rho \in \probsplx\paren*{\widetilde{\Pi}}}\max_{\lambda \in \R_+} \widetilde{L}(\rho, \lambda)$.

\looseness=-1
In this setting, a policy is sampled from $\rho$ at the start of each episode and remains fixed throughout the episode. 
The term $\widetilde{J}_{c_0, P}(\rho)$ represents the expected return under the distribution $\rho$.
Since $\widetilde{L}(\rho, \lambda)$ is convex in $\rho$ and concave in $\lambda$, under some mild assumptions, \cref{eq:policy-mixing} can be solved efficiently by the following standard optimization procedure for min-max problems: 
At each iteration $t = 1, \dots, T$, with initial values $\lambda^{(0)} \in \R_+$ and $\rho^{(0)} \in \probsplx\paren*{\widetilde{\Pi}}$,
\begin{enumerate}
\item Update $\lambda^{(t)}$ using a no-regret algorithm. For example, with gradient ascent and a learning rate $\alpha > 0$: $$\lambda^{(t)} \df \max\brace*{\lambda^{(t-1)} + \alpha \paren*{\widetilde{J}_{c_1, P}(\rho^{(t-1)}) - b_1}, 0}\;.$$
\item Update $\rho^{(t)}$ as $\rho^{(t)}(\pi) = \I\brace*{\pi = \pi^{(t)}}$ where
$$\pi^{(t)} \in \argmin_{\pi \in \widetilde{\Pi}}J_{c_0, P}(\pi) + \lambda^{(t)} \paren*{J_{c_1, P}(\pi) - b_1}\;.$$
\end{enumerate}
Then, the averaged distribution $\widebar{\rho}^{(T)} \df \frac{1}{T} \sum_{t=0}^T \rho^{(t)}$ converges to $\rho^\star$ as $T \to \infty$ \citep{abernethy2017frank,zahavy2021reward}.
When $\widetilde{\Pi}$ is sufficiently large, we can expect that the optimal value of \cref{eq:policy-mixing} is equivalent to that of the CMDP problem, i.e., $\widetilde{J}^\star = \optret$, where $\optret$ is defined in \cref{eq:RCMDP problem} with $\U=\brace{P}$.

\paragraph{Limitation of policy-mixing.}
\looseness=-1
Even when $\widetilde{J}^\star=\optret$, it is crucial to note that, while $\widebar{\rho}^{(T)}$ converges to $\rho^\star$, \textbf{there is no guarantee that $\widebar{\pi}^{(T)} \df \frac{1}{T}\sum^{T}_{t=0} \pi^{(t)}$ will converge to $\pi^\star$.}

\looseness=-1
Let $\widebar{\lambda}^{(T)} \df \frac{1}{T}\sum^{T}_{t=0} \lambda^{(t)}$.
\citet{zhang2024distributionally,miryoosefi2019reinforcement,le2019batch} argued for the convergence of $\widebar{\pi}^{(t)}$ by asserting that the equality (a) in the following equation holds:
\begin{equation}\label{eq:mistake-mixing}
\begin{aligned}
\frac{1}{T}\sum^T_{t=1} 
\widetilde{L}\paren*{\rho^{(t)}, \lambda^{(t)}}
&=
\frac{1}{T}\sum^T_{t=1} 
\paren*{J_{c_0, P}(\pi^{(t)}) + \lambda^{(t)} \paren*{J_{c_1, P}(\pi^{(t)}) - b_1}}\\
&\numeq{=}{a} 
J_{c_0, P}(\widebar{\pi}^{(T)}) + \widebar{\lambda}^{(T)} \paren*{J_{c_1, P}(\widebar{\pi}^{(T)}) - b_1}
\end{aligned}
\end{equation}
(see, for example, \textbf{Equation (14)} in \citet{zhang2024distributionally}, \textbf{Equation (1)} in \citet{le2019batch}, and around \textbf{Equation (13)} in \citet{miryoosefi2019reinforcement}).

\looseness=-1
However, (a) in \cref{eq:mistake-mixing} does not hold in general because the return function is neither convex nor concave in policy.
Even when $T=2$, there is an example where \cref{eq:mistake-mixing} fails (see \textbf{Proof of Lemma 3.1} in \citet{agarwal2021theory}).
This invalidates the results of \citet{miryoosefi2019reinforcement,le2019batch,zhang2024distributionally}, thus illustrating the theoretical limitations of the policy-mixing approach for near-optimal policy identification.

\section{Discussion on Single-Loop Algorithm}\label{sec:single-loop}

\looseness=-1
Although \cref{algo:double-loop} can identify a near-optimal policy, it uses a double-loop structure that repetitively solves $\min_{\pi\in \Pi} \epivio{b_0}{\pi}$ by \cref{algo:Min-Phi}. 
In practice, single-loop algorithms, such as primal-dual algorithms for CMDPs (e.g., \citet{efroni2020exploration,ding2023last}), are typically more efficient and preferable compared to double-loop algorithms. 
This section discusses the challenge of designing a single-loop algorithm for the epigraph form.

\looseness=-1
Since the epigraph form is a constrained optimization problem, we can further transform it using a Lagrangian multiplier $\lambda \in \R_+$, yielding:
\begin{equation}\label{eq:epigraph-Lagrange-min-max}
\optret = \min_{b_0 \in [0, H]}\max_{\lambda \in \R_+} L_\mathrm{epi}(b_0, \lambda)\; \text{ where }\; L_\mathrm{epi}(b_0, \lambda) \df b_0 + \lambda \epivios{b_0} \;.
\end{equation}

\looseness=-1
Similar to the typical Lagrangian approach, let's swap the min-max order. We call the resulting formulation the ``epigraph-Lagrange'' formulation:
\begin{equation}\label{eq:epigraph-Lagrange}
\begin{aligned}
&({\normalfont\textbf{Epigraph-Lagrange}})\quad
\optlagepi = \max_{\lambda \in \R_+} \min_{b_0 \in [0, H]}\min_{\pi \in \Pi} b_0 + \lambda \epivio{b_0}{\pi}\;.
\end{aligned}
\end{equation}

\looseness=-1
Does the strong duality, $\optret = \optlagepi$, hold?
If it does, we could design a single-loop algorithm similar to primal-dual CMDP algorithms, performing gradient ascent and descent on \cref{eq:epigraph-Lagrange}.
Unfortunately, proving the strong duality is challenging.

\looseness=-1
Essentially, the min-max can be swapped when $L_\mathrm{epi}(b_0, \lambda)$ in \cref{eq:epigraph-Lagrange-min-max} is quasiconvex-quasiconcave \citep{sion1958general}.
While $L_\mathrm{epi}(b_0, \lambda)$ is clearly concave in $\lambda$, the quasiconvexity in $b_0$ is not obvious. Although $\epivios{b_0}$ is decreasing due to \cref{lemma: phi-properties} and thus a quasi-convex function, there is no guarantee on the quasi-convexity of $b_0 + \lambda \epivios{b_0}$. The situation would be resolved if $\epivios{b_0}$ were convex in $b_0$. 
However, since $\epivios{b_0}=\min_{\pi \in \Pi} \epivio{b_0}{\pi}$ is a pointwise minimum and $\epivio{b_0}{\pi}$ may not be convex in $\pi$ \citep{agarwal2021theory}, $\epivios{b_0}$ may not be convex in $b_0$ \citep{boyd2004convex}.

\looseness=-1
Therefore, algorithms for the epigraph-Lagrange formulation face a problem similar to the \textbf{$\bpi^\star$ solution challenge} of the Lagrangian formulation (\cref{sec:multi-robust}). 
Proving strong duality or finding alternative ways to circumvent this challenge is a promising direction for future RCMDP research.

\section{\texorpdfstring{Uncertainty Sets and Algorithms for $\epivionot{b_0}$ and Subgradient Evaluators}{Uncertainty Sets and Algorithms for Evaluators}}\label{appendix:example oracles}

\looseness=-1
This section provides examples of uncertainty set structures and algorithms that realize $\evaloracle_n$ in \cref{assumption:eval-oracle} and $\gradoracle_n$ in \cref{assumption:grad-approx}.
$\evaloracle_n$ evaluates $J_{c_n, \U}(\pi)$, and $\gradoracle_n$ evaluates one of the elements in $\partial J_{c_n, \U}(\pi)$.
In this section, we frequently use the following useful matrix formulations \citep{pirotta2013safe}:
for a cost $c \in \R^{\aSA}$,
\begin{equation}\label{eq:matrix-forms}
\begin{aligned}
\qf{\pi}_{c, P} &= (I - \gamma P \Pi^\pi)^{-1} c \in \R^{\aSA}\\
J_{c, P}(\pi) &= \sum_{(s, a) \in \SA}\mu(s) \pi(s,a)\qf{\pi}_{c, P}(s, a) \in \R\\
\occ{\pi}_{P} &= \mu^\top \paren*{I - \gamma \Pi^\pi P}^{-1} \in \R^\aS\;,
\end{aligned}
\end{equation}
where $I$ is an identity matrix and $\Pi^\pi$ is a $\R^{\S \times (\SA)}$ matrix such that $\Pi^\pi(s, (s, a)) = \pi(s, a)$.
Due to \cref{lemma:policy gradient}, an element in $\partial J_{c_n, \U}(\pi)$ takes the form of:
$$
\paren*{\nabla J_{c_n, P}(\pi)}(s, a) = 
H\occ{\pi}_{P}(s) \qf{\pi}_{c_n, P}(s, a)\; \forall (s, a) \in \SA\;.
$$

\subsection{Finite Uncertainty Set}

\looseness=-1
Let $\U$ be a finite uncertainty set such that $\U = \brace*{P_1, \dots, P_M}$ where $M \in \N$.
$\U$ clearly satisfies \cref{assumption:uncertainty set}.
The implementation of $\evaloracle_n$ is trivial by \cref{eq:matrix-forms}.
The implementation of $\gradoracle_n$ is also straightforward due to \cref{lemma:finite-danskin}. 
Specifically, it can be implemented by the following subgradient representation:
$$
\partial J_{c_n, \U} = \conv \brace*{\nabla J_{c_n, P_m}(\pi) \given 
m \in  \argmax_{m \in \bbrack{1, M}} J_{c_n, P_m}(\pi)}\;.
$$

\looseness=-1
\cref{algo:finite-oracle} summarizes the implementions of $\evaloracle_n$ and $\gradoracle_n$.
Our experiment \textbf{(a)} in \cref{sec:experiments} uses \cref{algo:finite-oracle}.

\begin{algorithm}[t]
    \caption{Evaluators for Finite Uncertainty Set}\label{algo:finite-oracle}
    \begin{algorithmic}[1]
    \STATE \textbf{Input: } Policy $\pi$, uncertainty set $\U=\brace{P_1, \dots, P_M}$, and index $n \in \bbrack{0, N}$.
    \FOR{$m \in \bbrack{1, M}$}
        \STATE $\qf{\pi}_{c_n, P_m} \df (I - \gamma P_m \Pi^\pi)^{-1} c_n \in \R^{\aSA}$.
        \STATE $J_{c_n, P_{m}}(\pi) \df \sum_{(s, a) \in \SA}\mu(s) \pi(s,a)\qf{\pi}_{c_n, P_{m}}(s, a)$
    \ENDFOR
    \STATE Let $m^\star \in \argmax_{m \in \bbrack{1, M}} J_{c_n, P_m}(\pi)$
    \STATE $\occ{\pi}_{P_{m^\star}} = (1-\gamma) \mu \paren*{I - \gamma \Pi^\pi P_{m^\star}}^{-1} \in \R^\aS$
    \RETURN \textbf{(for $\evaloracle_n$):} $J_{c_n, P_{m^\star}}(\pi)$ 
    \RETURN \textbf{(for $\gradoracle_n$):} $H\occ{\pi}_{P_{m^\star}}(s) \qf{\pi}_{c_n, P_{m^\star}}(s, a)\quad \forall (s, a) \in \SA$ 
    \end{algorithmic}
\end{algorithm}

\subsection{\texorpdfstring{$(s, a)$-rectangular KL Uncertainty Set}{Rectangular KL Uncertainty Set}}

\paragraph{$(s, a)$-rectangularity.}
\looseness=-1
An uncertainty set $\U$ is called \emph{$(s, a)$-rectangular} if it satisfies:
$$
\U=\times_{s, a}\; \U_{s, a} \; \text{ where } \; \U_{s, a} \subseteq \probsplx(\S)\;.
$$

\looseness=-1
For such $(s, a)$-rectangular uncertainty sets, the following \textbf{robust DP} update is a widely-used approach to compute the worst-case $Q$-function:
\begin{equation}\label{eq:robust DP}
\begin{aligned}
\textbf{(Robust DP)}\quad
&\qf{(t+1)}_{c_n}(s, a) = c_n(s, a) + \gamma \max_{p \in \U_{s, a}} \sum_{s' \in \S} p(s') \vf{(t)}_{c_n}(s')\\
\text{ where }\; 
&\vf{(t)}_{c_n}(s') \df \sum_{a' \in \A} \pi(s', a') \qf{(t)}_{c_n}(s', a')\;.
\end{aligned}
\end{equation}

\looseness=-1
By repeatedly applying \cref{eq:robust DP}, $\qf{(t)}_{c_n}$ converges linearly to $\qf{\pi}_{c_n, P_n^\star}$, where $P_n^\star \in \argmax_{P \in \U} J_{c_n, P}(\pi)$ (see, e.g., \textbf{Corollary 2} in  \citet{iyengar2005robust}).

\looseness=-1
Once we have $\qf{\pi}_{c_n, P_n^\star}$, thanks to the rectangularity, the worst-case environment can be computed by:
\begin{equation}\label{eq:rectangular-worst-P}
P^\star_n\paren*{\cdot \given s, a} = 
\argmax_{p \in \U_{s, a}} 
\sum_{s' \in \S} p(s') \vf{\pi}_{c_n, P^\star_n}(s')\quad \forall (s, a) \in \SA\;.
\end{equation}

\begin{algorithm}[t]
    \caption{Evaluators for KL Uncertainty Set}\label{algo:KL-oracle}
    \begin{algorithmic}[1]
    \STATE \textbf{Input: } Policy $\pi$, nominal transition kernel $P$, regularization parameter $C'_{\mathrm{KL}} > 0$, and index $n \in \bbrack{0, N}$.
    \STATE Repeat \cref{eq:regularized-DP-KL} and compute its fixed point $Q^{(\infty)}_{c_n}$.
    \STATE $P_n^\star\paren*{\cdot \given s, a} \propto P\paren*{\cdot \given s, a} \exp\paren*{\vf{(\infty)}_{c_n}(\cdot) / C'_{\mathrm{KL}}}$
    \COMMENT{Compute the worst-case environment}
    \STATE $\qf{\pi}_{c_n, P_n^\star} \df (I - \gamma P_n^\star \Pi^\pi)^{-1} c_n \in \R^{\aSA}$.
    \STATE $J_{c_n, P_{n}^\star}(\pi) \df \sum_{(s, a) \in \SA}\mu(s) \pi(s,a)\qf{\pi}_{c_n, P^{\star}_{n}}(s, a)$
    \STATE $\occ{\pi}_{P_{n}^\star} \df (1-\gamma) \mu \paren*{I - \gamma \Pi^\pi P_{n}^\star}^{-1} \in \R^\aS$
    \RETURN \textbf{(for $\evaloracle_n$):} $J_{c_n, P^\star_{n}}(\pi)$ 
    \RETURN \textbf{(for $\gradoracle_n$):} $H\occ{\pi}_{P^\star_{n}}(s) \qf{\pi}_{c_n, P^\star_{n}}(s, a)\quad \forall (s, a) \in \SA$ 
    \end{algorithmic}
\end{algorithm}

\paragraph{KL uncertainty set.}
\looseness=-1
Both \cref{eq:robust DP} and \cref{eq:rectangular-worst-P} require an efficient computation of $\max_{p \in \U_{s, a}} \langle p, v\rangle$ for some vector $v \in \R^\aS$.
The KL uncertainty set is one of the most popular choices, as it allows for efficient computation of this maximization \citep{iyengar2005robust,yang2022toward}.

\looseness=-1
For some positive constant $C_{\mathrm{KL}} > 0$, if $\U_{s, a}$ satisfies 
$$
\U_{s, a} = \brace*{p \in \probsplx(\S)\given \KL{p}{P\paren*{\cdot \given s, a}} \leq C_{\mathrm{KL}}}\;, 
$$
we call $\U$ a $(s, a)$-rectangular KL uncertainty set.
Here, $\KL{p}{q} = \sum_{s \in \S}p(s) \ln \frac{p(s)}{q(s)}$ denotes the KL divergence between $p \in \probsplx(\S)$ and $q \in \probsplx(\S)$.
Due to \cref{lemma:policy gradient}, this $\U$ clearly satisfies \cref{assumption:uncertainty set}.

\looseness=-1
The following lemma is useful for implementing the robust DP update with a KL uncertainty set.
\begin{lemma}[\textbf{Lemma 4} in \citet{iyengar2005robust}]
Let $v \in \R^\aS$ and $\bzero < q \in \probsplx(\S)$. The value of the optimization problem:    
\begin{equation}\label{eq:original-KL}
\min_{p \in \probsplx(\S)} \langle p, v\rangle   
\suchthat
\KL{p}{q} \leq C_{\mathrm{KL}}
\end{equation}
is equal to 
\begin{equation}\label{eq:dual-KL}
-\min_{\theta \geq 0} \theta \cdot C_{\mathrm{KL}} + \theta \ln \left\langle q, \exp\paren*{-\frac{v}{\theta}}\right\rangle \;.
\end{equation}
Let $\theta^\star$ be the solution of \cref{eq:dual-KL}.
Then, the solution of \cref{eq:original-KL} is 
$$
p \propto q \exp\paren*{-\frac{v}{\theta^\star}}\;.
$$
\end{lemma}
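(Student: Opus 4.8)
The plan is to prove this standard KL-duality result via Lagrangian duality on the KL constraint, exploiting the Gibbs-variational representation of the relative entropy. First I would observe that \cref{eq:original-KL} is a convex program: the objective $\langle p, v\rangle$ is linear in $p$, and the feasible set is the intersection of the simplex $\probsplx(\S)$ with the sublevel set $\brace*{p : \KL{p}{q} \leq C_{\mathrm{KL}}}$, which is convex since $p \mapsto \KL{p}{q}$ is convex. Because $q$ itself is strictly feasible ($\KL{q}{q} = 0 < C_{\mathrm{KL}}$ as $C_{\mathrm{KL}} > 0$), Slater's condition holds, so strong duality applies to the KL constraint. Introducing a multiplier $\theta \geq 0$ for $\KL{p}{q} \leq C_{\mathrm{KL}}$ while keeping $p \in \probsplx(\S)$ explicit in the inner problem, I would write
\begin{equation*}
\min_{p \in \probsplx(\S)}\brace*{\langle p, v\rangle : \KL{p}{q} \leq C_{\mathrm{KL}}}
= \max_{\theta \geq 0}\; \min_{p \in \probsplx(\S)} \brace*{\langle p, v\rangle + \theta\paren*{\KL{p}{q} - C_{\mathrm{KL}}}},
\end{equation*}
where the swap of min and max is justified by strong duality.

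The key step is the closed-form evaluation of the inner minimization. For fixed $\theta > 0$, I would minimize $\langle p, v\rangle + \theta \KL{p}{q}$ over $p \in \probsplx(\S)$ using the Gibbs-variational identity
\begin{equation*}
\min_{p \in \probsplx(\S)} \langle p, w\rangle + \KL{p}{q} = -\ln \left\langle q, \exp\paren*{-w}\right\rangle,
\end{equation*}
with the minimizer attained at $p \propto q\exp(-w)$. This identity is the convex conjugate of the log-partition function; it can alternatively be obtained directly by setting the gradient of the Lagrangian — augmented with a multiplier for $\sum_s p(s) = 1$ — to zero, where the nonnegativity constraint $p \geq \bzero$ is never active because $\KL{\cdot}{q}$ blows up at the boundary and $q > \bzero$. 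Applying this with $w = v/\theta$ and factoring out $\theta$ gives $\min_{p}\langle p, v\rangle + \theta\KL{p}{q} = -\theta \ln\langle q, \exp(-v/\theta)\rangle$, so the dual becomes
\begin{equation*}
\max_{\theta \geq 0}\brace*{-\theta C_{\mathrm{KL}} - \theta \ln\left\langle q, \exp\paren*{-\tfrac{v}{\theta}}\right\rangle}
= -\min_{\theta \geq 0}\brace*{\theta C_{\mathrm{KL}} + \theta \ln\left\langle q, \exp\paren*{-\tfrac{v}{\theta}}\right\rangle},
\end{equation*}
which is exactly \cref{eq:dual-KL}. Reading off the minimizer of the inner problem at the optimal $\theta^\star$ then yields $p \propto q\exp(-v/\theta^\star)$.

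Finally I would tie up the boundary behaviour at $\theta = 0$: as $\theta \downarrow 0$ the map $\theta \mapsto \theta\ln\langle q, \exp(-v/\theta)\rangle$ decreases to $\min_s v(s)$ (the unconstrained optimum, which places all mass on a minimizing coordinate), so the objective in \cref{eq:dual-KL} extends continuously to $[0,\infty)$ and a finite minimizer $\theta^\star$ exists whenever the KL constraint is active. The main obstacle I anticipate is the rigorous justification of the min-max swap (strong duality) together with a careful treatment of the $\theta = 0$ endpoint — in particular verifying that the inner Gibbs minimizer stays strictly inside the simplex, so the first-order conditions are valid and the nonnegativity multipliers vanish. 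Both are manageable: convexity plus Slater's condition give the duality, while interiority of the minimizer is automatic from $q > \bzero$.
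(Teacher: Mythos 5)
Your proof is correct, but note that the paper itself supplies no proof of this lemma: it is imported verbatim as \textbf{Lemma 4} of \citet{iyengar2005robust}, so there is no in-paper argument to compare against. Your route---Slater's condition on the convex program (with $q$ strictly feasible since $\KL{q}{q}=0<C_{\mathrm{KL}}$) to justify the min--max swap, followed by the Gibbs variational identity $\min_{p \in \probsplx(\S)} \langle p, w\rangle + \KL{p}{q} = -\ln\langle q, \exp(-w)\rangle$ with minimizer $p \propto q\exp(-w)$, applied at $w=v/\theta$---is essentially the standard derivation and the one in the cited source, and all the main steps check out, including the observation that $q>\bzero$ keeps the inner minimizer in the interior of the simplex so the first-order conditions are legitimate.

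Two small corrections to your closing remarks. First, the boundary limit has a sign and monotonicity slip: writing $\Lambda(u) \df \ln\langle q, \exp(-uv)\rangle$, the map $\theta \mapsto \theta\ln\langle q, \exp(-v/\theta)\rangle = \Lambda(1/\theta)/(1/\theta)$ is nonincreasing in $\theta$ (chord slopes of the convex function $\Lambda$ through the origin are nondecreasing), so as $\theta \downarrow 0$ it \emph{increases} to $-\min_{s} v(s)$, not ``decreases to $\min_s v(s)$''; the dual objective $\theta C_{\mathrm{KL}} + \theta\ln\langle q, \exp(-v/\theta)\rangle$ then extends continuously to $\theta=0$ with value $-\min_s v(s)$, whose negation is indeed the unconstrained primal optimum, so your conclusion survives the fix. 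Second, you correctly flag but should make explicit that the formula $p \propto q\exp(-v/\theta^\star)$ is meaningful only when $\theta^\star > 0$, i.e., when the KL constraint is active; when $\theta^\star = 0$ the primal optimum is attained by mass concentrated on $\argmin_s v(s)$ (feasible when $-\ln q(s^\star) \leq C_{\mathrm{KL}}$ for a minimizing $s^\star$, or approached within the constraint set otherwise), and the stated closed form does not apply. Neither point affects the validity of your main duality argument.
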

\looseness=-1
Using this lemma, \cref{eq:robust DP} can be implemented by 
\begin{equation}\label{eq:robust-DP-KL}
\begin{aligned}
&\qf{(t+1)}_{c_n}(s, a) = c_n(s, a) + \gamma \sum_{s' \in \S} P^\star_{s, a}\paren*{s'}\vf{(t)}_{c_n}(s')\\
\text{where }\;
&P^\star_{s, a} \propto P\paren*{\cdot \given s, a}\exp\paren*{\frac{\vf{(t)}_{c_n}(\cdot)}{\theta^\star_{s, a}}} \\
\;\text{ and }\;
&\theta^\star_{s, a} \df 
\argmin_{\theta \geq 0} \theta \cdot C_{\mathrm{KL}}+ \theta \ln \left\langle P\paren*{\cdot \given s, a}, \exp\paren*{\frac{\vf{(t)}_{c_n}(\cdot)}{\theta}}\right\rangle \;.
\end{aligned}
\end{equation}

\paragraph{Regularized alternative.}
\looseness=-1
While \cref{eq:dual-KL} is a convex optimization problem, solving it for all $P\paren*{\cdot\given s, a}\; \forall (s, a) \in \SA$ in \cref{eq:robust-DP-KL} is computationally extensive in practice.

\looseness=-1
Rather than the exact constrained problem of \cref{eq:original-KL}, \citet{yang2023robust} proposed the following regularized robust DP update:
\begin{equation}\label{eq:regularized-KL}
\begin{aligned}
\qf{(t+1)}_{c_n}(s, a) = c_n(s, a) + \gamma \max_{p \in \probsplx(\S)} \paren*{\sum_{s' \in \S} p(s') \vf{(t)}_{c_n}(s') - C'_{\mathrm{KL}} \KL{p}{P\paren*{\cdot \given s, a}}}\;,
\end{aligned}
\end{equation}
where $C'_{\mathrm{KL}} > 0$ is a constant.
This regularized form has the following efficient analytical solution:
\begin{equation}\label{eq:regularized-DP-KL}
\begin{aligned}
&\qf{(t+1)}_{c_n}(s, a) = c_n(s, a) + \gamma \sum_{s' \in \S} P^\star_{s, a}\paren*{s'}\vf{(t)}_{c_n}(s')\\
\text{where }\;
&P^\star_{s, a} \propto P\paren*{\cdot \given s, a}\exp\paren*{\frac{\vf{(t)}_{c_n}(\cdot)}{C'_\mathrm{KL}}} \;.
\end{aligned}
\end{equation}

\looseness=-1
While this is a regularized approximation, the following lemma shows that \cref{eq:regularized-DP-KL} solves some exact robust DP with a KL uncertainty set:
\begin{lemma}[Adaptation of \textbf{Proposition 3.1} and \textbf{Theorem 3.1} in \citet{yang2023robust}] 
\looseness=-1
For any $C'_{\mathrm{KL}} > 0$,
there exists $C_{\mathrm{KL}} > 0$ such that \cref{eq:regularized-KL} converges linearly to the fixed point of \cref{eq:robust-DP-KL}.
\end{lemma}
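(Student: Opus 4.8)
The plan is to combine two ingredients: a contraction argument that yields existence of, and linear convergence to, the regularized fixed point, and a Lagrangian-duality argument (building on Iyengar's lemma used to derive \cref{eq:dual-KL}) that identifies this fixed point with the fixed point of the exact KL-constrained robust DP \cref{eq:robust-DP-KL} for a suitably chosen radius $C_{\mathrm{KL}}$. Throughout I write the penalized inner problem in its max form $\max_{p \in \probsplx(\S)} \langle p, v\rangle - C'_{\mathrm{KL}}\KL{p}{q}$ and apply Iyengar's lemma with $v \mapsto -v$ to account for the cost-maximizing adversary.

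First I would show that the regularized robust Bellman operator defined by \cref{eq:regularized-KL} is a $\gamma$-contraction in $\norm{\cdot}_\infty$. Call the operator $\mathcal{T}$. Its inner maximization evaluates in closed form to the scaled log-sum-exp $C'_{\mathrm{KL}} \ln \langle q, \exp(v/C'_{\mathrm{KL}})\rangle$, which is monotone in $v$ and translation-equivariant, i.e. adding $c\bone$ to $v$ adds $c$ to the value; hence it is nonexpansive in $\norm{\cdot}_\infty$. Since $V(s') = \sum_{a'}\pi(s',a')Q(s',a')$ is an averaging (also nonexpansive) and the discount contributes a factor $\gamma$, the composite operator satisfies monotonicity and $\mathcal{T}(Q + c\bone) = \mathcal{T}(Q) + \gamma c\bone$, the standard sufficient condition for a $\gamma$-contraction. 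By the Banach fixed point theorem $\mathcal{T}$ has a unique fixed point $Q^{(\infty)}$, and the iteration \cref{eq:regularized-DP-KL} converges to it geometrically at rate $\gamma$, which is the claimed linear convergence.

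Second I would match this fixed point to an exact robust DP. By Iyengar's lemma the penalized inner problem is exactly the Lagrangian of the hard-constrained problem \cref{eq:original-KL}, with $C'_{\mathrm{KL}}$ playing the role of the dual multiplier $\theta$. The penalized objective is strictly concave in $p$, so its maximizer $p^\star \propto q\exp(v/C'_{\mathrm{KL}})$ is unique. I would then set the radius to the divergence attained at the fixed point, $C_{\mathrm{KL}} \df \KL{p^\star}{q}$ with $v = V^{(\infty)}$; Slater's condition holds trivially since $\KL{q}{q}=0 < C_{\mathrm{KL}}$ (using $q > \bzero$ so the divergence is well-defined), so strong duality gives that $p^\star$ simultaneously solves the constrained problem of radius $C_{\mathrm{KL}}$ with dual optimum $\theta^\star = C'_{\mathrm{KL}}$. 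Consequently $Q^{(\infty)}$ is also a fixed point of \cref{eq:robust-DP-KL}; since that operator is itself a $\gamma$-contraction with a unique fixed point, the two coincide and the linear convergence from the previous step transfers verbatim.

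The main obstacle is the passage from a single penalty to a single constraint radius: the attained divergence $\KL{p^\star_{s,a}}{P(\cdot \mid s,a)}$ generally varies across state-action pairs and with the value function, whereas \cref{eq:robust-DP-KL} employs one radius $C_{\mathrm{KL}}$ for every $(s,a)$. I would address this by exploiting the $(s,a)$-rectangular structure, which permits reading off per-pair radii $C_{\mathrm{KL}}^{(s,a)} \df \KL{p^\star_{s,a}}{P(\cdot \mid s,a)}$ at the fixed point, and then recovering the stated single-radius claim either through the construction guaranteed by the cited reference or via a continuity and monotonicity argument for the map $C'_{\mathrm{KL}} \mapsto C_{\mathrm{KL}}^{(s,a)}$. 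I expect this reconciliation of the per-pair multipliers with a uniform radius, rather than the contraction estimate, to be the delicate part of the proof.
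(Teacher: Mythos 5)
You should first know that the paper contains no proof of this lemma: it is imported wholesale by citation as an ``adaptation'' of \textbf{Proposition 3.1} and \textbf{Theorem 3.1} of \citet{yang2023robust}, so there is no in-paper argument to compare against and your attempt must stand on its own. On the merits, your first step is correct and standard: the inner maximization of \cref{eq:regularized-KL} has the closed form $C'_{\mathrm{KL}} \ln \langle q, \exp(v/C'_{\mathrm{KL}})\rangle$, which is monotone and translation-equivariant, so the regularized operator satisfies the Blackwell conditions and is a $\gamma$-contraction, giving a unique fixed point $Q^{(\infty)}$ and linear convergence. Your second step is also sound per state--action pair, and is in fact simpler than you make it: no Slater condition or strong duality is needed, since if $p^\star$ maximizes $\langle p, v\rangle - C'_{\mathrm{KL}}\KL{p}{q}$ then for any $p$ with $\KL{p}{q} \leq \KL{p^\star}{q}$ one gets $\langle p, v\rangle \leq \langle p^\star, v\rangle$ by a one-line exchange argument, so $p^\star$ solves the constrained problem whose radius equals its own attained divergence.

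The genuine gap is exactly where you flagged it, and neither of your proposed repairs closes it. The lemma quantifies ``for any $C'_{\mathrm{KL}}$ there exists $C_{\mathrm{KL}}$'' with a \emph{single scalar} radius shared by all $(s,a)$, as in the paper's definition of the KL set. At the regularized fixed point the attained divergences $\KL{p^\star_{s,a}}{P\paren*{\cdot \given s,a}}$ generically differ across pairs (they depend on the spread of $V^{(\infty)}$ over the support of $P\paren*{\cdot \given s, a}$), whereas at the fixed point of \cref{eq:robust-DP-KL} with a uniform radius the dual multipliers $\theta^\star_{s,a}$ generically differ across pairs: the regularized problem is ``uniform multiplier, varying radii'' and the constrained one is ``uniform radius, varying multipliers,'' and these parametrizations coincide only in degenerate cases. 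Your continuity--monotonicity idea for $C'_{\mathrm{KL}} \mapsto C^{(s,a)}_{\mathrm{KL}}$ cannot rescue this, because the quantifier fixes $C'_{\mathrm{KL}}$ and leaves one scalar unknown $C_{\mathrm{KL}}$ to satisfy $SA$ simultaneous equations (the fixed point of the uniform-radius operator must equal $Q^{(\infty)}$ in every component), which is generically unsolvable. The honest resolution --- consistent with $(s,a)$-rectangularity $\U = \times_{s,a}\, \U_{s,a}$, and with what the cited reference actually supplies --- is to let the radius be pair-dependent, $C_{\mathrm{KL}}(s,a) \df \KL{p^\star_{s,a}}{P\paren*{\cdot \given s,a}}$ evaluated at $V^{(\infty)}$; under that amendment of the statement your steps already constitute a complete proof, since $Q^{(\infty)}$ is then a fixed point of the (also $\gamma$-contractive) constrained operator and uniqueness transfers the linear rate. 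A final nit: when $V^{(\infty)}$ is constant on the support of some $P\paren*{\cdot \given s,a}$ the attained divergence is $0$, so even the amended claim needs a caveat against the advertised strict positivity $C_{\mathrm{KL}} > 0$, although the fixed-point value is unaffected there.
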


\looseness=-1
\cref{algo:KL-oracle} summarizes the regularized DP update to implement the algorithms.
Our experiment \textbf{(b)} in \cref{sec:experiments} uses \cref{algo:KL-oracle}.


\section{Experiment Details}\label{sec:experiment details}

\looseness=-1
The source code for the experiment is available at \url{https://github.com/matsuolab/RCMDP-Epigraph}.

\paragraph{Environment construction.}
\looseness=-1
For the settings with finite uncertainty sets \textbf{(a)}, the parameters are set as $S=7$, $A=4$, $\gamma=0.995$.
We set $\gamma=0.99$ for the CMDP setting \textbf{(c)}.
For the setting with KL uncertainty set $\textbf{(b)}$, we set $S=5$, $A=3$, and $\gamma=0.99$.

\paragraph{\proposal implementation.}
For the policy gradient subroutine (\cref{algo:Min-Phi}), we set the iteration length $T$ and the learning rate $\alpha$ to ensure that \cref{assumption:Phi-oracle} is satisfied with a sufficiently small $\epsopt$.
Specifically, for \textbf{(a)} and \textbf{(c)}, we set the iteration length to $T=10^4$ and the learning rate to $\alpha = 5\times 10^-5$. 
For \textbf{(b)}, we set $T=10^3$ and $\alpha=5\times 10^{-4}$.

Notably, parameter tuning for \proposal is straightforward, as any sufficiently large $T$ and small $\alpha$ should meet the conditions of \cref{assumption:Phi-oracle}.
Since the initial policy in \cref{algo:Min-Phi} can be chosen arbitrarily, the $(k-1)$-th policy from the outer loop is used as the initial policy for the $k$-th policy computation.

\looseness=-1
For the finite uncertainty set settings \textbf{(a)} and \textbf{(c)}, we implement the evaluators $\evaloracle_n$ and $\gradoracle_n$ using \cref{algo:finite-oracle}.
For the KL uncertainty set setting \textbf{(b)}, we implement $\evaloracle_n$ and $\gradoracle_n$ using \cref{algo:KL-oracle} with $C'_{\mathrm{KL}}=2.0$.

\paragraph{\lagrange implementation.}
The pseudocode for \lagrange is shown in \cref{algo:double-loop-lagrange}. 
We set the iteration length and learning rate for the inner policy optimization to $T=10^4$ and $\alpha_\pi = 5\times 10^{-5}$ in \textbf{(a, c)}, and $T=10^{3}$ and $\alpha_\pi = 5\times 10^{-4}$ in \textbf{(b)}.
Similar to \proposal, these values are chosen to expect sufficient optimization in the inner loop.
We choose $\alpha_\lambda = 0.01$ from $\brace{0.1, 0.01, 0.001}$ for the outer updates, balancing between the convergence speed and performance.

\begin{algorithm}[t!]
\caption{\looseness=-1 Lagrangian Formulation Policy Gradient Search (\lagrange)}\label{algo:double-loop-lagrange}
    \begin{algorithmic}[1]
    \STATE \textbf{Input: } Outer iteration length $K \in \N$, inner iteration length $T \in \N$, learning rate for Lagrangian multipliers $\alpha_\lambda > 0$, learning rate for policy $\alpha_\pi > 0$
    \STATE Initialize the Lagrangian multipliers $\lambda^{(0)} = \bzero \in \R^N$
    \STATE Set an arbitrary initial policy $\pi^{(0)} \in \Pi$
    \FOR{$k = 0, \cdots, K-1$}
        \STATE Set the initial policy $\pi^{(k, 0)} \df \pi^{(k)}$ for the inner loop
        \FOR{$t = 0, \cdots, T-1$}
            \STATE $\grad^{(k, t)} \in \partial \Lag{\lambda^{(k)}}{\pi^{(k, t)}}$
            \COMMENT{Compute policy gradient}
            \STATE $\pi^{(k, t+1)} \df \proj_{\Pi}(\pi^{(k, t)} - \alpha_\pi \grad^{(k, t)})$
            \COMMENT{Policy update}
        \ENDFOR
    \STATE Set the new policy: $\pi^{(k+1)} \df \pi^{(k, t^\star)}$ where $t^\star \in \argmin_{t \in \bbrack{0, T-1}} \Lag{\lambda^{(k)}}{\pi^{(k, t)}}$
    \STATE Update Lagrangian multipliers: $\lambda_n^{(k+1)} \df \max\brace*{\lambda_n^{(k)} + \alpha_\lambda \paren*{J_{n, \cU}(\pi^{(k+1)}) - b_n}, 0}$ for all $n \in \oneN$
    \ENDFOR
    \end{algorithmic}
\end{algorithm}
\section{Additional definitions}
Throughout this section, let $\cX$ denote a set such that $\cX \subset \R^d$ with $d \in \N$.

\begin{definition}[Subgradient \citep{kruger2003frechet}]\label{def:subgradient}
Let $\cX \subset \R^d$ be an open set where $d \in \N$.
The set of (Fr\'{e}chet) subgradients of a function $f: \cX \to \R$ at a point $x \in \cX$ is defined as the set 
$$
\partial f(x) \df 
\brace*{u \in \cX \given 
\liminf_{x' \to x, x' \neq x} \frac{f(x')-f(x)-\left\langle u, x'-x\right\rangle}{\norm*{x'-x}_2} \geq 0}\;.
$$
Furthermore, if $\partial f(\bx)$ is a singleton, its element is denoted as $\nabla f(\bx)$ and called the (Fr\'{e}chet) gradient of $f$ at $x$.
\end{definition}

\begin{definition}[Lipschitz continuity]\label{def:Lipschitz continuous}
Let $\ell \geq 0$.
A function $f: \cX \to \R$ is $\ell$-Lipschitz if for any $x_1, x_2 \in \cX$, we have that
$$\norm*{f(x_1)-f(x_2)}_2 \leq \ell\norm*{x_1-x_2}_2\;.$$
\end{definition}

\begin{definition}[Smoothness]\label{def:smooth}
Let $\ell \geq 0$.
A function $f: \cX \to \R$ is $\ell$-smooth if for any $x_1, x_2 \in \cX$, we have that
$$\norm*{\nabla f(x_1)-\nabla f(x_2)}_2 \leq \ell\norm*{x_1-x_2}_2\;.$$
\end{definition}

\begin{definition}[Weak convexity]\label{def:weak-convex}
Let $\ell > 0$.
A function $f: \cX \to \R$ is $\ell$-weakly convex if for any $g \in \partial f(x)$ and $x, x' \in \cX$,
$$
f(x') - f(x) \geq \langle g, x'-x\rangle - \frac{\ell}{2} \norm*{x' - x}_2^2\;.
$$
Note that $f(x) + \frac{\ell}{2}\norm*{x}_2^2$ is convex in $\cX$ if and only if $f$ is $\ell$-weakly convex.
\end{definition}

\begin{definition}[Moreau envelope of a weakly convex function]\label{def:Moreau envelope}
\looseness=-1
Given a $\ell$-weakly convex function $f: \cX \to \R$ and a parameter $0 < \tau < \ell^{-1}$, the Moreau envelope function of $f$ is given by $\ME_\tau \comp f: \R^d \to \R$ such that
$$
\paren*{\ME_\tau \comp f}(x)=\min _{x' \in \cX}\brace*{f(x')+\frac{1}{2\tau}\left\|x-x^{\prime}\right\|^2_2}\;.
$$
\end{definition}

\section{Useful Lemmas}

\looseness=-1
Throughout this section, $\cX$ denotes a compact set such that $\cX \subset \R^d$, where $d \in \N$.


\begin{lemma}[\textbf{Lemma D.2.} in \citet{wang2023policy}]\label{lemma:smooth-to-convex}
Let $\ell \geq 0$ and $h: \mathcal{X} \rightarrow \mathbb{R}$ be an $\ell$-smooth function. Then, $h$ is a $\ell$-weakly convex function.
\end{lemma}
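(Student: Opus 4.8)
The plan is to exploit the integral (fundamental-theorem-of-calculus) representation of the first-order residual $h(x') - h(x) - \langle \nabla h(x), x'-x\rangle$ and to bound it using the $\ell$-Lipschitzness of the gradient. First I would observe that $\ell$-smoothness (\cref{def:smooth}) presupposes that $h$ is differentiable on $\cX$, so at every point $x$ the Fr\'{e}chet subdifferential is the singleton $\partial h(x) = \brace{\nabla h(x)}$ (cf. \cref{def:subgradient}). Consequently, verifying $\ell$-weak convexity (\cref{def:weak-convex}) reduces to establishing, for every $x, x' \in \cX$, the single inequality
$$
h(x') - h(x) \geq \langle \nabla h(x), x'-x\rangle - \frac{\ell}{2}\norm*{x'-x}_2^2
$$
with the choice $g = \nabla h(x)$.

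Next I would write, via the chain rule along the segment $t \mapsto x + t(x'-x)$ for $t \in [0,1]$,
$$
h(x') - h(x) - \langle \nabla h(x), x'-x\rangle = \int_0^1 \langle \nabla h(x + t(x'-x)) - \nabla h(x),\, x'-x\rangle\, dt\;.
$$
Applying Cauchy--Schwarz to the integrand and then the smoothness bound $\norm*{\nabla h(x + t(x'-x)) - \nabla h(x)}_2 \leq \ell\, t\, \norm*{x'-x}_2$ yields the two-sided estimate
$$
\abs{h(x') - h(x) - \langle \nabla h(x), x'-x\rangle} \leq \int_0^1 \ell\, t\, \norm*{x'-x}_2^2\, dt = \frac{\ell}{2}\norm*{x'-x}_2^2\;.
$$
Taking the lower half of this absolute-value bound (i.e.\ the inequality $A \geq -B$ implied by $\abs{A}\leq B$) gives exactly the required weak-convexity inequality, completing the argument.

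The only point demanding care---and the reason the statement is not literally the familiar descent lemma---is the \emph{sign}: weak convexity requires a \emph{lower} bound on the first-order residual, whereas the routine consequence of smoothness is the matching \emph{upper} bound. The two-sided estimate above supplies both, so I simply retain the correct side. A minor boundary case is $\ell = 0$, where the gradient is constant, $h$ is affine, and the inequality holds with equality (consistent with reading ``$0$-weakly convex'' as ``convex''). Beyond tracking this sign correctly, I do not anticipate any genuine obstacle; the result is an elementary consequence of gradient Lipschitzness.
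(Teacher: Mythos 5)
Your proof is correct, and there is nothing in the paper to compare it against line by line: the paper supplies no internal proof of this lemma, importing it verbatim as \textbf{Lemma D.2} of \citet{wang2023policy}. Your argument is the standard (and essentially the cited source's) one: represent the first-order residual $h(x') - h(x) - \langle \nabla h(x), x'-x\rangle$ by the fundamental theorem of calculus along the segment, apply Cauchy--Schwarz and the gradient-Lipschitz bound to get the two-sided estimate $\frac{\ell}{2}\|x'-x\|_2^2$, and keep the lower side. Your two preliminary observations are also exactly right and worth making explicit: smoothness in the sense of \cref{def:smooth} presupposes differentiability, under which the Fr\'echet subdifferential of \cref{def:subgradient} collapses to the singleton $\{\nabla h(x)\}$, so the quantifier ``for any $g \in \partial h(x)$'' in \cref{def:weak-convex} reduces to the single inequality you verify; and the sign issue you flag (weak convexity needs the \emph{lower} bound, not the descent lemma's upper bound) is the one place a careless proof would go wrong. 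The only caveat to record: your integral representation evaluates $\nabla h$ along $x + t(x'-x)$, which requires that segment to lie in the domain, whereas the blanket hypothesis of the paper's lemma section is only that $\cX$ is compact; strictly, you should add that $\cX$ is convex (or that $h$ is $\ell$-smooth on a convex set containing $\cX$). This is harmless in every use the paper makes of the lemma, since it is applied with $\cX = \Pi$, a product of probability simplices and hence convex---and \cref{def:weak-convex} is itself only natural on a convex domain---but the hypothesis deserves a sentence in a self-contained write-up.
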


\begin{lemma}[e.g., \textbf{Proposition 13.37} in \citet{rockafellar2009variational}]\label{lemma:Moreau-gradient}
\looseness=-1
Let $f: \cX \to \R$ be an $\ell$-weakly convex function, and let $0 < \tau < \ell$ be a parameter.
The Moreau envelope function $\ME_\tau \comp f: \R^d \to \R$ is differentiable, and its gradient is given by 
$$
\nabla (\ME_\tau \comp f)(x) = \frac{1}{\tau} \paren*{x - \argmin_{x' \in \cX} \paren*{f(x') + \frac{1}{2\tau}\norm*{x - x'}_2^2}}\;.
$$
\end{lemma}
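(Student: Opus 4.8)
The plan is to reduce the differentiability claim to the single-valuedness and continuity of the proximal map. Write $g_x(x') \df f(x') + \frac{1}{2\tau}\norm*{x-x'}_2^2$, so that $(\ME_\tau \comp f)(x) = \min_{x' \in \cX} g_x(x')$. Since $f$ is $\ell$-weakly convex, $f(\cdot) + \frac{\ell}{2}\norm*{\cdot}_2^2$ is convex (\cref{def:weak-convex}), and hence $g_x$ is $(\tau^{-1}-\ell)$-strongly convex in $x'$ whenever $\tau\ell<1$ — the regime guaranteed by the Moreau-envelope definition (\cref{def:Moreau envelope}). Strong convexity forces the minimizer to be unique, so the map $\hat x(x) \df \argmin_{x'\in\cX} g_x(x')$ is well-defined and single-valued. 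This is the structural fact the rest of the argument rests on.

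Second, I would establish the gradient formula by sandwiching the increment $(\ME_\tau\comp f)(x+h) - (\ME_\tau\comp f)(x)$ around the candidate linear term $\frac{1}{\tau}\langle x - \hat x(x), h\rangle$. For the upper bound, feed the suboptimal point $\hat x(x)$ into the minimization defining $(\ME_\tau\comp f)(x+h)$; expanding $\norm*{x+h-\hat x(x)}_2^2 - \norm*{x-\hat x(x)}_2^2 = 2\langle x - \hat x(x), h\rangle + \norm*{h}_2^2$ gives
$$(\ME_\tau\comp f)(x+h) - (\ME_\tau\comp f)(x) \le \frac{1}{\tau}\langle x - \hat x(x), h\rangle + \frac{1}{2\tau}\norm*{h}_2^2\;.$$
Symmetrically, feeding $\hat x(x+h)$ into the minimization for $(\ME_\tau\comp f)(x)$ yields a matching lower bound, now anchored at $\hat x(x+h)$ rather than $\hat x(x)$.

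Third, I would close the gap between the two anchors. The lower bound differs from the target by $\frac{1}{\tau}\langle \hat x(x) - \hat x(x+h), h\rangle$, and together with the $\frac{1}{2\tau}\norm*{h}_2^2$ remainders this is $o(\norm*{h}_2)$ once $\hat x$ is continuous at $x$. Continuity of $\hat x$ follows from uniqueness of the minimizer together with compactness of $\cX$: the argmin correspondence is upper hemicontinuous by Berge's maximum theorem, and a single-valued upper hemicontinuous map into a compact set is continuous. Combining the upper and lower bounds then shows
$$(\ME_\tau\comp f)(x+h) - (\ME_\tau\comp f)(x) = \frac{1}{\tau}\langle x - \hat x(x), h\rangle + o(\norm*{h}_2)\;,$$
which is exactly Fréchet differentiability with gradient $\tau^{-1}(x - \hat x(x))$, as claimed.

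The main obstacle is the continuity of the proximal map $\hat x$, which is precisely where strong convexity (hence the step-size restriction) is indispensable: without it the minimizer may be set-valued and the envelope can develop kinks. An alternative, shorter route is to invoke an envelope theorem of Danskin type: since $g_x(x')$ is jointly continuous, smooth in the parameter $x$ for each fixed $x'$, and admits a unique minimizer, the value function is differentiable with gradient $\nabla_x g_x(x')\big|_{x'=\hat x(x)} = \tau^{-1}(x-\hat x(x))$. This packages the sandwiching argument into a citable black box, but it conceals the very same continuity requirement that strong convexity supplies.
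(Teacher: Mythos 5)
Your proposal is correct in the setting the paper actually uses, but note that there is no paper proof to match: \cref{lemma:Moreau-gradient} is invoked as a black box, cited to \textbf{Proposition 13.37} of \citet{rockafellar2009variational}. Your self-contained derivation --- strong convexity of $g_x(x') = f(x') + \frac{1}{2\tau}\norm*{x-x'}_2^2$ with modulus $\tau^{-1}-\ell$, the two-sided sandwich of the envelope increment anchored at $\hat{x}(x)$ and $\hat{x}(x+h)$, and continuity of the proximal map to close the gap --- is the standard textbook argument and is sound. Three remarks. First, you silently (and rightly) replaced the lemma's stated condition $0<\tau<\ell$ by $\tau\ell<1$; the former is a typo in the paper, inconsistent with its own \cref{def:Moreau envelope}, and your proof makes clear why $\tau<\ell^{-1}$ is the correct regime. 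Second, uniqueness of $\hat{x}(x)$ via strong convexity requires $\cX$ to be convex, which the section preamble (``$\cX$ compact'') does not grant; for nonconvex $\cX$ the lemma is simply false --- take $\cX=\{-1,1\}\subset\R$ and $f\equiv 0$, so that $(\ME_\tau \comp f)(x)=\frac{1}{2\tau}\min\brace*{(x-1)^2,(x+1)^2}$ has a kink at $x=0$ and the argmin is set-valued there. Since the paper only applies the lemma with $\cX=\Pi$, a convex compact polytope, this is an omission in the paper's statement rather than a flaw in your argument, but you should add the convexity hypothesis explicitly. Third, your Berge detour quietly needs $g$ jointly continuous, hence $f$ continuous (true in the application, where $\epivionot{b_0}$ is Lipschitz by \cref{lemma:weak-convexity}); you can avoid it entirely with a quantitative bound: writing $u \df \hat{x}(x)$ and $v \df \hat{x}(x+h)$, adding the strong-convexity inequalities $g_x(v)\geq g_x(u)+\frac{\tau^{-1}-\ell}{2}\norm*{v-u}_2^2$ and $g_{x+h}(u)\geq g_{x+h}(v)+\frac{\tau^{-1}-\ell}{2}\norm*{u-v}_2^2$ yields $(\tau^{-1}-\ell)\norm*{u-v}_2^2 \leq \frac{1}{\tau}\norm*{h}_2\norm*{u-v}_2$, i.e.\ $\norm*{\hat{x}(x)-\hat{x}(x+h)}_2 \leq (1-\tau\ell)^{-1}\norm*{h}_2$. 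This Lipschitz continuity of the prox map upgrades your $o(\norm*{h}_2)$ remainder to an explicit $O(\norm*{h}_2^2)$ and shows the envelope gradient is itself Lipschitz --- strictly more than the lemma asks, and with less machinery.
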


\begin{lemma}[Sion's minimax theorem \citep{sion1958general}]\label{lemma:sion-minimax}
\looseness=-1
Let $n, m \in \N$.
Let $\cX \subset \R^n$ be a compact convex set and $\cY \subset \R^m$ a convex set. 
Suppose that $f: \cX \times \cY \to \R$ satisfies the following two properties:
\begin{itemize}
    \item $f(x, \cdot)$ is upper semicontinuous and quasi-concave on $\cY$ for any $x \in \cX$.
    \item $f(\cdot, y)$ is lower semicontinuous and quasi-convex on $\cX$ for any $y \in \cY$.
\end{itemize}
Then, 
$
\min _{x \in \cX} \sup _{y \in \cY} f(x, y)=\sup _{y \in \cY} \min _{x \in \cX} f(x, y)
$.
\end{lemma}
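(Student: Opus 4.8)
The plan is to prove the two inequalities separately. The direction $\sup_{y \in \cY}\min_{x \in \cX} f(x,y) \le \min_{x \in \cX}\sup_{y \in \cY} f(x,y)$ (weak duality) is immediate and uses none of the hypotheses: for every $x' \in \cX$ and $y \in \cY$ we have $\min_{x} f(x,y) \le f(x',y) \le \sup_{y'} f(x',y')$, so fixing $y$ and minimizing the right side over $x'$ gives $\min_x f(x,y) \le \min_{x'}\sup_{y'} f(x',y')$; taking the supremum over $y$ yields the claim. The entire content is therefore the reverse inequality.

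For the reverse direction, set $v \df \sup_{y \in \cY}\min_{x \in \cX} f(x,y)$ and fix any $\alpha > v$; it suffices to show $\min_x \sup_y f(x,y) \le \alpha$ and then let $\alpha \downarrow v$. I would introduce the sublevel sets $X_y^\alpha \df \{x \in \cX : f(x,y) \le \alpha\}$. Because $f(\cdot, y)$ is lower semicontinuous these sets are closed, because it is quasi-convex they are convex, and since $\cX$ is compact they are compact. The reformulation that drives everything is the equivalence
$$\min_{x\in\cX}\sup_{y\in\cY} f(x,y) \le \alpha \iff \bigcap_{y \in \cY} X_y^\alpha \ne \emptyset,$$
so by the finite intersection property of the compact family $\{X_y^\alpha\}_{y\in\cY}$ it is enough to show that every finite subfamily $X_{y_1}^\alpha, \dots, X_{y_k}^\alpha$ has nonempty intersection.

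I would establish this finite intersection property by induction on $k$, the essential case being $k=2$. Nonemptiness of $X_{y_1}^\alpha \cap X_{y_2}^\alpha$ is equivalent to $\min_x \max\{f(x,y_1), f(x,y_2)\} \le \alpha$, so I parametrize the segment $y_t \df (1-t)y_1 + t y_2 \in \cY$ and note that $\min_x f(x, y_t) \le v < \alpha$ for every $t \in [0,1]$. Arguing by contradiction from disjointness of the two sublevel sets, a connectedness argument along $[0,1]$ — tracking, for an intermediate threshold $\beta \in (v,\alpha)$, which of $X_{y_1}^\alpha, X_{y_2}^\alpha$ the compact convex set $X_{y_t}^\beta$ can meet, and using both the lower semicontinuity and quasi-convexity of $f(\cdot, y)$ together with the quasi-concavity of $f(x,\cdot)$ — would produce a single direction $y^\star$ with $\min_x f(x,y^\star) > v$, contradicting the definition of $v$. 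The inductive step is analogous, with $y_1$ replaced by a suitable point of $\conv\{y_1,\dots,y_k\}$. Compactness of $\cX$ then upgrades the finite intersection property to $\bigcap_{y} X_y^\alpha \ne \emptyset$, yielding an $x^\star$ with $\sup_y f(x^\star,y)\le\alpha$.

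The main obstacle is precisely this two-point (and inductive) core. Unlike the bilinear von Neumann setting, we have no averaging or linear-algebraic duality to fall back on, so the delicate part is the separation-and-connectedness analysis on the segment $[y_1,y_2]$, where one must carefully deploy both semicontinuity hypotheses alongside quasi-convexity in $x$ and quasi-concavity in $y$. Everything else — weak duality, the sublevel-set reformulation, and the passage from finite to arbitrary intersections via compactness — is routine. Since the theorem is classical, in the paper itself I would simply cite \citep{sion1958general}; the sketch above records the route one would take to reprove it from first principles.
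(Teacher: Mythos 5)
The paper does not prove this lemma at all---it is stated as a known result with the citation to \citet{sion1958general}---and your proposal ultimately defers to the same citation, so your treatment matches the paper's. Your accompanying sketch (weak duality, closed convex compact sublevel sets, reduction to the finite intersection property, the two-point connectedness argument at an intermediate level $\beta$, then induction and compactness) is a faithful outline of the classical Sion--Komiya proof, with the genuinely delicate kernel correctly identified.
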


\begin{lemma}[e.g., \textbf{Problem 9.13, Page 99} in \citet{clarke2008nonsmooth}]\label{lemma:danskin's theorem}
\looseness=-1
Let $\cY \subset \R^m$ be a compact set and $f: \R^d \times \cY \to \R$ be a continuous function of two arguments.
Consider a point $\bar{x} \in \R^d$ and let $\Omega(\bar{x}) \subset \R^d$ be its neighborhood.
For any $(x, y) \in \Omega(\bar{x}) \times \cY$, suppose that the gradient $\nabla_x f(x, y)$ exists and is jointly continuous.

\looseness=-1
Let $h(\bar{x}) \df \max_{y \in \cY} f(\bar{x}, y)$.
Then, the subgradient of $h$ at $\bar{x}$ is given by
$$
\partial h(\bar{x})=\conv\brace*{\nabla_x f(\bar{x}, y) \given y \in \argmax_{y \in \cY} f(\bar{x}, y)} \;.
$$
\end{lemma}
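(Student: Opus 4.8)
The plan is to reduce the statement to a computation of the directional derivative of $h$ and then match it against the support function of the candidate convex set. Write $Y(\bar x) \df \argmax_{y \in \cY} f(\bar x, y)$, which is nonempty and compact by continuity of $f$ and compactness of $\cY$, and set $C \df \conv\brace*{\nabla_x f(\bar x, y) \given y \in Y(\bar x)}$, which is closed and convex. I would first establish the Danskin directional-derivative formula
$$
h'(\bar x; v) \df \lim_{t \downarrow 0}\frac{h(\bar x + tv) - h(\bar x)}{t} = \max_{y \in Y(\bar x)}\left\langle \nabla_x f(\bar x, y), v\right\rangle \quad \forall v \in \R^d\;,
$$
and then read off $\partial h(\bar x) = C$ from it, where $\partial$ is the Fr\'echet subdifferential of \cref{def:subgradient}.

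For the formula, the lower bound is immediate: for any $y^\star \in Y(\bar x)$, feasibility gives $h(\bar x + tv) \geq f(\bar x + tv, y^\star)$ while $h(\bar x) = f(\bar x, y^\star)$, so differentiability of $f(\cdot, y^\star)$ yields $\liminf_{t \downarrow 0} t^{-1}(h(\bar x + tv) - h(\bar x)) \geq \langle \nabla_x f(\bar x, y^\star), v\rangle$; taking the supremum over $y^\star \in Y(\bar x)$ gives ``$\geq$''. For the upper bound I would pick a maximizer $y_t \in \argmax_{y \in \cY} f(\bar x + tv, y)$, apply the mean value theorem along the segment to get $h(\bar x + tv) - h(\bar x) \leq f(\bar x + tv, y_t) - f(\bar x, y_t) = t\langle \nabla_x f(\bar x + \theta_t t v, y_t), v\rangle$ for some $\theta_t \in (0,1)$, and then pass to the limit. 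Here compactness of $\cY$ lets me extract $y_{t_k} \to \bar y$ along a subsequence; continuity of $h$ and $f$ forces $\bar y \in Y(\bar x)$ (upper semicontinuity of the $\argmax$), and joint continuity of $\nabla_x f$ gives $\nabla_x f(\bar x + \theta_{t_k} t_k v, y_{t_k}) \to \nabla_x f(\bar x, \bar y)$, which delivers ``$\leq$''.

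Given the formula, the right-hand side is exactly the support function $\sigma_C(v) = \max_{g \in C}\langle g, v\rangle$. For the inclusion $C \subseteq \partial h(\bar x)$ I would verify directly from \cref{def:subgradient} that each generator is a Fr\'echet subgradient: using $h(x') \geq f(x', y^\star)$ and $h(\bar x) = f(\bar x, y^\star)$, the difference quotient defining $\partial h(\bar x)$ is bounded below by the first-order remainder of $f(\cdot, y^\star)$, which is $o(\norm*{x' - \bar x}_2)$, so its $\liminf$ is nonnegative; since the Fr\'echet subdifferential is always convex, the entire hull $C$ then lies in $\partial h(\bar x)$. For the reverse inclusion, any $u \in \partial h(\bar x)$ restricted to ray approaches $x' = \bar x + tv$ satisfies $\langle u, v\rangle \leq h'(\bar x; v) = \sigma_C(v)$ for every $v \in \R^d$, and because $C$ is closed and convex, the support-function characterization forces $u \in C$.

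I expect the main obstacle to be the upper-bound half of the directional-derivative formula: it is the only step that genuinely uses compactness of $\cY$, joint continuity of $\nabla_x f$, and the upper semicontinuity of the maximizer set, and it requires the subsequence/limit argument to rule out maximizers drifting outside $Y(\bar x)$. Everything else (the lower bound, the easy inclusion, and the support-function duality) is routine once the formula is in hand. A subtle point worth flagging is that the claim concerns the \emph{Fr\'echet} subdifferential rather than the Clarke subdifferential; the equality with the convex hull holds here only because the maximum structure makes $h$ directionally differentiable with a sublinear directional derivative, so the two notions coincide rather than the Fr\'echet one being strictly smaller.
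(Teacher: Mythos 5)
Your proposal is correct, but there is nothing in the paper to compare it against: the paper imports this lemma as a black box, citing \textbf{Problem 9.13} of \citet{clarke2008nonsmooth} and supplying no proof of its own. What you have written is a sound, self-contained version of the classical Danskin argument, which is essentially the intended solution to that exercise: establish the directional-derivative formula $h'(\bar x; v) = \max_{y \in Y(\bar x)} \langle \nabla_x f(\bar x, y), v\rangle$ (easy lower bound via feasibility of each $y^\star \in Y(\bar x)$; upper bound via the mean value theorem, compactness of $\cY$, upper semicontinuity of the $\argmax$ map, and joint continuity of $\nabla_x f$), then identify the right-hand side as the support function of $C = \conv\brace*{\nabla_x f(\bar x, y) \given y \in Y(\bar x)}$ and conclude $\partial h(\bar x) = C$ by two inclusions. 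Your two flagged subtleties are exactly the right ones: the subsequence argument ruling out maximizers drifting outside $Y(\bar x)$ is where all the hypotheses are consumed, and the Fr\'echet-vs-Clarke distinction matters since the paper's \cref{def:subgradient} is the Fr\'echet subdifferential — the equality with the convex hull holds precisely because $h$ is directionally differentiable with the sublinear derivative $\sigma_C$, and because each generator is verified to be a Fr\'echet subgradient directly from the definition (using $h(x') \geq f(x', y^\star)$ with equality at $\bar x$, plus Fr\'echet differentiability of $f(\cdot, y^\star)$, which follows from continuity of $\nabla_x f$ on $\Omega(\bar x)$).

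Three routine points deserve one line each in a polished write-up. First, closedness of $C$, which your reverse inclusion needs, should be justified: $\brace*{\nabla_x f(\bar x, y) \given y \in Y(\bar x)}$ is compact as the continuous image of the compact set $Y(\bar x)$, and in $\R^d$ the convex hull of a compact set is compact by Carath\'eodory's theorem. Second, in the upper-bound half, the limit should be taken along a sequence $t_k \downarrow 0$ realizing the $\limsup$ of the difference quotient before extracting the convergent subsequence of $y_{t_k}$. Third, the mean value theorem along the segment is only available once $t$ is small enough that $[\bar x, \bar x + tv] \subset \Omega(\bar x)$, where $\nabla_x f$ is guaranteed to exist. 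None of these affects the validity of the argument.
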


\begin{lemma}\label{lemma:finite-danskin}
\looseness=-1
Let $N \in \N$.
Let $f_i: \cX \to \R$ for $i \in \oneN$ be $\ell$-weakly convex functions for some $\ell \geq 0$.
Define the pointwise maximum function $f: \cX \to \R$ as
$$
f(x) = \max \brace*{f_1(x), \cdots , f_N(x)} \quad\forall x \in \cX\;.
$$
Then, for any $x \in \cX$, 
$$
\partial f(x) = \conv \brace*{\grad \in \R^d \given \grad \in \partial f_i(x), f_i(x) = f(x)}\;.
$$
\end{lemma}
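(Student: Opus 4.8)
The plan is to reduce the statement to the classical convex-analytic max-rule by means of the standard ``quadratic shift'' that turns a weakly convex function into a convex one. For each $i$ define $g_i(x) \df f_i(x) + \frac{\ell}{2}\norm*{x}_2^2$. By \cref{def:weak-convex}, each $g_i$ is convex, and since $g(x) \df f(x) + \frac{\ell}{2}\norm*{x}_2^2 = \max_i g_i(x)$ is a pointwise maximum of finitely many convex functions, $g$ is convex as well (so, in passing, $f$ is $\ell$-weakly convex too). All the $g_i$ and $g$ are finite-valued, hence continuous, on the interior of $\cX$.

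First I would establish the exact correspondence between the Fr\'echet subgradients of $f$ and the ordinary convex subgradients of $g$. Writing $\norm*{x'}_2^2 - \norm*{x}_2^2 = \norm*{x'-x}_2^2 + 2\langle x'-x, x\rangle$, a direct substitution gives, for any candidate $u$,
\[
\frac{f(x') - f(x) - \langle u, x'-x\rangle}{\norm*{x'-x}_2}
= \frac{g(x') - g(x) - \langle u + \ell x, x'-x\rangle}{\norm*{x'-x}_2} - \frac{\ell}{2}\norm*{x'-x}_2 .
\]
Since the last term vanishes as $x' \to x$, taking the $\liminf$ in \cref{def:subgradient} shows that $u \in \partial f(x)$ if and only if $u + \ell x$ belongs to the Fr\'echet subdifferential of $g$ at $x$. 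Because $g$ is convex, its Fr\'echet subdifferential coincides with the convex subdifferential; hence $\partial f(x) = \partial g(x) - \ell x$, and identically $\partial f_i(x) = \partial g_i(x) - \ell x$ for each $i$.

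Next I would invoke the classical max-rule for finitely many finite convex functions (e.g., \citet{rockafellar2009variational}): $\partial g(x) = \conv\brace*{\partial g_i(x) \given g_i(x) = g(x)}$. The active index set $\brace*{i \given g_i(x)=g(x)}$ equals $\brace*{i \given f_i(x)=f(x)}$, since $g_i$ and $f_i$ differ only by the common term $\frac{\ell}{2}\norm*{x}_2^2$. Finally I would translate everything back by $-\ell x$: because translation commutes with both the union over the active set and the convex hull, $\partial g_i(x) = \partial f_i(x) + \ell x$ yields $\conv\brace*{\partial g_i(x) \given i\ \text{active}} = \conv\brace*{\partial f_i(x) \given i\ \text{active}} + \ell x$, and subtracting $\ell x$ gives exactly $\partial f(x) = \conv\brace*{\grad \given \grad \in \partial f_i(x),\ f_i(x)=f(x)}$.

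The algebraic shift identity and the translation bookkeeping are routine. The main obstacle is to apply the convex max-rule cleanly: the version I rely on requires finite-valued convex functions (so that each is continuous and subdifferentiable) and that $x$ be an interior point of the common domain, which matches the open-set convention under which the Fr\'echet subgradient in \cref{def:subgradient} is defined. The finiteness of the index set $N$ is precisely what guarantees that the convex hull appearing in the max-rule is closed and that no extra constraint-qualification is needed, so the identity holds without further hypotheses.
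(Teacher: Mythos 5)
Your proof is correct, but it takes a genuinely different route from the paper: the paper proves \cref{lemma:finite-danskin} by a one-line citation to \textbf{Theorem 1.3} and \textbf{Theorem 1.5} of \citet{mikhalevich2024methods}, whereas you give a self-contained reduction to the classical convex max-rule via the quadratic shift $g_i = f_i + \frac{\ell}{2}\norm{\cdot}_2^2$. Your key computation --- that $u \in \partial f(x)$ iff $u + \ell x$ is a Fr\'echet subgradient of $g$ at $x$, because the residual term $-\frac{\ell}{2}\norm{x'-x}_2$ vanishes in the $\liminf$ of \cref{def:subgradient} --- is exactly right, and it makes explicit the correspondence between Fr\'echet subdifferentials of weakly convex functions and convex subdifferentials of their shifts, a fact the paper uses implicitly throughout (e.g., in the weak-convexity arguments of \cref{lemma:weak-convexity}) without ever isolating it. The remaining ingredients (Fr\'echet $=$ convex subdifferential for finite convex functions, the max-rule for finitely many finite convex functions with no constraint qualification, invariance of active sets under the common shift, and translation commuting with $\conv$) are all standard and correctly deployed. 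What each approach buys: the paper's citation is shorter and delegates the nonsmooth-analysis technicalities to a specialized reference; your argument is elementary, verifiable from the paper's own definitions, and more transparent about where finiteness of $N$ enters. One small caveat, which you rightly flag but which is really the paper's issue rather than yours: the section header declares $\cX$ compact while \cref{def:subgradient} requires $\cX$ open, so both your proof and the paper's statement should strictly be read at interior points (or with $\cX$ open), since the convex max-rule and the Fr\'echet--convex identification can fail at boundary points of the domain.
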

\begin{proof}
\looseness=-1
The claim directly follows from \textbf{Theorem 1.3} and \textbf{Theorem 1.5} in \citet{mikhalevich2024methods}.
\end{proof}

\begin{lemma}[Maximum difference inequality]\label{lemma:max difference}
Let $N \in \N$.
For two sets of real numbers $\brace*{x_i}_{i\in \oneN}$ and $\brace*{y_i}_{i \in \oneN}$, where $x_i, y_i \in \R$, 
$$
\abs*{
\max_{i \in \oneN}x_i
- \max_{i' \in \oneN}y_{i'}
} \leq 
\max_{i \in \oneN}
\abs*{ x_i - y_i }\;.
$$
\end{lemma}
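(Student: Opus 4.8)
The plan is to exploit the symmetry of the claimed inequality and reduce the two-sided absolute-value bound to a one-sided one. Since the right-hand side $\max_{i \in \oneN}\abs*{x_i - y_i}$ is manifestly invariant under interchanging the families $\brace*{x_i}$ and $\brace*{y_i}$, and the left-hand side is an absolute value (hence also invariant under this swap), I would first assume without loss of generality that $\max_{i \in \oneN} x_i \geq \max_{i \in \oneN} y_i$. Under this ordering the outer absolute value on the left can be dropped, so it suffices to bound the nonnegative quantity $\max_{i \in \oneN} x_i - \max_{i \in \oneN} y_i$ from above.

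Next I would fix an index $i^\star \in \argmax_{i \in \oneN} x_i$ attaining the first maximum, so that $\max_{i \in \oneN} x_i = x_{i^\star}$. The single key observation is that a maximum dominates each of its arguments; in particular $\max_{i \in \oneN} y_i \geq y_{i^\star}$. Combining these facts yields the chain
$$
\max_{i \in \oneN} x_i - \max_{i \in \oneN} y_i = x_{i^\star} - \max_{i \in \oneN} y_i \leq x_{i^\star} - y_{i^\star} \leq \abs*{x_{i^\star} - y_{i^\star}} \leq \max_{i \in \oneN}\abs*{x_i - y_i}\;,
$$
which is precisely the desired estimate under the working assumption.

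Because the argument used only that the first maximum is the larger of the two, the complementary case $\max_{i \in \oneN} y_i \geq \max_{i \in \oneN} x_i$ follows verbatim by interchanging the roles of $x$ and $y$. Together the two cases recover the absolute-value inequality in full generality.

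This lemma is elementary and I do not anticipate a genuine obstacle. The only point requiring minor care is making the ``without loss of generality'' reduction rigorous, i.e., explicitly verifying that both sides of the inequality are unchanged under the swap $x \leftrightarrow y$ so that treating a single ordering genuinely suffices. Everything else reduces to the one-line chain of inequalities driven by the defining property that a maximum is at least as large as any of its entries.
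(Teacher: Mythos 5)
Your proof is correct and is essentially the paper's argument: both reduce to a one-sided bound via the symmetry of the two sides and then conclude with a single chain of elementary inequalities. The only cosmetic difference is the middle step---you fix a witness index $i^\star$ attaining $\max_i x_i$ and use $\max_i y_i \geq y_{i^\star}$, whereas the paper writes $x_i = (x_i - y_i) + y_i$ and invokes $\max_i (x_i - y_i + y_i) \leq \max_i (x_i - y_i) + \max_{i'} y_{i'}$; the two steps are interchangeable and equally rigorous.
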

\begin{proof}
For any $i \in \oneN$,
$$
\begin{aligned}
&\max_{i\in \oneN}x_i = \max_{i\in \oneN} x_i - y_i + y_i \leq \max_{i\in \oneN} (x_i - y_i) + \max_{i' \in \oneN} y_{i'}\\
\Longrightarrow
&\max_{i\in \oneN}x_i - \max_{i'\in \oneN}y_{i'}
\leq \max_{i\in \oneN}x_i - y_i\;.
\end{aligned}
$$

By the symmetry of $x_i$ and $y_i$, we have
$\max_{i\in \oneN}y_i - \max_{i'\in \oneN}x_{i'} \leq \max_{i\in \oneN}y_i - x_i$.
Therefore, 
$$
\abs*{
\max_{i \in \oneN}x_i
- \max_{i' \in \oneN}y_{i'}
} \leq 
\max_{i \in \oneN}
\abs*{ x_i - y_i }\;.
$$
\end{proof}

\begin{lemma}[Point-wise maximum preserves weak convexity]\label{lemma:point-wise maximum}
Let $h: \cX \to \R$ and $f: \cX \to \R$ be $\ell_h$- and $\ell_f$-weakly convex functions, respectively.
Then, $g: \cX \to \R$ defined by $g(x) = \max\{h(x), f(x)\}$ for any $x \in \cX$ is $\ell$-weakly convex, where $\ell \df \max\{\ell_h, \ell_f\}$.
\end{lemma}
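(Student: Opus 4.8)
The plan is to exploit the equivalence recorded at the end of \cref{def:weak-convex}: a function $\phi$ is $\ell$-weakly convex if and only if $\phi(x) + \frac{\ell}{2}\norm*{x}_2^2$ is convex on $\cX$. This reduces the assertion about weak convexity of the pointwise maximum to the classical fact that a pointwise maximum of convex functions is convex, so the whole argument is a short ``quadratic lift.''

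First I would lift each of $h$ and $f$ by the \emph{common} quadratic with the larger parameter $\ell \df \max\{\ell_h, \ell_f\}$. Since $h$ is $\ell_h$-weakly convex, $h(x) + \frac{\ell_h}{2}\norm*{x}_2^2$ is convex; writing $h(x) + \frac{\ell}{2}\norm*{x}_2^2 = \paren*{h(x) + \frac{\ell_h}{2}\norm*{x}_2^2} + \frac{\ell - \ell_h}{2}\norm*{x}_2^2$ exhibits it as the sum of a convex function and a nonnegative multiple ($\ell - \ell_h \geq 0$) of the convex map $x \mapsto \norm*{x}_2^2$, hence convex. The identical computation with $\ell - \ell_f \geq 0$ shows that $f(x) + \frac{\ell}{2}\norm*{x}_2^2$ is convex as well.

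Next I would push the common quadratic through the maximum. Because adding the same term $\frac{\ell}{2}\norm*{x}_2^2$ to both arguments does not change which one is the larger, we have the pointwise identity $g(x) + \frac{\ell}{2}\norm*{x}_2^2 = \max\brace*{h(x) + \frac{\ell}{2}\norm*{x}_2^2,\; f(x) + \frac{\ell}{2}\norm*{x}_2^2}$. The right-hand side is a pointwise maximum of the two convex functions from the previous step and is therefore convex. Invoking the equivalence in \cref{def:weak-convex} once more, the convexity of $g(x) + \frac{\ell}{2}\norm*{x}_2^2$ is precisely the statement that $g$ is $\ell$-weakly convex, which is the claim.

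I do not expect a genuine obstacle here; the only place requiring care is the bookkeeping with the two distinct weak-convexity constants, which is resolved by selecting the larger constant $\ell$ and absorbing the surplus $\frac{\ell-\ell_h}{2}\norm*{x}_2^2$ (respectively $\frac{\ell-\ell_f}{2}\norm*{x}_2^2$) into the convex part. An alternative route working directly from the subgradient definition in \cref{def:weak-convex} would require identifying $\partial g$ with the convex hull of the active subgradients, in the spirit of \cref{lemma:finite-danskin}, and then verifying the weak-convexity inequality case by case depending on which of $h, f$ attains the maximum at $x$ and at $x'$; this is strictly more cumbersome, so the quadratic-lift route is preferable.
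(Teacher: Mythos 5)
Your proof is correct and is essentially the paper's own argument: the paper proves the same chain pointwise by expanding the convex-combination inequality, absorbing the surplus $\tfrac{\ell-\ell_h}{2}\norm*{x}_2^2$ (resp.\ $\tfrac{\ell-\ell_f}{2}\norm*{x}_2^2$) via convexity of the squared norm and pushing the common quadratic through the maximum, which is exactly your ``quadratic lift.'' Your version merely packages this through the equivalence noted in \cref{def:weak-convex}, and in fact makes the step from the constants $\ell_h, \ell_f$ to the common $\ell$ slightly more explicit than the paper does.
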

\begin{proof}
By the definition of weak convexity, for any $\theta \in [0, 1]$ and $x, y \in \cX$, 
$$
\begin{aligned}
h(\theta x + (1-\theta) y) + \frac{\ell_h}{2}\norm*{\theta x + (1-\theta)y}^2_2 \leq 
\theta \paren*{h(x) + \frac{\ell_h}{2}\norm*{x}^2_2}
+
(1-\theta) \paren*{h(y) + \frac{\ell_h}{2}\norm*{y}^2_2}\;.
\end{aligned}
$$
A similar inequality holds for $f$.
Then, 
$$
\begin{aligned}
&g(\theta x + (1-\theta) y) + \frac{\ell}{2}\norm*{\theta x + (1-\theta)y}^2_2\\
=
&\max\{h(\theta x + (1-\theta) y), f(\theta x + (1-\theta) y)\} + \frac{\ell}{2}\norm*{\theta x + (1-\theta)y}^2_2 \\
= &
\max\brace*{h(\theta x + (1-\theta) y)+ \frac{\ell}{2}\norm*{\theta x + (1-\theta)y}^2_2, f(\theta x + (1-\theta) y)+ \frac{\ell}{2}\norm*{\theta x + (1-\theta)y}^2_2} \\
\leq &
\max\brace*{
\theta h(x) + (1-\theta) h(y) + \frac{\ell}{2}\paren*{\theta \norm*{x}^2_2  +(1-\theta)\norm*{y}^2_2},
\theta f(x) + (1-\theta) f(y) + \frac{\ell}{2}\paren*{\theta \norm*{x}^2_2  +(1-\theta)\norm*{y}^2_2}
}\\
= &
\max\brace*{
\theta h(x) + (1-\theta) h(y),
\theta f(x) + (1-\theta) f(y)
}
+ \frac{\ell}{2}\paren*{\theta \norm*{x}^2_2  +(1-\theta)\norm*{y}^2_2}\\
\leq & 
\theta \paren*{\max\brace*{
h(x), f(x)}
+ \frac{\ell}{2}\norm*{x}_2^2}
+ (1-\theta) \paren*{\max\brace*{ h(y), f(y) }
+ \frac{\ell}{2}\norm*{y}^2_2}\\
=
&\theta \paren*{g(x) + \frac{\ell}{2}\norm*{x}_2^2} 
+ (1-\theta) \paren*{g(y) + \frac{\ell}{2}\norm*{y}_2^2} \;.
\end{aligned}
$$
Therefore, $g$ is $\ell$-weakly convex.
\end{proof}

\begin{lemma}\label{lemma:normal-cone-subgrad}
\looseness=-1
Let $N_{\cX}(x)$ be the normal cone of $\cX$ at $x \in \cX$, defined as  
$$N_{\cX}(x) \df \brace*{g \in \R^d \given \langle g, y\rangle \leq \langle g, x\rangle \quad \forall y \in \cX}\;.$$
Define the indicator function $\dI_{\cX}: \R^d \to \R$ such that
$$
\dI_{\cX}(x) = 
\begin{cases}
0 & \text{ if }\; x \in \cX\\    
\infty & \text{ otherwise }
\end{cases}\;.
$$
Then, $\partial \dI_\cX(x) = N_\cX(x)$ for any $x \in \cX$.
\end{lemma}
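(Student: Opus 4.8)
The plan is to unfold the Fr\'echet subgradient from \cref{def:subgradient} applied to $f=\dI_\cX$ at a point $x\in\cX$ and show that the resulting condition on $u$ coincides exactly with membership in $N_\cX(x)$. Since $\dI_\cX(x)=0$, the difference quotient in \cref{def:subgradient} splits into two cases according to whether $x'\in\cX$. When $x'\notin\cX$ we have $\dI_\cX(x')=\infty$, so the numerator $\dI_\cX(x')-\langle u,x'-x\rangle$ is $+\infty$ and the quotient is $+\infty$; such points therefore never lower the $\liminf$. When $x'\in\cX$ we have $\dI_\cX(x')=0$, and the quotient reduces to $-\langle u,x'-x\rangle/\norm*{x'-x}_2$. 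Hence the first step records the equivalence
$$
u\in\partial\dI_\cX(x)\iff \liminf_{\substack{x'\to x,\ x'\in\cX\\ x'\neq x}}\frac{-\langle u,x'-x\rangle}{\norm*{x'-x}_2}\ \geq\ 0\;.
$$

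For the inclusion $N_\cX(x)\subseteq\partial\dI_\cX(x)$, I would take $u\in N_\cX(x)$, so that $\langle u,x'-x\rangle\leq 0$ for every $x'\in\cX$ by the definition of the normal cone. Then every quotient in the display above is nonnegative, its $\liminf$ is nonnegative, and thus $u\in\partial\dI_\cX(x)$. This direction needs no extra structure on $\cX$.

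The reverse inclusion $\partial\dI_\cX(x)\subseteq N_\cX(x)$ is the crux and is where convexity of $\cX$ enters. Given $u\in\partial\dI_\cX(x)$ and an arbitrary $y\in\cX$, I would approach $x$ along the segment by setting $x_t\df x+t(y-x)$ for $t\in(0,1]$; convexity of $\cX$ guarantees $x_t\in\cX$, while $x_t\to x$ as $t\to 0^+$. Along this sequence the difference quotient is \emph{constant}, equal to $-\langle u,y-x\rangle/\norm*{y-x}_2$, so this value $c$ is a subsequential limit of the quotient. Since $c\geq\liminf\geq 0$, we obtain $\langle u,y-x\rangle\leq 0$; as $y\in\cX$ was arbitrary, $u\in N_\cX(x)$.

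The main obstacle is precisely this reverse inclusion: without convexity the identity fails, since for a two-point set such as $\{0,1\}\subset\R$ one finds $\partial\dI_\cX(0)=\R$ while $N_\cX(0)=(-\infty,0]$, so the line-segment construction is essential and relies on $\cX$ being convex (as it is for the policy class $\Pi$ in the paper's application). A secondary technical point is the careful handling of the $+\infty$ values of $\dI_\cX$ in \cref{def:subgradient}, ensuring that points outside $\cX$ contribute $+\infty$ to the quotient and are therefore harmless to the $\liminf$.
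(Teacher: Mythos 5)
Your proof is correct, and on the crux inclusion it takes a genuinely different---and more careful---route than the paper's. The paper's proof opens by asserting that every $g \in \partial \dI_\cX(x)$ satisfies the \emph{global} inequality $\dI_{\cX}(y) \geq \dI_{\cX}(x) + \langle g, y - x \rangle$ for all $y \in \R^d$, after which both inclusions fall out in a line: $\partial \dI_\cX(x) \subseteq N_\cX(x)$ by contradiction (a $y' \in \cX$ with $\langle g, y'-x\rangle > 0$ would violate the inequality), and the converse because any normal-cone element makes the global inequality hold trivially. But that opening assertion is not a consequence of the Fr\'echet definition (\cref{def:subgradient}), which only controls the difference quotient asymptotically as $x' \to x$; it is the \emph{convex} subgradient inequality, valid precisely because $\dI_\cX$ is convex when $\cX$ is convex---a hypothesis the appendix preamble omits, stating only compactness. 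You instead work from the raw $\liminf$ definition throughout: your reduction of the quotient to $x' \in \cX$ and the easy inclusion $N_\cX(x) \subseteq \partial \dI_\cX(x)$ match the paper in substance, but for $\partial \dI_\cX(x) \subseteq N_\cX(x)$ you extract $\langle u, y - x\rangle \leq 0$ from the constant difference quotient along the segment $x + t(y-x)$, $t \in (0,1]$, which both requires and makes explicit the convexity of $\cX$. Your counterexample $\cX = \{0, 1\} \subset \R$, where $\partial \dI_\cX(0) = \R$ strictly contains $N_\cX(0) = (-\infty, 0]$ (since every punctured neighborhood of $0$ small enough meets only points with $\dI_\cX = \infty$), confirms that convexity is genuinely needed under the paper's Fr\'echet definition, so your argument effectively patches a looseness the paper's proof glosses over. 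Since the lemma is invoked only in \cref{lemma:moreau-sugrad-bound}, ultimately with $\cX = \Pi$, which is convex, the added hypothesis is harmless downstream---but it is the right hypothesis to record, and your proof is the one that shows why.
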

\begin{proof}
\looseness=-1
Note that any $g \in \partial \dI_\cX(x)$ satisfies
\begin{equation}\label{eq:normal-cone-tmp}
\dI_{\cX}(y) \geq \dI_{\cX}(x) + \langle g, y - x \rangle \quad \forall y \in \R^d\;.
\end{equation}

\looseness=-1
Suppose that $g \notin N_\cX(x)$.
Then, there exists $y' \in \cX$ such that $\langle g, x \rangle < \langle g, y'\rangle$, which contradicts \cref{eq:normal-cone-tmp}.
Therefore, $g \in N_\cX(x)$ for any $g \in \partial \dI_\cX(x)$ and thus $\partial \dI_\cX (x) \subseteq N_\cX(x)$.

\looseness=-1
Consider $g \in N_\cX(x)$.
It satisfies $0 \geq \langle g, y - x \rangle$ for any $y \in \cX$. 
Since $x \in \cX$ and by the definition of $\dI_{\cX}$, \cref{eq:normal-cone-tmp} holds for any $y \in \R^d$. 
Therefore, $N_\cX(x) \subseteq \partial \dI_\cX (x)$.
This concludes the proof.
\end{proof}

\begin{lemma}\label{lemma:moreau-sugrad-bound}
Let $h: \cX \to \R$ be an $\ell$-weakly convex function. 
For $0 < \tau < 1 / \ell$, define 
$$
\widebar{x}_\tau \in \argmin_{x' \in \cX} h(x') + \frac{1}{2\tau} \norm*{x-x'}^2_2\;.
$$
Then, there exists a subgradient $\grad \in \partial h(\widebar{x}_\tau)$ such that, for any $y \in \cX$,
$$
\left\langle\grad, \widebar{x}_\tau - y\right\rangle \leq \left\langle \nabla (\ME_\tau \comp h)(x) , \widebar{x}_\tau - y \right \rangle 
$$
\end{lemma}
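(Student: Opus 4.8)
The plan is to read off the claimed inequality from the first-order optimality condition that characterizes $\widebar{x}_\tau$ as a constrained minimizer, together with the normal-cone description supplied by \cref{lemma:normal-cone-subgrad}. Throughout I treat $\cX$ as convex (in the application $\cX=\Pi$ is a product of simplices), so that the penalized objective $\phi(x') \df h(x') + \frac{1}{2\tau}\norm*{x-x'}_2^2$ is strongly convex whenever $\tau < 1/\ell$: by weak convexity (\cref{def:weak-convex}) the map $h(\cdot)+\frac{\ell}{2}\norm*{\cdot}_2^2$ is convex, and the residual quadratic carries curvature $\tfrac{1}{\tau}-\ell>0$. Hence $\widebar{x}_\tau$ is the unique minimizer, consistent with the $\argmin$ in the statement, and all of the nonconvexity of $h$ is absorbed into the quadratic penalty.

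First I would rewrite $\min_{x'\in\cX}\phi(x')$ as the unconstrained problem $\min_{x'\in\R^d}\,\phi(x')+\dI_\cX(x')$ and apply Fermat's rule, $0\in\partial(\phi+\dI_\cX)(\widebar{x}_\tau)$. Since the quadratic part of $\phi$ is smooth and the remaining terms are convex after the shift above, the subdifferential sum rule applies and yields $0\in\partial h(\widebar{x}_\tau)+\frac{1}{\tau}(\widebar{x}_\tau-x)+\partial\dI_\cX(\widebar{x}_\tau)$, where $\frac{1}{\tau}(\widebar{x}_\tau-x)$ is the gradient of the penalty at $\widebar{x}_\tau$. Replacing $\partial\dI_\cX(\widebar{x}_\tau)$ by the normal cone $N_\cX(\widebar{x}_\tau)$ via \cref{lemma:normal-cone-subgrad}, I extract a subgradient $\grad\in\partial h(\widebar{x}_\tau)$ and a vector $w\in N_\cX(\widebar{x}_\tau)$ with $\grad+\frac{1}{\tau}(\widebar{x}_\tau-x)+w=0$; this $\grad$ is the witness required by the lemma.

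Next I would substitute the Moreau-envelope gradient. By \cref{lemma:Moreau-gradient}, $\nabla(\ME_\tau\comp h)(x)=\frac{1}{\tau}(x-\widebar{x}_\tau)$, so the optimality identity rearranges to $\grad=\nabla(\ME_\tau\comp h)(x)-w$. Pairing with $\widebar{x}_\tau-y$ for an arbitrary $y\in\cX$ gives
$$\left\langle\grad,\widebar{x}_\tau-y\right\rangle=\left\langle\nabla(\ME_\tau\comp h)(x),\widebar{x}_\tau-y\right\rangle-\left\langle w,\widebar{x}_\tau-y\right\rangle.$$
The defining property of the normal cone (\cref{lemma:normal-cone-subgrad}) gives $\langle w,y\rangle\leq\langle w,\widebar{x}_\tau\rangle$, i.e. $\langle w,\widebar{x}_\tau-y\rangle\geq0$, so the subtracted term is nonnegative and dropping it yields the stated inequality.

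The main obstacle is the justification in the second step: applying Fermat's rule and the sum rule to the Fréchet subdifferential of the merely weakly convex objective. I would resolve this by using the weak-convexity decomposition $h=\psi-\frac{\ell}{2}\norm*{\cdot}_2^2$ with $\psi$ convex, so that $\phi+\dI_\cX$ becomes a genuinely convex function plus a smooth quadratic, for which the exact convex sum rule and Fermat's rule hold with no constraint qualification beyond convexity and compactness of $\cX$; the Fréchet subdifferential of $h$ then relates to that of $\psi$ through the smooth shift $\partial h(\cdot)=\partial\psi(\cdot)-\ell(\cdot)$, keeping every identity above consistent.
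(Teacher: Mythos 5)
Your proof is correct and follows essentially the same route as the paper's: recast the constrained prox problem as an unconstrained one with the indicator $\dI_{\cX}$, apply Fermat's rule at $\widebar{x}_\tau$, identify $\partial \dI_{\cX}(\widebar{x}_\tau)$ with the normal cone via \cref{lemma:normal-cone-subgrad}, substitute $\nabla(\ME_\tau \comp h)(x) = \frac{1}{\tau}(x - \widebar{x}_\tau)$ from \cref{lemma:Moreau-gradient}, and drop the nonnegative normal-cone pairing. Your explicit justification of the subdifferential sum rule via the decomposition $h = \psi - \frac{\ell}{2}\|\cdot\|_2^2$ with $\psi$ convex is in fact slightly more careful than the paper, which performs the split of $\partial\paren*{h + \dI_{\cX} + \frac{1}{2\tau}\|x - \cdot\|_2^2}$ into its three pieces without comment.
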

\begin{proof}
Let $f: \R^d \to \R$ be a function such that $f(x) = h(x) + \dI_{\cX}(x)$.

The Moreau envelope function of $f$ satisfies that, for any $x \in \R^d$, 
$$
\paren*{\ME_\tau \comp f}(x)
= \min_{x' \in \R^d} \brace*{h(x') + \dI_{\cX}(x') + \frac{1}{2\tau} \norm*{x - x'}^2_2}
= \min_{x' \in \cX} \brace*{h(x') +\frac{1}{2\tau} \norm*{x - x'}^2_2}\;.
$$
It holds that $\nabla \paren*{\ME_\tau \comp f}(x) = \frac{1}{\tau}(x - \widebar{x}_\tau)$ due to \cref{lemma:Moreau-gradient}.

\looseness=-1
Note that
$$
\widebar{x}_\tau 
\in \argmin_{x' \in \cX} h(x') + \frac{1}{2\tau} \norm*{x-x'}^2_2
= \argmin_{x' \in \R^d} h(x') + \dI_{\cX}(x') + \frac{1}{2\tau} \norm*{x-x'}^2_2\;.
$$
It is clear that $\widebar{x}_\tau$ is a minimizer of the function $\phi_x(x') \df h(x') + \dI_{\cX}(x') + \frac{1}{2\tau} \norm*{x-x'}^2_2$.
Therefore, it holds that $\bzero \in \partial \phi_{x}(\widebar{x}_\tau)$. Accordingly, 
$$
\left. \bzero \in \partial\paren*{h(y)+\dI_{\cX}(y)+\frac{1}{2 \tau}\norm*{x-y}_2^2}\right|_{y=\widebar{x}_\tau}
\Longrightarrow
-\frac{1}{\tau}\paren*{\widebar{x}_\tau - x}
\in \left. \partial\paren*{h(y)+\dI_{\cX}(y)}\right|_{y=\widebar{x}_\tau}\;.
$$

\looseness=-1
Due to \cref{lemma:normal-cone-subgrad}, $\partial \dI_\cX(x) = N_{\cX}(x)$. 
Therefore, there exists a subgradient $\grad \in \partial h(\widebar{x}_\tau)$ such that
$$
-\grad - \frac{1}{\tau}(\widebar{x}_\tau - x) \in N_\cX(\widebar{x}_\tau)\;.
$$
Since any $z \in N_\cX(\widebar{x}_\tau)$ satisfies $\langle z, y - \widebar{x}_\tau \rangle \leq 0$ for any $y \in \cX$, it holds that
$$
\left\langle-\grad, y-\widebar{x}_\tau\right\rangle \leq\left\langle\frac{1}{\tau}\left(\widebar{x}_\tau-x\right), y-\widebar{x}_\tau\right\rangle, \quad \forall y \in \cX\;.
$$
Then the claim follows from the fact that $\frac{1}{\tau}\left(x - \widebar{x}_\tau\right) = \nabla (\ME_\tau \comp h)(x)$ due to \cref{lemma:Moreau-gradient}.
\end{proof}

\begin{lemma}[Linear optimization on convex hull]\label{lemma:lp on convex hull}
Given $c \in \R^d$ and a compact set $\cX \subset \R^d$, it holds that
$$
\min_{x \in \cX} \langle c, x\rangle = \min_{x \in \conv\brace{\cX}} \langle c, x\rangle\;.
$$
\end{lemma}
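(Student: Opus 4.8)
The plan is to establish the claimed equality through two opposing inequalities, after first dispensing with the attainment question so that writing $\min$ (rather than $\inf$) on both sides is legitimate. Since $\langle c, \cdot\rangle$ is continuous and $\cX$ is compact, the left-hand minimum is attained by the extreme value theorem. For the right-hand side, I would note that $\conv\{\cX\}$ is itself compact in $\R^d$: by Carathéodory's theorem it is the continuous image of the compact set $\{\theta \in \R_+^{d+1} : \sum_i \theta_i = 1\} \times \cX^{d+1}$ under the convex-combination map $(\theta, x_1, \ldots, x_{d+1}) \mapsto \sum_i \theta_i x_i$, hence compact, so its minimum is attained as well.

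First I would prove the easy direction. Because $\cX \subseteq \conv\{\cX\}$, the feasible region on the right contains that on the left, and minimizing a fixed function over a larger set cannot increase its optimal value:
$$
\min_{x \in \conv\{\cX\}} \langle c, x\rangle \leq \min_{x \in \cX} \langle c, x\rangle\;.
$$

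For the reverse inequality I would exploit the linearity of $\langle c, \cdot\rangle$. Fix any $x \in \conv\{\cX\}$ and write it as a convex combination $x = \sum_{i=1}^{k} \theta_i x_i$ with $x_i \in \cX$, $\theta_i \geq 0$, and $\sum_i \theta_i = 1$, a representation guaranteed by the definition of the convex hull. Linearity gives $\langle c, x\rangle = \sum_i \theta_i \langle c, x_i\rangle$, and bounding each term below by $m \df \min_{x' \in \cX}\langle c, x'\rangle$ yields $\langle c, x\rangle \geq m \sum_i \theta_i = m$. Since this lower bound holds for every $x \in \conv\{\cX\}$, taking the minimum over such $x$ shows $\min_{x \in \conv\{\cX\}}\langle c, x\rangle \geq \min_{x \in \cX}\langle c, x\rangle$, and combining the two inequalities closes the argument.

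There is essentially no hard step here, as this is a textbook fact; the only point requiring mild care is the attainment of the minimum over $\conv\{\cX\}$, handled above by compactness of the convex hull of a compact set in finite dimensions. If one wished to bypass that compactness argument entirely, the pointwise bound $\langle c, x\rangle \geq m$ for all $x \in \conv\{\cX\}$ already forces $\inf_{x \in \conv\{\cX\}}\langle c, x\rangle \geq m$, while the easy direction shows this infimum is at most $m$ and is in fact attained at any minimizer lying in $\cX \subseteq \conv\{\cX\}$.
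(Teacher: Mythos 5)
Your proof is correct, but it takes a genuinely different route from the paper's. The paper argues from the optimum backwards: it fixes a minimizer $x^\star$ over $\conv\brace{\cX}$, and, if $x^\star \notin \cX$, writes $x^\star = \theta y + (1-\theta)z$ with $y, z \in \cX$ and $\theta \in (0,1)$, then uses optimality of $x^\star$ to force $\langle c, y\rangle = \langle c, z\rangle = \langle c, x^\star\rangle$, so the value is attained inside $\cX$. You instead prove the two inequalities directly: the easy one from $\cX \subseteq \conv\brace{\cX}$, and the reverse one by expanding an arbitrary hull point as a \emph{general} finite convex combination $\sum_i \theta_i x_i$ with $x_i \in \cX$ and using linearity to bound $\langle c, x\rangle \geq \min_{x' \in \cX}\langle c, x'\rangle$ pointwise. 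Your version buys two things. First, rigor: the paper's decomposition of an arbitrary $x^\star \in \conv\brace{\cX}\setminus\cX$ into a combination of exactly two points \emph{of $\cX$} is not valid in general (e.g., the barycenter of three affinely independent points of $\cX$ need not be a two-point combination of elements of $\cX$), whereas your use of the full convex-combination definition avoids this entirely; the paper's argument would need an induction on the number of summands, or exactly your linearity bound, to be airtight. Second, you explicitly settle attainment of the minimum over $\conv\brace{\cX}$ (via Carath\'{e}odory compactness, with a fallback through the infimum), a point the paper's statement presupposes without comment. The paper's approach, when patched, does yield the mildly stronger observation that some minimizer over the hull has the same value as points of $\cX$ witnessing it, but that extra information is not needed for the lemma as stated.
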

\begin{proof}
\looseness=-1
Let $x^\star \in \argmin_{x \in \conv\brace{\cX}} \langle c, x\rangle$.
The claim holds for $x^\star \in \cX$.
Suppose that $x^\star \notin \cX$. Then, by the definition of the convex hull, there exist $y, z \in \cX$ and $\theta \in (0, 1)$ such that $y \neq z$ and
$$
x^\star = \theta y + (1-\theta) z\;.
$$
Since $x^\star$ is a minimizer, we have
$$
\langle c, x^\star \rangle \leq \langle c, y \rangle\;
\text{ and }\;
\langle c, x^\star \rangle \leq \langle c, z \rangle\;.
$$
Accordingly, 
$$
\langle c, x^\star\rangle 
= \theta \langle c, x^\star\rangle + (1-\theta) \langle c, x^\star\rangle 
\leq
\theta \langle c, y\rangle + (1-\theta) \langle c, z\rangle 
= \langle c, x^\star\rangle \;.
$$
The inequality must be an equality, and thus
$$
\theta \underbrace{\paren*{\langle c, y\rangle - \langle c, x^\star\rangle}}_{\geq 0}
+ 
(1-\theta) \underbrace{\paren*{\langle c, z\rangle - \langle c, x^\star\rangle}}_{\geq 0} 
= 0\;.
$$
Since $\theta \in (0, 1)$, it holds that 
$$
\langle c, y\rangle =
\langle c, z\rangle =
\langle c, x^\star\rangle\;.
$$
The above equality means that both $y$ and $z \in \cX$ satisfy $\langle c, y \rangle = \langle c, z \rangle = \min_{x \in \conv\brace{\cX}} \langle c, x\rangle$.
Therefore, $\min_{x \in \cX} \langle c, x \rangle = \min_{x \in \conv\brace{\cX}} \langle c, x\rangle$.
\end{proof}

\begin{lemma}[\textbf{Lemma 3.1} in \citet{wang2023policy}]\label{lemma:V-lipschitz}
Let 
$$
\begin{aligned}
\Lipcnt \df H^2\sqrt{A} \quad \mathrm{ and } \quad
\smtcnt \df 2\gamma A H^3\;. 
\end{aligned}
$$
For any $\pi, \pi' \in \Pi$, $P: \SA \to \probsplx(\S)$, $\mu \in \probsplx(\S)$, and $c \in [0, 1]^{\aSA}$, 
$$
\begin{aligned}
&\abs*{
J_{c, P}(\pi) - J_{c, P}(\pi')
} \leq \Lipcnt\norm*{\pi - \pi'}_2\;, \;
\norm*{\nabla J_{c,P}(\pi) - \nabla J_{c,P}(\pi)}_2 \leq \smtcnt \norm*{\pi - \pi'}_2\;,\\
\text{ and }
&\abs*{J_{c,\U}(\pi) - J_{c, \U}(\pi)} \leq \Lipcnt\norm*{\pi - \pi'}_2\;. \;
\end{aligned}
$$
Furthermore, $J_{c,P}(\pi)$ is $\smtcnt$-weakly convex in $\Pi$, as follows directly from \cref{lemma:smooth-to-convex}.
\end{lemma}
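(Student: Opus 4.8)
The plan is to establish the three inequalities and the weak-convexity claim in four steps, reducing everything to a uniform bound on the policy gradient (for the Lipschitz claims) and to the standard second-order expansion of the value function under direct parameterization (for smoothness).

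First, for the Lipschitz bound on $J_{c,P}$, I would bound the gradient norm uniformly over $\Pi$. By the policy gradient theorem (\cref{lemma:policy gradient}), $\paren*{\nabla J_{c,P}(\pi)}(s,a) = H \occ{\pi}_P(s) \qf{\pi}_{c,P}(s,a)$. Since $c \in [0,1]^{\aSA}$, we have $\qf{\pi}_{c,P}(s,a) \le H$, and since $\occ{\pi}_P$ is a normalized occupancy measure, $\occ{\pi}_P(s) \ge 0$ and $\sum_{s} \occ{\pi}_P(s) = 1$, so $\sum_s \occ{\pi}_P(s)^2 \le \paren*{\sum_s \occ{\pi}_P(s)}^2 = 1$. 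Combining these,
$$
\norm*{\nabla J_{c,P}(\pi)}_2^2 = \sum_{s,a} H^2 \occ{\pi}_P(s)^2 \qf{\pi}_{c,P}(s,a)^2 \le H^4 A \sum_s \occ{\pi}_P(s)^2 \le H^4 A,
$$
hence $\norm*{\nabla J_{c,P}(\pi)}_2 \le H^2\sqrt{A} = \Lipcnt$. Because $\Pi$ is convex, the segment from $\pi'$ to $\pi$ stays in $\Pi$, so the mean-value inequality gives $\abs*{J_{c,P}(\pi)-J_{c,P}(\pi')} \le \sup_{\bar\pi \in \Pi}\norm*{\nabla J_{c,P}(\bar\pi)}_2 \, \norm*{\pi-\pi'}_2 \le \Lipcnt \norm*{\pi-\pi'}_2$.

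Second, for the smoothness bound I would follow the standard direct-parameterization argument (see, e.g., \citet{agarwal2021theory}). Fixing $\pi,\pi' \in \Pi$, set $u \df \pi' - \pi$, $\pi_t \df \pi + t u$, and $g(t) \df J_{c,P}(\pi_t)$, and bound the second derivative $\abs*{g''(t)} \le \smtcnt \norm*{u}_2^2$ uniformly in $t$. The computation uses the resolvent form $\qf{\pi_t}_{c,P} = (I-\gamma P \Pi^{\pi_t})^{-1} c$, differentiates $(I-\gamma P \Pi^{\pi_t})^{-1}$ twice in $t$, and bounds each factor via $\norm*{(I-\gamma P \Pi^{\pi})^{-1}}_\infty \le H$ (since $P\Pi^\pi$ is row-stochastic) and $\norm*{\qf{\pi}_{c,P}}_\infty \le H$; the factor $A$ enters from summing $\norm*{u(s,\cdot)}_1^2 \le A\norm*{u(s,\cdot)}_2^2$ across action coordinates. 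A uniform bound on $g''$ is equivalent to $\smtcnt$-smoothness of $J_{c,P}$ along every line, and hence yields $\norm*{\nabla J_{c,P}(\pi) - \nabla J_{c,P}(\pi')}_2 \le \smtcnt \norm*{\pi-\pi'}_2$.

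Third, the robust Lipschitz bound follows because the constant $\Lipcnt$ from Step 1 does not depend on $P$: writing $J_{c,\U}(\pi) = \max_{P\in\U} J_{c,P}(\pi)$ and applying the maximum-difference inequality (\cref{lemma:max difference}, in its supremum form over the compact $\U$), $\abs*{J_{c,\U}(\pi) - J_{c,\U}(\pi')} \le \sup_{P\in\U}\abs*{J_{c,P}(\pi)-J_{c,P}(\pi')} \le \Lipcnt\norm*{\pi-\pi'}_2$. Finally, $\smtcnt$-weak convexity of $J_{c,P}$ is immediate from Step 2 and \cref{lemma:smooth-to-convex}, which asserts that every $\smtcnt$-smooth function is $\smtcnt$-weakly convex. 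The main obstacle is Step 2: the Lipschitz and robust bounds reduce to a one-line gradient-norm estimate and a max-stability argument, but the smoothness constant requires a genuine second-order expansion and careful tracking of how the $H^3$ horizon dependence and the $A$ factor arise from differentiating the resolvent twice; I expect the bookkeeping of the two derivative terms (the product-rule cross term and the squared first-derivative term) to be where the constant $\smtcnt = 2\gamma A H^3$ is pinned down.
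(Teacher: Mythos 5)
The paper does not prove this lemma at all: it imports it verbatim as \textbf{Lemma 3.1} of \citet{wang2023policy}, whose constants in turn trace back to the direct-parameterization analysis of \citet{agarwal2021theory}. Your proposal reconstructs that standard proof correctly, and your constants check out at every step: the gradient-norm bound $\norm{\nabla J_{c,P}(\pi)}_2^2 \le H^2\cdot H^2 A \sum_s \occ{\pi}_P(s)^2 \le H^4 A$ is valid because the paper's occupancy measure is normalized ($\occ{\pi}_P \ge \bzero$, $\sum_s \occ{\pi}_P(s)=1$, so $\sum_s \occ{\pi}_P(s)^2 \le 1$) and $\qf{\pi}_{c,P} \le H\bone$ for $c \in [0,1]^{\aSA}$; the mean-value step is legitimate since $\Pi$ is convex and, by the paper's footnote, $\qf{\pi}_{c,P}$ is defined via the resolvent on a neighborhood of $\Pi$, so $J_{c,P}$ is continuously differentiable along the whole segment; the extension of \cref{lemma:max difference} from finite maxima to the supremum over the compact $\U$ is routine (the max is attained by continuity of $J_{c,P}(\pi)$ in $P$); and the weak-convexity claim is exactly the application of \cref{lemma:smooth-to-convex} that the lemma statement itself invokes. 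The one place where you give an outline rather than a complete argument is the second-order bound $\abs{g''(t)} \le \smtcnt\norm{u}_2^2$; your sketch (twice differentiating the resolvent, using $\norm{(I-\gamma P\Pi^{\pi})^{-1}}_\infty \le H$, $\norm{\qf{\pi}_{c,P}}_\infty \le H$, and $\max_s\norm{u(s,\cdot)}_1 \le \sqrt{A}\,\norm{u}_2$) is precisely the computation carried out in \citet{agarwal2021theory} (their smoothness lemma for direct parameterization), and it does pin down $\smtcnt = 2\gamma A H^3 = 2\gamma A/(1-\gamma)^3$, so the gap is one of exposition, not of substance. In short: where the paper cites, you prove, and your route is the same one the cited sources take; as a minor editorial point, the two typos in the lemma statement (both the smoothness and robust-Lipschitz displays repeat $\pi$ where $\pi'$ is intended) are correctly read in your argument as comparisons between $\pi$ and $\pi'$.
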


\begin{lemma}[e.g., \textbf{Lemma 4.1} in \citet{agarwal2021theory} and \textbf{Lemma E.2} in \citet{wang2023policy}]\label{lemma:V-gradient dominance}
Let $\mu \in \probsplx(\S)$ such that $\min_{s \in \S} \mu(s) > 0$.
For any $\pi \in \Pi$, $P: \SA \to \probsplx(\S)$, and $c \in [0, 1]^{\aSA}$, 
$$
J_{c,P}(\pi) - J_{c,P}(\pi^\star_{c, P})
\leq 
H
\norm*{
\frac{\occ{\pi^\star_{c, P}}_{P}}{\mu}
}_\infty
\max_{\pi' \in \Pi}\left\langle \pi - \pi', \nabla J_{c,P}(\pi)\right \rangle\;,
$$
where $\pi^\star_{c, P} \in \argmin_{\pi' \in \Pi}J_{c,P}(\pi')$.
\end{lemma}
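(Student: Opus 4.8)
The plan is to prove the inequality by separately identifying the left-hand suboptimality gap and the right-hand gradient term, and then linking them through a change-of-measure argument. Abbreviate $\pi^\star \df \pi^\star_{c,P}$ and introduce the nonnegative per-state advantage gap $\delta(s) \df \vf{\pi}_{c,P}(s) - \min_{a \in \A}\qf{\pi}_{c,P}(s,a)$, which is $\geq 0$ because $\vf{\pi}_{c,P}(s) = \sum_a \pi(s,a)\qf{\pi}_{c,P}(s,a)$ is an average of the values $\qf{\pi}_{c,P}(s,\cdot)$ and hence dominates their minimum.

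First I would handle the gradient term on the right. By the policy gradient theorem (\cref{lemma:policy gradient}), $\paren*{\nabla J_{c,P}(\pi)}(s,a) = H\occ{\pi}_P(s)\qf{\pi}_{c,P}(s,a)$, so $\langle \pi - \pi', \nabla J_{c,P}(\pi)\rangle = H\sum_s \occ{\pi}_P(s)\paren*{\vf{\pi}_{c,P}(s) - \sum_a \pi'(s,a)\qf{\pi}_{c,P}(s,a)}$. Because the inner term decouples across states and each $\pi'(s,\cdot)$ ranges freely over $\probsplx(\A)$, the maximizing $\pi'$ places all mass on an action attaining $\min_a \qf{\pi}_{c,P}(s,a)$ in every state, which yields the clean identity $\max_{\pi' \in \Pi}\langle \pi - \pi', \nabla J_{c,P}(\pi)\rangle = H\sum_s \occ{\pi}_P(s)\delta(s)$.

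Next I would bound the left-hand gap. Using the performance difference lemma (the standard value-decomposition identity, in the cost-minimization convention) with the normalized occupancy $\occ{\pi^\star}_P$, one has $J_{c,P}(\pi) - J_{c,P}(\pi^\star) = H\sum_s \occ{\pi^\star}_P(s)\paren*{\vf{\pi}_{c,P}(s) - \sum_a \pi^\star(s,a)\qf{\pi}_{c,P}(s,a)}$, and replacing $\sum_a \pi^\star(s,a)\qf{\pi}_{c,P}(s,a)$ by its minimum over actions gives the upper bound $J_{c,P}(\pi) - J_{c,P}(\pi^\star) \leq H\sum_s \occ{\pi^\star}_P(s)\delta(s)$. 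The final step is the change of measure: writing $\occ{\pi^\star}_P(s)\delta(s) = \frac{\occ{\pi^\star}_P(s)}{\occ{\pi}_P(s)}\occ{\pi}_P(s)\delta(s)$ and invoking the elementary lower bound $\occ{\pi}_P(s) \geq (1-\gamma)\mu(s) = \mu(s)/H$ (the $h=0$ term alone contributes $(1-\gamma)\mu(s)$ since $s_0 \sim \mu$), the ratio is controlled by $\occ{\pi^\star}_P(s)/\occ{\pi}_P(s) \leq H\,\occ{\pi^\star}_P(s)/\mu(s)$, whence $\sum_s \occ{\pi^\star}_P(s)\delta(s) \leq H\norm*{\occ{\pi^\star}_P/\mu}_\infty \sum_s \occ{\pi}_P(s)\delta(s)$. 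Combining this with the gradient identity from the previous paragraph produces exactly the claimed inequality.

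The only genuinely delicate point is the coverage hypothesis $\min_s \mu(s) > 0$, which guarantees $\occ{\pi}_P(s) > 0$ for every $s$ so that the importance ratio is well defined; this is precisely where the assumption enters and is the step most likely to require care. The performance difference identity is routine but worth stating carefully under the cost-minimization sign convention, and the reduction of $\max_{\pi'}$ to a per-state $\min_a \qf{\pi}_{c,P}$ is the conceptual crux that renders the two sides comparable through the single common quantity $\delta(s)$.
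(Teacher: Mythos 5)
Your proof is correct and is essentially the same argument the paper relies on by citation (the standard proof of \textbf{Lemma 4.1} in \citet{agarwal2021theory}, transposed to cost minimization): the performance difference lemma bounds the gap by $H\sum_s \occ{\pi^\star_{c,P}}_P(s)\delta(s)$, the policy gradient theorem identifies $\max_{\pi'}\langle \pi-\pi',\nabla J_{c,P}(\pi)\rangle$ with $H\sum_s \occ{\pi}_P(s)\delta(s)$ via per-state greedy minimization, and the change of measure $\occ{\pi}_P(s)\geq \mu(s)/H$ links the two, with both factors of $H$ accounted for exactly as in the claimed bound. No gaps; nothing further is needed.
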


\section{Proof of \texorpdfstring{\cref{theorem:non-gradient-dominance}}{Proof of Theorem 1}}\label{subsec:proof-of-non-gradient-dominance}

\begin{proof}[Proof of \cref{theorem:non-gradient-dominance}]
\looseness=-1
Consider the deterministic RCMDP shown in \cref{fig:non-grad-CMDP} with $N=1$, $\cU = \brace{P_1, P_2}$, $\S=\brace{s_1, s_2, s_3, s_4}$, and $\A = \brace{a_1, a_2}$.
Set the initial distribution such that $\mu(s_1) = \mu(s_2) = \mu(s_3) = \mu(s_4) = 1 / 4$.

\paragraph{First part of \cref{theorem:non-gradient-dominance}.}

\looseness=-1
We set $\lambda=1$.
The threshold $b_1$ can be arbitrary.

\looseness=-1
Let $\pi_1$ and $\pi_2$ be two policies such that $\pi_1$ always chooses $a_1$ and $\pi_2$ always chooses $a_2$ in any state.
For any $\delta > 0$, we will show two results:
\begin{itemize}
    \item \cref{eq:suboptimality-of-pitwo}: $\Lag{\lambda}{\pi_2} - \min_{\pi \in \Pi}\Lag{\lambda}{\pi} \geq \frac{H\gamma}{4} - \frac{3H\delta}{4}$\;.
    \item \cref{eq:stationary-of-pitwo}: $\paren*{\nabla \Lag{\lambda}{\pi_2}}(\cdot, a_1) > \paren*{\nabla \Lag{\lambda}{\pi_2}}(\cdot, a_2)$.
\end{itemize}
The former demonstrates the suboptimality of $\pi_2$, while the latter shows that choosing $a_2$ always decreases $L_\lambda$, indicating that $\pi_2$ is a local minimum.

\looseness=-1
According to the RCMDP construction, for any $\pi \in \Pi$, we have
$$
\begin{aligned}
&\mu(s_3) \vf{\pi}_{c_0, P_1}(s_3) + \mu(s_4) \vf{\pi}_{c_0, P_1}(s_4) = \frac{H}{4} (1 + \gamma)\;, \\
&\mu(s_3) \vf{\pi}_{c_0, P_2}(s_3) + \mu(s_4) \vf{\pi}_{c_0, P_2}(s_4) = \frac{H}{4}(1-\gamma)\;,\\
& \mu(s_3) \vf{\pi}_{c_1, P_1}(s_3) + \mu(s_4) \vf{\pi}_{c_1, P_1}(s_4) = \frac{H}{4}(1-\gamma) \;,  \\
& \mu(s_3) \vf{\pi}_{c_1, P_2}(s_3) + \mu(s_4) \vf{\pi}_{c_1, P_2}(s_4) = \frac{H}{4} (1 + \gamma)\;.
\end{aligned}
$$
\looseness=-1
For $\pi_1$ and $\pi_2$, it is easy to verify that
$$
\begin{aligned}
&\mu(s_1) \vf{\pi_1}_{c_0, P_1}(s_1) + \mu(s_2) \vf{\pi_1}_{c_0, P_1}(s_2) = \frac{1}{4}\paren*{\delta + \gamma + \gamma^2 \delta + \cdots } + \frac{1}{4}\paren*{1 + \gamma \delta + \gamma^2 + \cdots }
= \frac{H}{4}(1 + \delta)\;,\\
&\mu(s_1) \vf{\pi_1}_{c_0, P_2}(s_1) + \mu(s_2) \vf{\pi_1}_{c_0, P_2}(s_2)  = \frac{1}{4}\paren*{\delta + \gamma + \gamma^2 + \cdots} + \frac{1}{4}\paren*{1 + \gamma + \gamma^2 + \cdots} = \frac{H}{4}(1+\gamma) + \frac{\delta}{4}\;,\\
&\mu(s_1) \vf{\pi_1}_{c_1, P_1}(s_1) + \mu(s_2) \vf{\pi_1}_{c_1, P_1}(s_2) = \frac{H}{2}\;,\quad \mu(s_1) \vf{\pi_1}_{c_1, P_2}(s_1) + \mu(s_2) \vf{\pi_1}_{c_1, P_2}(s_2) = \frac{H}{2}\;,\\
&\mu(s_1) \vf{\pi_2}_{c_0, P_1}(s_1) + \mu(s_2) \vf{\pi_2}_{c_0, P_1}(s_2) = \frac{H}{2}\;,\quad \mu(s_1) \vf{\pi_2}_{c_0, P_2}(s_1) + \mu(s_2) \vf{\pi_2}_{c_0, P_2}(s_2) =\frac{H}{2}\;,\\
&\mu(s_1) \vf{\pi_2}_{c_1, P_1}(s_1) + \mu(s_2) \vf{\pi_2}_{c_1, P_1}(s_2) = \frac{H}{4} (1 + \gamma - 2\delta)\;,\\
&\mu(s_1) \vf{\pi_2}_{c_1, P_2}(s_1) + \mu(s_2) \vf{\pi_2}_{c_1, P_2}(s_2) = \frac{H}{4} (1 + \gamma - 2\delta)\;.
\end{aligned}
$$
Therefore, 
\begin{equation}\label{eq:pi2-returns}
\begin{aligned}
J_{c_0, P_1}(\pi_1) & = \frac{H}{2} + \frac{H}{4}\paren*{\gamma + \delta}\;,\quad
J_{c_0, P_2}(\pi_1) = \frac{H}{2} + \frac{\delta}{4}\;,\\
J_{c_1, P_1}(\pi_1) & = \frac{H}{4} \paren*{3 - \gamma}\;,\quad
J_{c_1, P_2}(\pi_1) = \frac{H}{4} \paren*{3 + \gamma}\;,\\
J_{c_0, P_1}(\pi_2) & = \frac{H}{4} \paren*{3 + \gamma}\;,\quad
J_{c_0, P_2}(\pi_2) = \frac{H}{4} \paren*{3 - \gamma}\;,\\
J_{c_1, P_1}(\pi_2) & = \frac{H}{2} - \frac{H\delta}{2}\;,\quad
J_{c_1, P_2}(\pi_2) = \frac{H}{2} + \frac{H\gamma}{2} - \frac{H\delta}{2} \;,
\end{aligned}
\end{equation}

\looseness=-1
Hence,
$$
\begin{aligned}
J_{c_0, \cU}(\pi_1) &= J_{c_0, P_1}(\pi_1) = \frac{H}{2} + \frac{H\gamma}{4} + \frac{H\delta}{4}\;,\quad
J_{c_1, \cU}(\pi_1) = J_{c_1, P_2}(\pi_1) = \frac{H}{4}\paren*{3 + \gamma}\;,\\
J_{c_0, \cU}(\pi_2) &= J_{c_0, P_1}(\pi_2) = \frac{H}{4}\paren*{3+\gamma}\;,\quad
J_{c_1, \cU}(\pi_2) = J_{c_1, P_2}(\pi_2) = \frac{H}{2} + \frac{H\gamma}{2} - \frac{H\delta}{2}\;.
\end{aligned}
$$

\looseness=-1
Accordingly, since $\lambda = 1$, we have
\begin{equation}\label{eq:suboptimality-of-pitwo}
\Lag{\lambda}{\pi_2}-
\min_{\pi \in \Pi}\Lag{\lambda}{\pi}
\geq \Lag{\lambda}{\pi_2} - \Lag{\lambda}{\pi_1}
\geq \frac{H \gamma}{4} - \frac{3 H \delta}{4}\;.
\end{equation}

\looseness=-1
The next task is to show that $\paren*{\nabla \Lag{\lambda}{\pi_2}}(\cdot, a_1) > \paren*{\nabla \Lag{\lambda}{\pi_2}}(\cdot, a_2)$.

\looseness=-1
By using \cref{lemma:finite-danskin}, it is easy to show that
$$
\nabla \Lag{\lambda}{\pi_2} = \nabla J_{c_0, P_1}(\pi_2) + \nabla J_{c_1, P_2}(\pi_2)\;.
$$
Since $\occ{\pi_2}_{P_1} = \occ{\pi_2}_{P_2} = 0.25 \bone$ and due to \cref{lemma:policy gradient}, we have 
$$
\frac{4}{H} \nabla \Lag{\lambda}{\pi_2} = \qf{\pi_2}_{c_0, P_1} + \qf{\pi_2}_{c_1, P_2}\;.
$$
Note that
\begin{equation}\label{eq:L-grad-pi2}
\begin{aligned}
\frac{4}{H} \paren*{\nabla \Lag{\lambda}{\pi_2}}(s_1, a_1)
&= \qf{\pi_2}_{c_0, P_1}(s_1, a_1)+\qf{\pi_2}_{c_1, P_2}(s_1, a_1) = \paren*{\delta + H\gamma} + \paren*{H - H \gamma \delta}\;,\\
\frac{4}{H} \paren*{\nabla \Lag{\lambda}{\pi_2}}(s_1, a_2)
&= \qf{\pi_2}_{c_0, P_1}(s_1, a_2) + \qf{\pi_2}_{c_1, P_2}(s_1, a_2) = H + H(\gamma-\delta)\;,\\
\frac{4}{H} \paren*{\nabla \Lag{\lambda}{\pi_2}}(s_2, a_1)
&= \qf{\pi_2}_{c_0, P_1}(s_2, a_1) + \qf{\pi_2}_{c_1, P_2}(s_2, a_1) = 2H\;,\\
\frac{4}{H} \paren*{\nabla \Lag{\lambda}{\pi_2}}(s_2, a_2)
&= \qf{\pi_2}_{c_0, P_1}(s_2, a_2) + \qf{\pi_2}_{c_1, P_2}(s_2, a_2) = 2H(1-\delta)\;.
\end{aligned}
\end{equation}

\looseness=-1
Therefore, since $\delta > 0$, 
\begin{equation}\label{eq:stationary-of-pitwo}
\begin{aligned}
\frac{4}{H} \paren*{\paren*{\nabla \Lag{\lambda}{\pi_2}}(s_1, a_1) - \paren*{\nabla \Lag{\lambda}{\pi_2}}(s_1, a_2)} &= \delta - H\gamma \delta + H \delta = 2\delta > 0\;,\\
\frac{4}{H} \paren*{\paren*{\nabla \Lag{\lambda}{\pi_2}}(s_2, a_1) - \paren*{\nabla \Lag{\lambda}{\pi_2}}(s_2, a_2)} &= 2H\delta > 0 \;.
\end{aligned}
\end{equation}

\looseness=-1
Now, with a sufficiently small $R > 0$, let $\widetilde{\Pi}_2 \df \brace*{\pi\in \Pi \given \norm*{\pi - \pi_2}_2 \leq R,\; \pi \neq \pi_2}$ be policies near $\pi_2$.
When $R$ is sufficiently small, due to the Lipshictz continuity of $J_{c_n, P}(\pi)$ by \cref{lemma:V-lipschitz},
\cref{eq:pi2-returns} indicates that 
\begin{equation}\label{eq:near-pi2-worst}
J_{c_0, \U}(\pi) = J_{c_0, P_1}(\pi)\; \text{ and } \; J_{c_1, \U}(\pi) = J_{c_1, P_2}(\pi) \quad \forall \pi \in \widetilde{\Pi}_2 \;.
\end{equation}

\looseness=-1
Similarly, due to \cref{eq:near-pi2-worst} with the Lipshictz continuity of $\nabla J_{n, P}(\pi)$ by \cref{lemma:V-lipschitz}, \cref{eq:L-grad-pi2} and \cref{eq:stationary-of-pitwo} indicate that, 
$$
\paren*{\nabla \Lag{\lambda}{\pi}}(\cdot, a_1) > \paren*{\nabla \Lag{\lambda}{\pi}}(\cdot, a_2) \quad \forall \pi \in \widetilde{\Pi}_2 \;.
$$ 

\looseness=-1
Therefore, since $\pi_2$ always chooses $a_2$, we have $\Lag{\lambda}{\pi_2} < \Lag{\lambda}{\pi}\;\; \forall \pi \in \widetilde{\Pi}_2$ for a sufficiently small $R > 0$.
The first part of the claim holds by setting $\delta = \gamma / 4$ with \cref{eq:suboptimality-of-pitwo}.

\paragraph{Second part of \cref{theorem:non-gradient-dominance}.}
\looseness=-1
Consider again the deterministic RCMDP given in the previous part of the proof with $\delta =\gamma / 4$.
For a value $b_1 \in \R$, define a function $\Psi_{b_1} : \R \to \R$ such that
$$
\Psi_{b_1}(\lambda) = \min_{\pi \in \Pi} J_{c_0, \U}(\pi) + \lambda J_{c_1, \U}(\pi) - \lambda b_1 
= \min_{\pi \in \Pi}\Lag{\lambda}{\pi}\;.
$$
We first show that, when $b_1$ ranges from $0$ to $H$, the $\arg\sup_{\lambda \in \R_+} \Psi_{b_1}(\lambda)$ ranges from $\infty$ to $0$.

\looseness=-1
Since $\U = \brace*{P_1, P_2}$, \cref{lemma:finite-danskin} and \cref{lemma:policy gradient} indicates that, for any $\lambda \in \R$ and $b_1 \in \R$,
\begin{equation}\label{eq:Psi-subgrad-tmp1}
\partial_\lambda \Psi_{b_1}(\lambda) \subseteq \conv\brace*{J_{c_1, P}(\pi) - b_1\given \pi \in \Pi, P \in \brace{P_1, P_2}}\;.
\end{equation}
Since $\mu = \frac{1}{4} \cdot \bone$ and due to the construction of the RCMDP in \cref{fig:non-grad-CMDP}, it is easy to verify that, 
$$
\begin{aligned}
&\min_{\pi \in \Pi}\min_{P \in \U} J_{c_1, P}(\pi) \geq \min_{\pi \in \Pi} \min_{P \in U} \min_{s \in \brace{s_3, s_4}}\frac{1}{4} \vf{\pi}_{c_1, P}(s) = \frac{1}{4}\;, \\
&\max_{\pi \in \Pi}\max_{P \in \U} J_{c_1, P}(\pi) \leq \frac{H}{2} + \max_{\pi \in \Pi} \max_{P \in U} \frac{1}{4} \paren*{\vf{\pi}_{c_1, P}(s_3) + \vf{\pi}_{c_1, P}(s_4)} = H - \frac{1}{4}\;.
\end{aligned}
$$
By inserting this to \cref{eq:Psi-subgrad-tmp1}, for any $\lambda \in \R$ and $b_1 \in \R$, we have
\begin{equation}\label{eq:Psi-subgrad-range}
\frac{1}{4} - b_1 \leq g \leq H - \frac{1}{4} - b_1 \quad \forall g \in \partial_\lambda \Psi_{b_1}(\lambda)\;.
\end{equation}

\looseness=-1
Therefore,
\begin{itemize}
    \item When $b_1 \in [0, 1 / 4)$, $g \in \partial_\lambda \Psi_{0}(\lambda)$ must satisfy $g > 0$ for any $\lambda$. 
    Thus, $\brace*{\infty} = \arg\sup_{\lambda \in \R_+} \Psi_{b_1}(\lambda)$ for any $b_1 \in [0, 1 / 4)$.
    Moreover, $\sup_{\lambda \in \R_+} \Psi_{b_1}(\lambda) = \infty$.
    \item When $b_1 \in (H - 1 / 4, H]$, $g \in \partial_\lambda \Psi_{b_1}(\lambda)$ must satisfy $g < 0$ for any $\lambda$. Thus, $\brace*{0}=\argmax_{\lambda \in \R_+} \Psi_{b_1}(\lambda)$ for any $b_1 \in (H - 1/4, H]$. 
    Moreover, $\max_{\lambda \in \R_+} \Psi_{b_1}(\lambda) \leq H$.
\end{itemize}

\looseness=-1
Next, we will show that there exists $b_1$ such that $1 \in \argmax_{\lambda \in \R_+} \Psi_{b_1}(\lambda)$.
From now, we only consider sufficiently large $b_1$ such that the value of $\argmax_{\lambda \in \R_+} \Psi_{b_1}(\lambda)$ becomes finite.

\looseness=-1
Let $\Psi_{b_1}^\star \df \max_{\lambda \in \R_+} \Psi_{b_1}(\lambda)$ and $f(b_1) \df \frac{\Psi_{b_1}^\star - \Psi_{H}^\star}{b_1 - H}$.
Since $f(H - 1 / 5) = 0$, $f(0) = -\infty$, and $f(b_1)$ is continuous in $b_1$, the intermediate value theorem ensures that there exists $b_1' \in [0, H]$ such that $f(b_1') = -1$.
Moreover, the generalized mean value theorem (\textbf{Theorem 2.3.7} in \citet{clarke1990optimization}) states that there exists $b_1^\star \in [b_1', H]$ such that $-1 = f(b'_1) \in \partial_{b_1} \Psi_{b_1^\star}^\star$.

\looseness=-1
Let $\Lambda_{b_1}$ be the set that provides maximums of $\Psi_{b_1}$, i.e.,
$\Lambda_{b_1} \df \arg\max_{\lambda \in \R_+} \Psi_{b_1}(\lambda)$.
Using \cref{lemma:danskin's theorem},
\begin{equation*}
\begin{aligned}
-1 \in 
\partial_{b_1} \Psi^\star_{b_1^\star} 
= \conv\brace*{\nabla_{b_1} \Psi_{b_1^\star}(\lambda) \given \lambda \in \Lambda_{b_1^\star}}
= \conv\brace*{-\lambda \given \lambda \in \Lambda_{b_1^\star}}
= [-\max \Lambda_{b_1^\star}, -\min \Lambda_{b_1^\star}]\;.
\end{aligned}
\end{equation*}
Since $\Psi_{b_1}(\lambda)$ is concave in $\lambda$, any $\lambda \in [\min \Lambda_{b_1}, \max \Lambda_{b_1}]$ provides $\max_{\lambda \in \R_+} \Psi_{b_1}(\lambda)$.
Thus, $1 \in \argmax_{\lambda \in \R_+} \Psi_{b_1^\star}(\lambda)$.
This proves the existence of $b_1$ such that $1 \in \argmax_{\lambda \in \R_+} \min_{\pi \in \Pi}\Lag{\lambda}{\pi}$.

\end{proof}

\section{Missing Proofs}

\subsection{Proof of \texorpdfstring{\cref{lemma: phi-properties}}{Lemma 2}}\label{subsec:proof of phi-properties}

\begin{proof}[Proof of \cref{lemma: phi-properties}]
\looseness=-1
We prove the first claim.
Recall the definition of $\epivios{b_0}$:
\begin{equation}
\begin{aligned}
\epivios{b_0} = \min_{\pi \in \Pi}\epivio{b_0}{\pi}
= \min_{\pi \in \Pi}\max_{n \in \zeroN} J_{c_n, \cU}(\pi) - b_n\;.
\end{aligned}
\end{equation}   
It is easy to see that $\epivio{b_0}{\pi}$ is monotonically decreasing in $b_0$.
Consider two real numbers $x \leq y$ and let $\pi^x \in \argmin_{\pi \in \Pi} \epivio{x}{\pi}$. Then,
$$
\epivios{y} = 
\min_{\pi \in \Pi}\epivio{y}{\pi}\leq 
\epivio{y}{\pi^x} \leq \epivio{x}{\pi^x} = \min_{\pi \in \Pi}\epivio{x}{\pi} = \epivios{x} \;.
$$
Therefore, $\epivios{b_0}$ is monotonically decreasing in $b_0$.

\looseness=-1
Next, we prove the second claim.
Suppose that $\epivios{\optret} < 0$.
Then, there exists a feasible policy $\pi \in \Pisafe$ such that $J_{c_0, \cU}(\pi) < \optret=J_{c_0, \cU}(\pi^\star)$.
This contradicts the definition of the optimal policy.
Therefore, $\epivios{\optret} \geq 0$.

\looseness=-1
Suppose that $\epivios{\optret} > 0$.
Since $\min_{\pi \in \Pi}\epivio{\optret}{\pi} > 0$, 
no feasible policy achieves the objective return $\optret$.
This also contradicts the existence of the optimal policy.
Therefore, $\epivios{\optret} = 0$.
\end{proof}

\subsection{Proof of \texorpdfstring{\cref{theorem:epigraph is RCMDP-zero}}{Theorem 2}}\label{subsec:proof-epigraph is RCMDP-zero}

\begin{proof}[Proof of \cref{theorem:epigraph is RCMDP-zero}]
\looseness=-1
We first prove \cref{eq:epigraph problem-zero} by contradiction.
Let $x \df \min\brace*{b_0 \in [0, H] \given \epivios{b_0} \leq 0}$ and suppose that $x < \optret$.
Since $\epivios{\optret}=0$ by \cref{lemma: phi-properties}, there exists a feasible policy $\pi \in \Pisafe$ such that $J_{c_0, \cU}(\pi) \leq x < \optret = J_{c_0, \cU}(\pi^\star)$.
This contradicts the definition of the optimal policy.

\looseness=-1
We then show that \cref{eq:epigraph problem-zero} provides $\pi^\star$.
Since $\epivios{\optret}=0$ by \cref{lemma: phi-properties}, any policy $\pi \in \argmin_{\pi \in \Pi}\epivio{b_0}{\pi}$ is feasible and satisfies $J_{c_0, \cU}(\pi) = \optret$.
The claim directly follows from the definition of an optimal policy.
\end{proof}

\subsection{Proof of \texorpdfstring{\cref{lemma:Phi-subgrad}}{Lemma 3}}\label{subsec: proof of weak-convexity}

Instead of \cref{lemma:Phi-subgrad}, we prove the following lemma that includes \cref{lemma:Phi-subgrad}.

\begin{lemma}[Properties of $\epivionot{b_0}$]\label{lemma:weak-convexity}
\looseness=-1
The following properties hold for any $b_0 \in \R$.
\begin{enumerate}
    \item (Lipschitz continuity): For any $\pi, \pi' \in \Pi$, $\abs*{\epivio{b_0}{\pi}- \epivio{b_0}{\pi'}} \leq \Lipcnt\norm*{\pi - \pi'}_2$ with $\Lipcnt \df H^2\sqrt{A}$.
    \item (Weak convexity): $\epivio{b_0}{\pi} + \frac{\smtcnt}{2}\norm*{\pi}^2_2$ is convex in $\pi$ with $\smtcnt \df 2\gamma A H^3$.
    \item (Subdifferentiability): 
    For any $\pi \in \Pi$, the subgradient of $\epivionot{b_0}$ at $\pi$ is given by
$$
\partial \epivio{b_0}{\pi}= \conv\brace*{\nabla_\pi J_{c_n, P}(\pi) \given 
n, P \in \cW}\;,
$$
where $\conv B$ represents the convex hull of a set $B \subset \R^{\aSA}$.
\end{enumerate}
\end{lemma}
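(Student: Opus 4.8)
The plan is to establish the three properties in the order stated, exploiting the fact that $\epivio{b_0}{\pi}$ is the nested pointwise maximum $\max_{n\in\bbrack{0,N}}\max_{P\in\U}\paren*{J_{c_n,P}(\pi)-b_n}$ of the individual return functions, each of which is already known from \cref{lemma:V-lipschitz} to be $\Lipcnt$-Lipschitz, $\smtcnt$-smooth, and $\smtcnt$-weakly convex. The first two properties fall out quickly from stability-under-maximum arguments, whereas the third (the subgradient formula) is the substantive part and will need the envelope/Danskin machinery together with a careful handling of the two nested maxima.

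For Lipschitz continuity I would write $\epivio{b_0}{\pi}=\max_{n\in\bbrack{0,N}}\paren*{J_{c_n,\U}(\pi)-b_n}$ and apply \cref{lemma:max difference} to pull the maximum outside the difference, bounding $\abs*{\epivio{b_0}{\pi}-\epivio{b_0}{\pi'}}$ by $\max_n\abs*{J_{c_n,\U}(\pi)-J_{c_n,\U}(\pi')}$; since each $J_{c_n,\U}$ is $\Lipcnt$-Lipschitz by \cref{lemma:V-lipschitz} and the constant shift $-b_n$ cancels, the bound $\Lipcnt\norm*{\pi-\pi'}_2$ drops out. For weak convexity the cleanest route is to argue directly that $\max_{P\in\U}J_{c_n,P}(\pi)+\frac{\smtcnt}{2}\norm*{\pi}_2^2$ is convex: each summand $J_{c_n,P}(\pi)+\frac{\smtcnt}{2}\norm*{\pi}_2^2$ is convex in $\pi$ by weak convexity of $J_{c_n,P}$, and a pointwise supremum of convex functions is convex, so $J_{c_n,\U}$ is $\smtcnt$-weakly convex. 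This single argument covers both the finite and the compact uncertainty-set cases at once. Subtracting $b_n$ preserves weak convexity, and the outer maximum over the finitely many indices $n$ preserves $\smtcnt$-weak convexity by iterating \cref{lemma:point-wise maximum}.

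The main work is the subgradient formula, which I would build up by peeling the two maxima one at a time. Fix $\pi$ and set $g_n(\pi)\df J_{c_n,\U}(\pi)-b_n=\max_{P\in\U}\paren*{J_{c_n,P}(\pi)-b_n}$. Under \cref{assumption:uncertainty set} there are two cases for the inner maximum over $P$. If $\U$ is finite, \cref{lemma:finite-danskin} applied to the $\smtcnt$-weakly convex family $\brace*{J_{c_n,P}(\cdot)-b_n}_{P\in\U}$ gives $\partial g_n(\pi)=\conv\brace*{\nabla_\pi J_{c_n,P}(\pi)\given P\in\argmax_{P\in\U}J_{c_n,P}(\pi)}$. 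If $\U$ is compact, I would instead invoke \cref{lemma:danskin's theorem} with $f(\pi,P)=J_{c_n,P}(\pi)-b_n$; its hypotheses hold because $J_{c_n,P}(\pi)$ and, via the matrix formulas for $\occ{\pi}_P$ and $\qf{\pi}_{c_n,P}$, its gradient $\nabla_\pi J_{c_n,P}(\pi)$ are jointly continuous in $(\pi,P)$ — continuity in $P$ being exactly \cref{assumption:uncertainty set}(ii) — yielding the same convex-hull formula. In either case I obtain $\partial g_n(\pi)=\conv A_n$ with $A_n\df\brace*{\nabla_\pi J_{c_n,P}(\pi)\given P\in\argmax_{P\in\U}J_{c_n,P}(\pi)}$.

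Finally I would peel the outer maximum: $\epivio{b_0}{\pi}=\max_{n\in\bbrack{0,N}}g_n(\pi)$ is a finite pointwise maximum of the $\smtcnt$-weakly convex functions $g_n$, so \cref{lemma:finite-danskin} gives $\partial\epivio{b_0}{\pi}=\conv\bigcup_{n\in\mathcal{N}^\star}\partial g_n(\pi)$, where $\mathcal{N}^\star\df\argmax_{n\in\bbrack{0,N}} g_n(\pi)$ is the set of active outer indices. Writing $\cW\df\argmax_{(n,P)\in\bbrack{0,N}\times\U}\paren*{J_{c_n,P}(\pi)-b_n}$ for the set of jointly maximizing pairs, I would then use the elementary identity that $(n,P)\in\cW$ if and only if $n\in\mathcal{N}^\star$ and $P\in\argmax_{P\in\U}J_{c_n,P}(\pi)$; hence $\bigcup_{n\in\mathcal{N}^\star}A_n=\brace*{\nabla_\pi J_{c_n,P}(\pi)\given (n,P)\in\cW}$. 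Substituting $\partial g_n(\pi)=\conv A_n$ and using $\conv\bigcup_n\conv A_n=\conv\bigcup_n A_n$ collapses everything to $\partial\epivio{b_0}{\pi}=\conv\brace*{\nabla_\pi J_{c_n,P}(\pi)\given (n,P)\in\cW}$, as required. The step I expect to be most delicate is the compact-$\U$ branch: verifying the joint-continuity hypothesis of \cref{lemma:danskin's theorem} for the gradient field, and then reconciling the two nested convex hulls into one hull over $\cW$. The weak convexity of each $g_n$ established above is precisely what lets me apply the finite Danskin lemma cleanly at the outer stage rather than re-deriving the subdifferential calculus from scratch.
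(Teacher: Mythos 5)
Your proof is correct and follows essentially the same route as the paper's: \cref{lemma:max difference} plus \cref{lemma:V-lipschitz} for Lipschitz continuity, preservation of weak convexity under pointwise maxima, and a two-stage Danskin argument (\cref{lemma:danskin's theorem} for the inner maximum over $P$, \cref{lemma:finite-danskin} for the outer maximum over $n$) with the nested hulls collapsed via $\conv\bigcup_{n}\conv A_n = \conv\bigcup_{n} A_n$. You are in fact slightly more explicit than the paper at two points---verifying the joint-continuity hypothesis of \cref{lemma:danskin's theorem} through the matrix formulas for $\occ{\pi}_P$ and $\qf{\pi}_{c_n,P}$, and establishing weak convexity of $J_{c_n,\U}$ by the sup-of-convex argument, which covers infinite compact $\U$ uniformly where the paper's citation of \cref{lemma:point-wise maximum} is stated only for finitely many functions---but these are refinements within the same approach rather than a different one.
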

\begin{proof}[Proof of Lipschitz continuity]
$$
\begin{aligned}
\abs*{\epivio{b_0}{\pi} - \epivio{b_0}{\pi'}}
&\leq
\abs*{\max_{n \in \zeroN} \brace*{J_{c_n, \U}(\pi) - b_n}
- \max_{m \in \zeroN} \brace*{J_{m, \U}(\pi') - b_m}
}\\
&\numeq{\leq}{a}
\max_{n \in \zeroN} 
\abs*{
J_{c_n, \U}(\pi) - b_n
- \paren*{J_{c_n, \U}(\pi') - b_n}
}
\numeq{\leq}{b} \Lipcnt \norm*{\pi - \pi'}_2
\end{aligned}
$$
where (a) uses \cref{lemma:max difference} and (b) is due to \cref{lemma:V-lipschitz}.
This concludes the proof of the Lipschitz continuity.
\end{proof}

\begin{proof}[Proof of weak convexity]
\looseness=-1
The weak convexity of 
$\epivio{b_0}{\pi} = \max_{n \in \zeroN} J_{c_n, \U}(\pi) - b_n$
immediately follows from the weak convexity of $J_{c_n,\U}(\pi)$ due to \cref{lemma:V-lipschitz} with \cref{lemma:point-wise maximum}.
\end{proof}

\begin{proof}[Proof of subdifferentiability]
\looseness=-1
Suppose that $\U$ is a finite set. The claim directly follows from \cref{lemma:finite-danskin} with the weak convexity of $J_{c_n, P}(\pi)$ due to \cref{lemma:V-lipschitz}.

\looseness=-1
Suppose that $\U$ is a compact set such that, for any $\pi \in \Pi$, $\nabla J_{c_n, P}(\pi)$ is continuous with respect to $P \in \U$.
The envelope theorem (\cref{lemma:danskin's theorem}) indicates that, for any $n \in \zeroN$,
$$
\partial J_{c_n, \U}(\pi)= \conv\brace*{\nabla J_{c_n, P}(\pi) \given 
P \in \argmax_{P \in \U} J_{c_n, P}(\pi) - b_n}\;.
$$
Then, using \cref{lemma:finite-danskin} with the weak convexity of $J_{c_n, \U}(\pi)$ due to \cref{lemma:weak-convexity}, we have
$$
\begin{aligned}
\partial \epivio{b_0}{\pi}
&= \conv\brace*{\grad \given \grad \in \partial J_{c_n, \cU}(\pi) \;\text{ where }\; n \in \argmax_{n \in \zeroN} J_{c_n, \cU}(\pi) - b_n}\\
&= \conv\brace*{\nabla J_{c_n, P}(\pi) \given n, P \in \argmax_{(n, P) \in \zeroN \times \U} J_{c_n, P}(\pi) - b_n}\;.
\end{aligned}
$$
\end{proof}

\subsection{Proof of \texorpdfstring{\cref{theorem: Phi gradient dominance}}{Theorem 3}}\label{subsec:proof of grad-dominance}

\begin{proof}[Proof of \cref{theorem: Phi gradient dominance}]
\looseness=-1
We introduce shorthands $\cG$ and $\cW$ such that
\begin{equation}
\cG =
\brace*{\nabla J_{c_n, P}(\pi) \given 
n, P \in \cW_{b_0}(\pi)
}
\;\text{ and }\;
\cW = \argmax_{(n, P) \in \zeroN \times \cU} J_{c_n, P}(\pi) - b_n\;.
\end{equation}

\looseness=-1
Let $\pi^\star_{b_0} \in \argmin_{\pi \in \Pi}\epivio{b_0}{\pi}$.
For any $\pi \in \Pi$ and $b_0 \in \R$, we have
\begin{equation}\label{eq:Phi-dominance-tmp1}
\begin{aligned}
& \epivio{b_0}{\pi} - \epivio{b_0}{\pi^\star_{b_0}} \\
&= \paren*{\max_{n \in \zeroN} \max_{P \in \U}J_{c_n, P}(\pi) - b_n} - \paren*{\max_{n \in \zeroN} \max_{P \in \U} J_{c_n, P}(\pi^\star_{b_0}) - b_n} \\
&= \paren*{\min_{n, P \in \cW}J_{c_n, P}(\pi) - b_n} - \paren*{\max_{n \in \zeroN} \max_{P \in \U} J_{c_n, P}(\pi^\star_{b_0}) - b_n} \\
&\leq \min_{n, P \in \cW}\paren*{J_{c_n, P}(\pi) - b_n} - \paren*{J_{c_n, P}(\pi^\star_{b_0}) - b_n} \\
&= \min_{n, P \in \cW}J_{c_n, P}(\pi)- J_{c_n, P}(\pi^\star_{b_0}) \\
&\leq \min_{n, P \in \cW}J_{c_n, P}(\pi)- \min_{\pi' \in \Pi}J_{c_n, P}(\pi') \\
&\numeq{\leq}{a} H
\min_{n, P \in \cW}
\norm*{\frac{\occ{\pi^\star_{n, P}}_{P}}{\mu} }_\infty
\underbrace{\max_{\pi' \in \Pi} \left\langle \pi - \pi', \nabla_\pi J_{c_n, P}(\pi)\right\rangle}_{\geq 0 \text{ when $\pi'$ is greedy to $\nabla_\pi J_{c_n, P}(\pi)$}}\\
&\leq D H
\min_{n, P \in \cW}
\max_{\pi' \in \Pi} \left\langle \pi - \pi', \nabla_\pi J_{c_n, P}(\pi)\right\rangle\\
&= D H
\min_{\grad \in \subgrad}\max_{\pi' \in \Pi} \left\langle \pi - \pi', \grad\right\rangle \;, &&
\end{aligned}
\end{equation}
where (a) uses \cref{lemma:V-gradient dominance}.

\looseness=-1
The claim holds by showing that 
\begin{equation}\label{eq:Phi-grad-domiance-goal}
\min_{\grad \in \subgrad}\max_{\pi' \in \Pi} \left\langle \pi - \pi', \grad\right\rangle 
=
\min_{\grad \in \partial \epivio{b_0}{\pi}}\max_{\pi' \in \Pi} \left\langle \pi - \pi', \grad\right\rangle \;.
\end{equation}
Since $\conv\brace{\subgrad} = \partial \epivio{b_0}{\pi}$ due to \cref{lemma:weak-convexity}, \cref{eq:Phi-grad-domiance-goal} holds when there exists a $g^\star \in \argmin_{g \in \conv\brace{\subgrad}}\max_{\pi' \in \Pi} \langle \pi - \pi', g\rangle$ such that $g^\star \in \cG$.

\looseness=-1
Let $z^\star \in \argmax_{\pi' \in \Pi} \min_{g \in \conv\brace{\subgrad}}\langle \pi - \pi', g\rangle$. 
For any $g^\star \in \argmin_{g \in \conv\brace{\subgrad}}\max_{\pi' \in \Pi} \langle \pi - \pi', g\rangle$, it holds that
\begin{equation}
\label{eq:Phi-dominance-tmp2}
\begin{aligned}
\max_{\pi' \in \Pi} \langle \pi - \pi', g^\star\rangle
&=
\min_{g \in \conv\{\subgrad\}} \max_{\pi' \in \Pi} \langle \pi - \pi', g\rangle\\
&\numeq{=}{a} \max_{\pi' \in \Pi} \min_{g \in \conv\{\subgrad\}} \langle \pi - \pi', g\rangle\\
&= \min_{g \in \conv\{\subgrad\}} \langle \pi - z^\star, g\rangle\\
&\numeq{=}{b}
\min_{g \in \subgrad} \langle \pi - z^\star, g\rangle 
\end{aligned}
\end{equation}
where (a) uses Sion's minimax theorem (\cref{lemma:sion-minimax}) with the convexity of $\Pi$ and $\conv\brace*{\subgrad}$, and (b) uses \cref{lemma:lp on convex hull}.

\looseness=-1
Note that
\begin{equation}\label{eq:duality-solution-equality}
\begin{aligned}
\langle \pi - z^\star, g^\star\rangle
\leq \max_{\pi' \in \Pi} \langle \pi - \pi', g^\star\rangle
\numeq{=}{a} \min_{g \in \conv\brace*{\subgrad}} \langle \pi - z^\star, g\rangle
\leq \langle \pi - z^\star, g^\star\rangle\;,
\end{aligned}
\end{equation}
where (a) is due to the third line of \cref{eq:Phi-dominance-tmp2}.
The inequality must be equality.
Accordingly, 
$$
\langle \pi - z^\star, g^\star\rangle\;
\numeq{=}{a}
\max_{\pi' \in \Pi} \langle \pi - \pi', g^\star\rangle
\numeq{=}{b}
\min_{g \in \subgrad} \langle \pi - z^\star, g\rangle \;,
$$
where (a) uses \cref{eq:duality-solution-equality} and (b) uses \cref{eq:Phi-dominance-tmp2}.
Therefore, $g^\star \in \subgrad$ and thus \cref{eq:Phi-grad-domiance-goal} holds.
This concludes the proof.
\end{proof}

\subsection{Proof of \texorpdfstring{\cref{theorem:binary-search-performance}}{Theorem 5}}\label{subsec:proof-of-binary-search}

\looseness=-1
To facilitate the analysis with estimation error, we present a slightly modified version of the epigraph form.
Let $\varepsilon\in \R$ be an admissible violation parameter. 
We introduce the following formulation:
\begin{equation}\label{eq:RTM problem}
({\normalfont\textbf{Epigraph}_\varepsilon})\quad \optret_\varepsilon \df \min_{b_0 \in [0, H]} b_0 \; \text{ such that }\; \epivios{b_0} \leq \varepsilon \;.
\end{equation}
Note that $\optret_\varepsilon$ is monotonically decreasing in $\varepsilon$.

\looseness=-1
Additionally, we introduce a slightly generalized version of \cref{theorem:epigraph is RCMDP-zero}:
\begin{lemma}\label{theorem:RTM-guarantee}
\looseness=-1
For any $\varepsilon_1, \varepsilon_2 \geq 0$, if $b_0$ and a policy $\pi \in \Pi$ satisfy $b_0 \leq \optret + \varepsilon_2$ and $\epivio{b_0}{\pi} \leq \varepsilon_1$, then $\pi$ is an $(\varepsilon_1 + \varepsilon_2)$-optimal policy.
\end{lemma}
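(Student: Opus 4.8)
The plan is to unfold the definition of $\epivio{b_0}{\pi}$ from \cref{eq:Delta-def} and split the maximum over $n \in \bbrack{0, N}$ into the single objective term ($n=0$) and the remaining constraint terms ($n \in \oneN$), each of which is bounded by the full maximum. Since the two hypotheses directly control (i) the value of the maximum and (ii) the relation between $b_0$ and $\optret$, the result should follow by a short chain of inequalities, with no optimization machinery needed.

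Concretely, I would first observe that the assumption $\epivio{b_0}{\pi} = \max_{n \in \bbrack{0, N}} \paren*{J_{c_n, \U}(\pi) - b_n} \leq \varepsilon_1$ forces every individual term of the maximum to be at most $\varepsilon_1$. Specializing to $n = 0$ yields $J_{c_0, \U}(\pi) - b_0 \leq \varepsilon_1$, and restricting the maximum to the index subset $\oneN \subset \bbrack{0, N}$ yields $\max_{n \in \oneN} \paren*{J_{c_n, \U}(\pi) - b_n} \leq \varepsilon_1$. Next, for the objective condition of \cref{def:eps-optimal-policy}, I would combine the first bound with the hypothesis $b_0 \leq \optret + \varepsilon_2$ to get $J_{c_0, \U}(\pi) \leq b_0 + \varepsilon_1 \leq \optret + \varepsilon_1 + \varepsilon_2$, i.e. $J_{c_0, \U}(\pi) - \optret \leq \varepsilon_1 + \varepsilon_2$. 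For the constraint condition, the bound $\max_{n \in \oneN} \paren*{J_{c_n, \U}(\pi) - b_n} \leq \varepsilon_1 \leq \varepsilon_1 + \varepsilon_2$ is immediate from $\varepsilon_2 \geq 0$. Both inequalities together are precisely the definition of an $(\varepsilon_1 + \varepsilon_2)$-optimal policy, which finishes the argument.

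There is no substantive obstacle here; the proof is essentially a two-line unpacking of definitions. The only point requiring slight care is that the threshold $b_0$ plays a double role: it is simultaneously the cap on the objective return appearing as the $n=0$ entry inside $\epivio{b_0}{\pi}$ and the quantity tied to $\optret$ through the second hypothesis. Recognizing that the $n=0$ term of the maximum is exactly what lets the objective-value bound $J_{c_0, \U}(\pi) \le b_0 + \varepsilon_1$ propagate into a suboptimality bound against $\optret$ is the one conceptual step, after which the bookkeeping is routine.
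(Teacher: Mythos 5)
Your proof is correct and matches the paper's own argument: both unpack $\epivio{b_0}{\pi} \leq \varepsilon_1$ termwise to get $J_{c_0, \U}(\pi) \leq b_0 + \varepsilon_1 \leq \optret + \varepsilon_1 + \varepsilon_2$ and $J_{c_n, \U}(\pi) \leq b_n + \varepsilon_1$ for $n \in \oneN$, then invoke \cref{def:eps-optimal-policy}. The paper's version is just a more compressed statement of the same two-line bookkeeping.
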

\begin{proof}
\looseness=-1
Note that
$J_{c_0, \cU}(\pi) \leq \optret + \varepsilon_1 + \varepsilon_2$ and $J_{n, \cU}(\pi) \leq  b_n + \varepsilon_1$ for any $n \in \oneN$.
The claim directly follows from \cref{def:eps-optimal-policy} and the fact that $\optret = J_{c_0, \cU}(\pi^\star)$.
\end{proof}    

\looseness=-1
For any $\widebar{b}_0 \in [\optret_{\varepsilon_1}, \optret + \varepsilon_2]$ with some $\varepsilon_1, \varepsilon_2 \geq 0$, the subroutine returns a policy $\barpi = \algooracle(\widebar{b}_0)$ such that
$$
\epivio{\widebar{b}_0}{\barpi} 
\numeq{\leq}{a} \min_{\pi' \in \Pi} \epivio{\widebar{b}_0}{\pi'} + \epsopt 
\numeq{\leq}{b} \min_{\pi' \in \Pi} \epivio{\optret_{\varepsilon_1}}{\pi'} + \epsopt 
\numeq{\leq}{c} \varepsilon_1 + \epsopt,
$$
\looseness=-1
where (a) is due to \cref{assumption:Phi-oracle}, (b) holds since $\epivio{b_0}{\pi}$ is monotonically decreasing in $b_0$, and (c) follows from \cref{eq:RTM problem}.
Consequently, by applying \cref{theorem:RTM-guarantee}, $\barpi$ is $(\varepsilon_1 + \varepsilon_2 + \epsopt)$-optimal.

\looseness=-1
The following intermediate lemma guarantees that the search space of \cref{algo:double-loop} always contains such $\widebar{b}_0$ with $\varepsilon_1 = \epsest$ and $\varepsilon_2 = \epsest + \epsopt$.
\begin{lemma}\label{lemma: good search-space}
Suppose that \cref{algo:double-loop} is run with algorithms $\evaloracle_n$ and $\algooracle$ that satisfy \cref{assumption:eval-oracle,,assumption:Phi-oracle}.
For any $k \in \bbrack{0,K}$, 
$[i^{(k)}, j^{(k)}] \cap [\optret_{\epsest}, \optret + \epsest + \epsopt] \neq \emptyset$.
\end{lemma}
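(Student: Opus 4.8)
The plan is to prove the statement by induction on $k$, maintaining the invariant that $[i^{(k)}, j^{(k)}] \cap [\optret_{\epsest}, \optret + \epsest + \epsopt] \neq \emptyset$. Write $I^\star \df [\optret_{\epsest}, \optret + \epsest + \epsopt]$. The base case is immediate: $[i^{(0)}, j^{(0)}] = [0, H]$, and since $\optret_{\epsest} \le \optret \le \optret + \epsest + \epsopt$ (the first inequality because $\epivios{\optret}=0\le \epsest$ by \cref{lemma: phi-properties}) with $\optret \in [0, H]$, the point $\optret$ lies in both $[0,H]$ and $I^\star$. For the inductive step, observe that the update \cref{eq:binary update} only ever replaces one endpoint by the midpoint $b_0^{(k)} \in [i^{(k)}, j^{(k)}]$, so the whole argument reduces to two monotone sub-claims controlling the sign of $\estvio^{(k)}$: $(\mathrm{A})$ if $\estvio^{(k)} > 0$ then $b_0^{(k)} \le \optret + \epsest + \epsopt$, and $(\mathrm{B})$ if $\estvio^{(k)} \le 0$ then $b_0^{(k)} \ge \optret_{\epsest}$. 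Given these, when $\estvio^{(k)} > 0$ the lower endpoint is raised to $b_0^{(k)}$, which by $(\mathrm{A})$ does not exceed the right end of $I^\star$, so the intersection survives; symmetrically, when $\estvio^{(k)} \le 0$ the upper endpoint is lowered to $b_0^{(k)}$, which by $(\mathrm{B})$ does not drop below the left end of $I^\star$.

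The key tool for both sub-claims is the two-sided error bound $\abs{\estvio^{(k)} - \epivio{b_0^{(k)}}{\pi^{(k)}}} \le \epsest$, which follows from \cref{lemma:max difference} together with \cref{assumption:eval-oracle}. I would prove $(\mathrm{B})$ first, as it is the cleaner direction. Arguing by contrapositive, suppose $b_0^{(k)} < \optret_{\epsest}$. By the definition of $\optret_{\epsest}$ in \cref{eq:RTM problem} and the monotonicity of $\epivios{\cdot}$ (\cref{lemma: phi-properties}), this forces $\epivios{b_0^{(k)}} > \epsest$. Using the lower side of the error bound together with the trivial fact $\epivio{b_0^{(k)}}{\pi^{(k)}} \ge \min_{\pi \in \Pi}\epivio{b_0^{(k)}}{\pi} = \epivios{b_0^{(k)}}$, we obtain $\estvio^{(k)} \ge \epivio{b_0^{(k)}}{\pi^{(k)}} - \epsest \ge \epivios{b_0^{(k)}} - \epsest > 0$, contradicting $\estvio^{(k)} \le 0$.

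For $(\mathrm{A})$ I would again argue by contrapositive and exploit the feasibility of the optimal policy $\pi^\star$. If $b_0^{(k)} > \optret + \epsest + \epsopt$, then evaluating $\epivio{b_0^{(k)}}{\cdot}$ at $\pi^\star$ and using $J_{c_0, \cU}(\pi^\star) = \optret$ together with $J_{c_n, \cU}(\pi^\star) \le b_n$ for $n \in \bbrack{1, N}$ yields an upper bound on $\epivios{b_0^{(k)}} = \min_{\pi \in \Pi} \epivio{b_0^{(k)}}{\pi}$; feeding this through the \cref{assumption:Phi-oracle} suboptimality of $\algooracle$ and the upper side of the error bound, $\estvio^{(k)} \le \epivio{b_0^{(k)}}{\pi^{(k)}} + \epsest \le \epivios{b_0^{(k)}} + \epsopt + \epsest$, should drive $\estvio^{(k)}$ below zero. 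I expect this increase branch to be the main obstacle: the term $\optret - b_0^{(k)}$ from the objective threshold is strictly negative with margin exceeding $\epsest + \epsopt$, but $\epivio{b_0^{(k)}}{\pi^\star}$ is a maximum that also ranges over the constraint slacks $J_{c_n, \cU}(\pi^\star) - b_n$, so the desired sign is clean precisely when the objective threshold dominates those slacks at $\pi^\star$. Making this domination rigorous — converting the chosen margin $\epsest + \epsopt$ in the right endpoint into a genuinely negative $\estvio^{(k)}$ — is the delicate step, and it is where the specific choice of the endpoint $\optret + \epsest + \epsopt$ and the careful bookkeeping of the two approximation errors must be reconciled. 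Once $(\mathrm{A})$ and $(\mathrm{B})$ are established, assembling them with the midpoint update completes the induction.
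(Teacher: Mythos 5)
Your induction skeleton, the base case, and sub-claim $(\mathrm{B})$ all coincide with the paper's proof of \cref{lemma: good search-space}: the paper's decrease branch (its \cref{eq:phi leq 0}) shows $\epivios{b_0^{(k)}} \leq \estvio^{(k)} + \epsest \leq \epsest$ when $\estvio^{(k)} \leq 0$, hence $b_0^{(k)} \geq \optret_{\epsest}$ by the definition of \cref{eq:RTM problem}, which is exactly your $(\mathrm{B})$ in contrapositive form. The genuine gap is that you never prove sub-claim $(\mathrm{A})$: you reduce the increase branch to showing that the objective term $\optret - b_0^{(k)}$ dominates the constraint slacks of $\pi^\star$ inside the maximum defining $\epivio{b_0^{(k)}}{\pi^\star}$, call this ``the delicate step,'' and stop. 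Since $(\mathrm{A})$ is the load-bearing half of the induction, the proposal as written is an outline, not a proof.

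For comparison, the paper closes $(\mathrm{A})$ in one line: from $\estvio^{(k)} > 0$ it derives $-\epsest - \epsopt < \epivio{b_0^{(k)}}{\pi^{(k)}} - \epsopt \leq \epivio{b_0^{(k)}}{\pi^\star}$ and then asserts $\epivio{b_0^{(k)}}{\pi^\star} = \optret - b_0^{(k)}$ ``since $\pi^\star$ is a feasible policy'' (step (c) of its \cref{eq:phi geq 0}). You should note that this equality is precisely the domination you flagged: feasibility only gives $J_{c_n, \U}(\pi^\star) - b_n \leq 0$ for $n \in \bbrack{1, N}$, so $\epivio{b_0^{(k)}}{\pi^\star} = \max\{\optret - b_0^{(k)},\; \max_{n \in \bbrack{1,N}} (J_{c_n, \U}(\pi^\star) - b_n)\}$, and the upper bound $\epivio{b_0^{(k)}}{\pi^\star} \leq \optret - b_0^{(k)}$ that the chain needs (the reverse inequality is automatic) holds only when the objective term attains the maximum, e.g.\ whenever $b_0^{(k)} \leq \optret$ or the slacks at $\pi^\star$ lie below $-(\epsest + \epsopt)$. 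Your worry is substantive: if a constraint is tight at every near-optimal feasible policy (say $b_1 = \min_{\pi} J_{c_1, \U}(\pi)$), then for $b_0^{(k)} \gg \optret$ one still has $\epivios{b_0^{(k)}} = 0$, and admissible oracles (a subroutine returning an $\epsopt$-violating policy, an evaluator erring upward by $\epsest$) can produce $\estvio^{(k)} \in (0, \epsopt + \epsest]$, triggering the increase branch at a point past $\optret + \epsest + \epsopt$; with exact oracles ($\epsest = \epsopt = 0$) the implication does hold, since $\estvio^{(k)} > 0$ then forces $\epivios{b_0^{(k)}} > 0$ and hence $b_0^{(k)} < \optret$ by \cref{lemma: phi-properties}. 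So to finish along the paper's lines you must either invoke the equality as the paper does—accepting that it implicitly assumes the objective term dominates—or add a strict-feasibility margin; as submitted, your proof of the lemma is incomplete at exactly this point.
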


\begin{proof}
\looseness=-1
The claim holds for $k=0$. 
Suppose that the claim holds for a fixed $k$.
Recall $\estvio^{(k)}$ defined in \cref{eq:estvio-def}.
If $\estvio^{(k)} > 0$, it holds that 
\begin{equation}\label{eq:phi geq 0}
- \epsest - \epsopt \numeq{<}{a} \estvio^{(k)} - \abs*{\estvio^{(k)} - \epivio{b_0^{(k)}}{\pi^{(k)}}}-\epsopt \leq
\epivio{b_0^{(k)}}{\pi^{(k)}} - \epsopt
\numeq{\leq}{b}
\epivio{b^{(k)}_0}{\pi^\star}
\numeq{=}{c} \optret - b_0^{(k)}
\end{equation}
where (a) is due to \cref{assumption:eval-oracle} with $\estvio^{(k)} > 0$, (b) is due to \cref{assumption:Phi-oracle}, and (c) holds since $\pi^\star$ is a feasible policy. 
Combining this with the induction assumption and the update rule of \cref{eq:binary update}, we have $ i^{(k+1)}=b_0^{(k)} \leq \optret + \epsest + \epsopt$ and $\optret_{\epsest} \leq j^{(k+1)}$.
Hence, $[i^{(k+1)}, j^{(k+1)}] \cap [\optret_{\epsest}, \optret + \epsest + \epsopt] \neq \emptyset$ when $\estvio^{(k)} > 0$.

\looseness=-1
On the other hand, if $\estvio^{(k)} \leq 0$, we have
\begin{equation}\label{eq:phi leq 0}
\min_{\pi} \epivio{b_0^{(k)}}{\pi} 
\leq \epivio{b_0^{(k)}}{\pi^{(k)}}
\leq \estvio^{(k)} + \epsest \leq \epsest\;.
\end{equation}
Since $b_0^{(k)}$ is the feasible solution to \cref{eq:RTM problem}, it holds that $\optret_{\epsest} \leq b^{(k)}_0 = j^{(k+1)}$.
Accordingly, we have $[i^{(k+1)}, j^{(k+1)}] \cap [\optret_{\epsest}, \optret + \epsest + \epsopt] \neq \emptyset$.
Therefore, the claim holds for any $k \in \bbrack{0,K}$.
\end{proof}

\looseness=-1
We are now ready to prove \cref{theorem:binary-search-performance}.
\begin{proof}[Proof of \cref{theorem:binary-search-performance}]
\looseness=-1
Note that $j^{(k)} - i^{(k)} \leq \paren*{j^{(0)} - i^{(0)}}2^{-k} = H2^{-k}$ due to the update rule of \cref{eq:binary update}.
According to \cref{lemma: good search-space}, we have
$\optret_{\epsest} \leq j^{(K)} \leq \optret + \epsest + \epsopt + H2^{-K}$.
Additionally, the returned policy $\pi_{\mathrm{ret}}$ satisfies
$$
\epivio{j^{(K)}}{\pi_{\mathrm{ret}}}
\numeq{\leq}{a}
\min_{\pi \in \Pi}
\epivio{j^{(K)}}{\pi} + \epsopt
\numeq{\leq}{b}
\min_{\pi \in \Pi}
\epivio{\optret_{\epsest}}{\pi} + \epsopt
\leq \epsest + \epsopt\;,
$$
where (a) uses \cref{assumption:Phi-oracle} and (b) is due to $\optret_{\epsest} \leq j^{(K)}$ and the fact that $\min_{\pi}\epivio{b_0}{\pi}$ is monotonically decreasing in $b_0$.
Applying this to \cref{theorem:RTM-guarantee} with $j^{(K)} \leq \optret + \epsest + \epsopt + H 2^{-K}$ concludes the proof.
\end{proof}

\subsection{Proof of \texorpdfstring{\cref{theorem:policy grad convergence}}{Theorem 4}}\label{sec:policy-grad-convergence-proof}

We prove the following restatement of \cref{theorem:policy grad convergence} with concrete values.

\begin{theorem}[Restatement of \cref{theorem:policy grad convergence}]\label{theorem:policy grad convergence-restate}
Suppose \cref{assumption:uncertainty set,,assumption:init-dist} hold.
Suppose that \cref{algo:Min-Phi} is run with algorithms $\evaloracle_n$ and $\gradoracle$ that satisfy \cref{assumption:eval-oracle} and \cref{assumption:grad-approx}.
Let 
$$C \df \frac{\Lipcnt}{2\smtcnt} + 2DH\sqrt{S} = \frac{1}{2 \gamma H \sqrt{A}} + 2DH\sqrt{S} \underbrace{=}_{\text{when $\gamma \approx 1$}} \tiO(DH\sqrt{S})\;,
$$
where $\Lipcnt$ and $\smtcnt$ are defined in \cref{lemma:weak-convexity} and $D$ is defined in \cref{theorem: Phi gradient dominance}.
Assume that the evaluators $\evaloracle$ and $\gradoracle$ are sufficiently accurate such that 
$$
\begin{aligned}
\epsgrd = C_{\partial}\varepsilon^2\;\text{ where }\; C_\partial \df \frac{1}{1024 C^2 \smtcnt \sqrt{S}}
\quad \text{ and }\quad
\epsest = C_{J}\varepsilon^2\;\text{ where }\; C_J \df \frac{1}{1024 C^2 \smtcnt}\;.
\end{aligned}
$$
Set $\alpha = C_\alpha \varepsilon^2$ and $T = C_T\varepsilon^{-4}$ such that
$$
\begin{aligned}
C_\alpha \df \frac{1}{64C^2 \smtcnt (\Lipcnt^2 + \epsgrd)}
\quad\text{ and }\quad
C_T \df 4096 C^4 \smtcnt^2 S(\Lipcnt^2 + \epsgrd^2) = \tiO\paren*{D^4 S^3 A^3 H^{14}}
\;.
\end{aligned}
$$
Then, \cref{algo:double-loop} returns a policy $\pi^{(t^\star)}$ such that 
$$
\epivio{b_0}{\pi^{(t^\star)}} - \min_{\pi \in \Pi} \epivio{b_0}{\pi} \leq \varepsilon\;.
$$
\end{theorem}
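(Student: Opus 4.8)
The plan is to treat \cref{algo:Min-Phi} as an inexact projected subgradient method on the $\smtcnt$-weakly convex objective $\epivionot{b_0}$ (\cref{lemma:weak-convexity}) and to track progress through its Moreau envelope, in the style of the Davis--Drusvyatskiy analysis. Fix $\tau \df 1/(2\smtcnt)$, which is admissible since $\tau < 1/\smtcnt$, and write $\widebar{\pi}^{(t)} \df \argmin_{\pi' \in \Pi}\{\epivio{b_0}{\pi'} + \frac{1}{2\tau}\norm{\pi^{(t)} - \pi'}_2^2\}$ for the proximal point of iterate $\pi^{(t)}$, so that $\nabla(\ME_\tau \comp \epivionot{b_0})(\pi^{(t)}) = \frac{1}{\tau}(\pi^{(t)} - \widebar{\pi}^{(t)})$ by \cref{lemma:Moreau-gradient}. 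The norm $\norm{\pi^{(t)} - \widebar{\pi}^{(t)}}_2$ plays the role of the near-stationarity measure that the iterates drive to zero.

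The first step is a one-step descent bound on the envelope. Since $\widebar{\pi}^{(t)} \in \Pi$ and the projection is nonexpansive, expanding $\norm{\proj_{\Pi}(\pi^{(t)} - \alpha \grad^{(t)}) - \widebar{\pi}^{(t)}}_2^2 \le \norm{\pi^{(t)} - \alpha\grad^{(t)} - \widebar{\pi}^{(t)}}_2^2$ and using the envelope's defining minimum yields
\[
(\ME_\tau\comp\epivionot{b_0})(\pi^{(t+1)}) \le (\ME_\tau\comp\epivionot{b_0})(\pi^{(t)}) - \tfrac{\alpha}{\tau}\langle \grad^{(t)}, \pi^{(t)} - \widebar{\pi}^{(t)}\rangle + \tfrac{\alpha^2}{2\tau}\norm{\grad^{(t)}}_2^2 .
\]
The crux, and the step I expect to be the main obstacle, is lower-bounding the inner product despite two sources of inexactness: $\grad^{(t)} = \gradoracle_{n^{(t)}}(\pi^{(t)})$ is only an $\epsgrd$-approximate subgradient of the single component $J_{c_{n^{(t)}},\U}$ (\cref{assumption:grad-approx}), and the index $n^{(t)}$ is chosen by the $\epsest$-inexact evaluator, so it is only a $2\epsest$-approximate maximizer. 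Rather than forcing $\grad^{(t)} \in \partial\epivio{b_0}{\pi^{(t)}}$, I exploit the pointwise-max structure: weak convexity of $J_{c_{n^{(t)}},\U}$ at $\pi^{(t)}$ against $\widebar{\pi}^{(t)}$, combined with $J_{c_{n^{(t)}},\U}(\pi^{(t)}) - b_{n^{(t)}} \ge \epivio{b_0}{\pi^{(t)}} - 2\epsest$ (approximate argmax) and $J_{c_{n^{(t)}},\U}(\widebar{\pi}^{(t)}) - b_{n^{(t)}} \le \epivio{b_0}{\widebar{\pi}^{(t)}}$ (pointwise max), gives
\[
\langle \grad^{(t)}, \pi^{(t)} - \widebar{\pi}^{(t)}\rangle \ge \epivio{b_0}{\pi^{(t)}} - \epivio{b_0}{\widebar{\pi}^{(t)}} - 2\epsest - \tfrac{\smtcnt}{2}\norm{\pi^{(t)} - \widebar{\pi}^{(t)}}_2^2 - \epsgrd\norm{\pi^{(t)} - \widebar{\pi}^{(t)}}_2 .
\]
Using $\epivio{b_0}{\pi^{(t)}} - \epivio{b_0}{\widebar{\pi}^{(t)}} \ge \frac{1}{2\tau}\norm{\pi^{(t)} - \widebar{\pi}^{(t)}}_2^2$ (immediate from $(\ME_\tau\comp\epivionot{b_0})(\pi^{(t)}) \le \epivio{b_0}{\pi^{(t)}}$), the choice $\tau = 1/(2\smtcnt)$ so that $\frac{1}{2\tau}-\frac{\smtcnt}{2}=\frac{\smtcnt}{2}$, Young's inequality on the $\epsgrd$ term, and $\norm{\grad^{(t)}}_2 \le \Lipcnt + \epsgrd$, I obtain a clean recursion $(\ME_\tau\comp\epivionot{b_0})(\pi^{(t+1)}) \le (\ME_\tau\comp\epivionot{b_0})(\pi^{(t)}) - c\,\smtcnt^2\alpha\norm{\pi^{(t)}-\widebar{\pi}^{(t)}}_2^2 + O\!\big(\smtcnt\alpha\epsest + \alpha\epsgrd^2 + \smtcnt\alpha^2(\Lipcnt+\epsgrd)^2\big)$ for an absolute constant $c>0$.

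Telescoping $t=0,\dots,T-1$ with the budget $(\ME_\tau\comp\epivionot{b_0})(\pi^{(0)}) - \epivios{b_0} \le \epivio{b_0}{\pi^{(0)}} - \epivios{b_0} = O(H)$ and dividing by $T$ shows the best stationarity index $t^{\dagger} \df \argmin_t \norm{\pi^{(t)}-\widebar{\pi}^{(t)}}_2$ satisfies $\norm{\pi^{(t^{\dagger})}-\widebar{\pi}^{(t^{\dagger})}}_2^2 = O\!\big(\frac{H}{\smtcnt^2\alpha T} + \frac{\epsest}{\smtcnt} + \frac{\epsgrd^2}{\smtcnt^2} + \frac{\alpha(\Lipcnt+\epsgrd)^2}{\smtcnt}\big)$. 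The stated parameters make every term $O(\varepsilon^2/\smtcnt^2)$; in particular the factor $(\Lipcnt^2+\epsgrd)^{-1}$ in $C_\alpha$ is precisely what cancels the $\Lipcnt^2$ arising from the step-size error term. Hence $\norm{\nabla(\ME_\tau\comp\epivionot{b_0})(\pi^{(t^{\dagger})})}_2 = 2\smtcnt\norm{\pi^{(t^{\dagger})}-\widebar{\pi}^{(t^{\dagger})}}_2 = O(\varepsilon)$. To convert this into a suboptimality bound at the proximal point I apply \cref{lemma:moreau-sugrad-bound} to extract a genuine $\grad \in \partial\epivio{b_0}{\widebar{\pi}^{(t^{\dagger})}}$ with $\max_{y\in\Pi}\langle \grad, \widebar{\pi}^{(t^{\dagger})} - y\rangle \le \max_{y\in\Pi}\langle \nabla(\ME_\tau\comp\epivionot{b_0})(\pi^{(t^{\dagger})}), \widebar{\pi}^{(t^{\dagger})} - y\rangle \le \norm{\nabla(\ME_\tau\comp\epivionot{b_0})(\pi^{(t^{\dagger})})}_2\,\mathrm{diam}(\Pi)$, where $\mathrm{diam}(\Pi) \le \sqrt{2S}$; feeding $\grad$ into the gradient-dominance inequality of \cref{theorem: Phi gradient dominance} at $\widebar{\pi}^{(t^{\dagger})}$ gives $\epivio{b_0}{\widebar{\pi}^{(t^{\dagger})}} - \epivios{b_0} = O(DH\sqrt{S}\,\varepsilon)$.

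Finally, I transfer the bound from the proximal point to an actual iterate by Lipschitz continuity, $\abs{\epivio{b_0}{\pi^{(t^{\dagger})}} - \epivio{b_0}{\widebar{\pi}^{(t^{\dagger})}}} \le \Lipcnt\norm{\pi^{(t^{\dagger})}-\widebar{\pi}^{(t^{\dagger})}}_2 = O(\tfrac{\Lipcnt}{\smtcnt}\varepsilon)$, and then from $t^{\dagger}$ to the returned index $t^{\star} = \argmin_t \estvio^{(t)}$ using \cref{lemma:max difference} with \cref{assumption:eval-oracle}, which yields $\abs{\estvio^{(t)} - \epivio{b_0}{\pi^{(t)}}} \le \epsest$ and hence $\epivio{b_0}{\pi^{(t^{\star})}} \le \estvio^{(t^{\star})} + \epsest \le \estvio^{(t^{\dagger})} + \epsest \le \epivio{b_0}{\pi^{(t^{\dagger})}} + 2\epsest$. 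Collecting the three contributions gives total suboptimality $O\big((\tfrac{\Lipcnt}{2\smtcnt} + 2DH\sqrt{S})\varepsilon\big) = O(C\varepsilon)$, and the stated constants $C_\alpha, C_T, C_\partial, C_J$ are calibrated so this is at most $\varepsilon$. The decisive difficulty throughout is the key inner-product bound: managing the gradient-oracle error and the approximate constraint selection simultaneously while keeping the weak-convexity inequalities pointing the right way, which succeeds only because the pointwise maximum lets a subgradient of the selected component substitute for a subgradient of $\epivionot{b_0}$.
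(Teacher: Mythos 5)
Your proof is correct and takes essentially the same route as the paper's: a Moreau-envelope descent analysis of the weakly convex $\epivionot{b_0}$ in which the oracle errors are handled exactly as in \cref{lemma:sum-grad-norm-square} (a true subgradient of the selected component substitutes for one of $\epivionot{b_0}$, with $2\epsest$ from the approximate argmax), followed by the same conversion chain of \cref{lemma:moreau-sugrad-bound}, \cref{theorem: Phi gradient dominance}, and Lipschitz transfer that yields the identical constant $C = \frac{\Lipcnt}{2\smtcnt} + 2DH\sqrt{S}$, and the same $2\epsest$ argument for the returned index $t^\star$. The only deviations are cosmetic bookkeeping (Young's inequality versus the $2\sqrt{S}$ diameter bound on the gradient-error cross term, an $O(H)$ versus $4\smtcnt S$ initial envelope budget, and converting at the best-stationarity iterate rather than averaging over iterates).
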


We first introduce the following useful lemma.

\begin{lemma}\label{theorem: Phi gradient dominance-Moreau}
\looseness=-1
Let $\paren*{\ME_{\frac{1}{2\smtcnt}} \comp \epivionot{b_0}}: \pi \mapsto \min _{\pi' \in \Pi}\brace*{\epivio{b_0}{\pi'}+\smtcnt\norm*{\pi-\pi'}^2_2}$
be the Moreau envelope function of $\epivio{b_0}{\pi}$ with parameter $1/2\smtcnt$.
For any policy $\pi \in \Pi$, 
$$
\epivio{b_0}{\pi} - \min_{\pi'\in \Pi}\epivio{b_0}{\pi'}
\leq C \norm*{\nabla \paren*{\ME_{\frac{1}{2\smtcnt}} \comp \epivionot{b_0}}(\pi)}_2\;.
$$
\end{lemma}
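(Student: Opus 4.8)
The plan is to relate the suboptimality gap to the Moreau-envelope gradient by routing everything through the proximal point. Set $\tau \df \frac{1}{2\smtcnt}$ and let $\barpi \in \argmin_{\pi' \in \Pi}\brace*{\epivio{b_0}{\pi'} + \smtcnt\norm*{\pi-\pi'}^2_2}$ be the proximal point defining the envelope at $\pi$. Since $\epivionot{b_0}$ is $\smtcnt$-weakly convex by \cref{lemma:weak-convexity} and $\tau < \smtcnt^{-1}$, \cref{lemma:Moreau-gradient} applies and gives $\nabla\paren*{\ME_{\frac{1}{2\smtcnt}}\comp\epivionot{b_0}}(\pi) = \frac{1}{\tau}\paren*{\pi - \barpi}$, so that $\norm*{\pi - \barpi}_2 = \tau\norm*{\nabla\paren*{\ME_{\frac{1}{2\smtcnt}}\comp\epivionot{b_0}}(\pi)}_2$. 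This identity is what converts the geometric displacement $\pi - \barpi$ into the envelope gradient norm appearing on the right-hand side.

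First I would split the gap into two terms,
$$
\epivio{b_0}{\pi} - \min_{\pi'\in\Pi}\epivio{b_0}{\pi'} = \paren*{\epivio{b_0}{\pi} - \epivio{b_0}{\barpi}} + \paren*{\epivio{b_0}{\barpi} - \min_{\pi'\in\Pi}\epivio{b_0}{\pi'}}.
$$
The first bracket is controlled by Lipschitz continuity (\cref{lemma:weak-convexity}): it is at most $\Lipcnt\norm*{\pi - \barpi}_2 = \Lipcnt\tau\norm*{\nabla\paren*{\ME_{\frac{1}{2\smtcnt}}\comp\epivionot{b_0}}(\pi)}_2 = \frac{\Lipcnt}{2\smtcnt}\norm*{\nabla\paren*{\ME_{\frac{1}{2\smtcnt}}\comp\epivionot{b_0}}(\pi)}_2$, which is exactly the first summand of $C$.

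For the second bracket I would invoke the gradient dominance of \cref{theorem: Phi gradient dominance} at the point $\barpi$: for every $\grad \in \partial\epivio{b_0}{\barpi}$ we have $\epivio{b_0}{\barpi} - \min_{\pi'\in\Pi}\epivio{b_0}{\pi'} \leq DH\max_{\pi'\in\Pi}\langle\barpi - \pi', \grad\rangle$. The key link is \cref{lemma:moreau-sugrad-bound}, which supplies one specific subgradient $\grad\in\partial\epivio{b_0}{\barpi}$ satisfying $\langle\grad, \barpi - \pi'\rangle \leq \langle\nabla\paren*{\ME_{\frac{1}{2\smtcnt}}\comp\epivionot{b_0}}(\pi), \barpi - \pi'\rangle$ for all $\pi'\in\Pi$. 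Substituting this $\grad$ and then applying Cauchy--Schwarz together with the $\ell_2$-diameter bound $\max_{\pi'\in\Pi}\norm*{\barpi - \pi'}_2 \leq 2\sqrt{S}$ (each state contributes $\norm*{\barpi(s,\cdot)-\pi'(s,\cdot)}_2 \leq \norm*{\barpi(s,\cdot)-\pi'(s,\cdot)}_1 \leq 2$) yields $\epivio{b_0}{\barpi} - \min_{\pi'\in\Pi}\epivio{b_0}{\pi'} \leq 2DH\sqrt{S}\norm*{\nabla\paren*{\ME_{\frac{1}{2\smtcnt}}\comp\epivionot{b_0}}(\pi)}_2$, the second summand of $C$. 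Summing the two bounds gives the claim.

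The main obstacle is the second bracket rather than the Lipschitz term. One cannot apply gradient dominance directly at $\pi$, because $\nabla\paren*{\ME_{\frac{1}{2\smtcnt}}\comp\epivionot{b_0}}(\pi)$ is the envelope gradient and need not be a subgradient of $\epivionot{b_0}$ at $\pi$. The entire reason for detouring through $\barpi$ is that the proximal optimality condition (\cref{lemma:moreau-sugrad-bound}) produces a \emph{genuine} subgradient of $\epivionot{b_0}$ at $\barpi$ whose directional behaviour is dominated by the envelope gradient, which is precisely what lets the weakly convex (nonconvex) gradient-dominance inequality of \cref{theorem: Phi gradient dominance} close the argument; care is needed to ensure the particular subgradient furnished by \cref{lemma:moreau-sugrad-bound} is the one fed into the ``for all $\grad$'' statement of \cref{theorem: Phi gradient dominance}, which is legitimate since that inequality holds for every subgradient.
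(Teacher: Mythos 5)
Your proof is correct and follows essentially the same route as the paper's own argument: the identical decomposition through the proximal point $\barpi$, with the first bracket handled by Lipschitz continuity plus \cref{lemma:Moreau-gradient}, and the second by feeding the specific subgradient from \cref{lemma:moreau-sugrad-bound} into \cref{theorem: Phi gradient dominance} with Cauchy--Schwarz and the $2\sqrt{S}$ diameter bound. Your closing remark on why one cannot apply gradient dominance directly at $\pi$ accurately captures the reason for the detour and matches the paper's logic.
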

\begin{proof}
\looseness=-1
Define $\barpi \df \argmin_{\pi' \in \Pi} \epivio{b_0}{\pi'} + \smtcnt \norm*{\pi - \pi'}_2^2$.
According to \cref{lemma:moreau-sugrad-bound} with $\tau=1 / 2\smtcnt$, there exists a subgradient $g \in \partial \epivio{b_0}{\barpi}$ such that, for any $\pi' \in \Pi$, 
\begin{equation}\label{eq:Phi-dominance-tmp3}
\begin{aligned}
\left\langle \barpi - \pi', g\right\rangle 
&\leq \left\langle \nabla \paren*{\ME_\frac{1}{2\smtcnt} \comp \epivionot{b_0}}(\pi), \barpi- \pi' \right \rangle \\
&\numeq{\leq}{a} \norm*{\nabla \paren*{\ME_\frac{1}{2\smtcnt} \comp \epivionot{b_0}}(\pi)}_2 \norm*{\barpi- \pi'}_2\\
&\numeq{\leq}{b} 2\sqrt{S}\norm*{\nabla \paren*{\ME_\frac{1}{2\smtcnt} \comp \epivionot{b_0}}(\pi)}_2\;,
\end{aligned}
\end{equation}
where (a) is due to the Cauchy–Schwarz inequality and (b) uses that, for any $\pi' \in \Pi$
\begin{equation}\label{eq:pi-pi bound}
\begin{aligned}
\left\|\barpi-\pi'\right\|_2&=\sqrt{\sum_{s\in \S} \sum_{a\in \A}\left(\barpi(s, a)-\pi'(s, a)\right)^2} 
\leq \sqrt{S} \max _{s\in \S} \sqrt{\sum_{a\in \A}\left(\barpi(s, a)-\pi'(s, a)\right)^2} \\
&\leq \sqrt{S} \max _{s \in \S} \sum_{a \in \A}\left|\barpi(s, a)-\pi'(s, a)\right| \leq 2 \sqrt{S}\;.
\end{aligned}
\end{equation}

\looseness=-1
Let $\pi^\star_{b_0} \in \argmin_{\pi \in \Pi}\epivio{b_0}{\pi}$.
Inserting this result into \cref{theorem: Phi gradient dominance}, we have
$$
\begin{aligned}
\epivio{b_0}{\barpi} - \epivio{b_0}{\pi^\star_{b_0}}
&\leq DH\max_{\pi' \in \Pi} \left\langle \barpi - \pi', g\right\rangle \quad \forall g \in \partial \epivio{b_0}{\barpi}\\
&\leq 
2DH \sqrt{S}\norm*{ 
\nabla \paren*{\ME_\frac{1}{2\smtcnt} \comp \epivionot{b_0}}(\pi)
}_2\;.
\end{aligned}
$$

Therefore,
$$
\begin{aligned}
\epivio{b_0}{\pi} - \epivio{b_0}{\pi_{b_0}^\star} 
&= 
\epivio{b_0}{\pi} - \epivio{b_0}{\barpi}
+ \epivio{b_0}{\barpi} - \epivio{b_0}{\pi^\star_{b_0}} \\
&\numeq{\leq}{a} 
\epivio{b_0}{\pi} - \epivio{b_0}{\barpi} +
2DH\sqrt{S}\norm*{ 
\nabla \paren*{\ME_\frac{1}{2\smtcnt} \comp \epivionot{b_0}}(\pi)
}_2\\
&\numeq{\leq}{b} 
\Lipcnt \norm*{\pi - \barpi}_2
+ 
2DH\sqrt{S}\norm*{ 
\nabla \paren*{\ME_\frac{1}{2\smtcnt} \comp \epivionot{b_0}}(\pi)
}_2\\
&\numeq{\leq}{c} 
\frac{\Lipcnt}{2\smtcnt}
\norm*{ 
\nabla \paren*{\ME_\frac{1}{2\smtcnt} \comp \epivionot{b_0}}(\pi)
}_2
+ 
2DH\sqrt{S}
\norm*{ 
\nabla \paren*{\ME_\frac{1}{2\smtcnt} \comp \epivionot{b_0}}(\pi)
}_2\;,
\end{aligned}    
$$    
where (a) is due to \cref{theorem: Phi gradient dominance}, (b) is due to the Lipschitz continuity by \cref{lemma:weak-convexity}, and (c) uses \cref{lemma:Moreau-gradient}.
This concludes the proof.
\end{proof}

\begin{lemma}\label{lemma:sum-grad-norm-square}
Under the settings of \cref{theorem:policy grad convergence-restate}, 
$$
\sum^{T-1}_{t=0}
\norm*{\nabla \paren*{\ME_{\frac{1}{2\smtcnt}}\comp \epivionot{b_0}}(\pi^{(t)})}_2^2
\leq 
\frac{16\smtcnt S}{\alpha}
+ 
4T\paren*{\alpha \smtcnt (\Lipcnt^2 + \epsgrd)
+ 4\smtcnt \epsgrd \sqrt{S}
+ 2 \smtcnt \epsest
}\;.
$$
\end{lemma}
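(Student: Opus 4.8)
The plan is to run the standard Moreau-envelope analysis of the approximate projected subgradient method for weakly convex functions, adapted to absorb the two sources of inexactness in \cref{algo:Min-Phi}: the approximate constraint selection from $\evaloracle$ (controlled by $\epsest$) and the approximate subgradient from $\gradoracle$ (controlled by $\epsgrd$). Write $M_{b_0} \df \ME_{\frac{1}{2\smtcnt}}\comp \epivionot{b_0}$ and, for each iterate, introduce the proximal point $\widehat{\pi}^{(t)} \df \argmin_{\pi'\in\Pi}\epivio{b_0}{\pi'} + \smtcnt\norm*{\pi^{(t)} - \pi'}_2^2$. Since $\epivionot{b_0}$ is $\smtcnt$-weakly convex (\cref{lemma:weak-convexity}) and $\tfrac{1}{2\smtcnt} < \smtcnt^{-1}$, \cref{lemma:Moreau-gradient} gives $\nabla M_{b_0}(\pi^{(t)}) = 2\smtcnt(\pi^{(t)} - \widehat{\pi}^{(t)})$, so it suffices to control $\smtcnt^2\norm*{\pi^{(t)} - \widehat{\pi}^{(t)}}_2^2$ along the trajectory and telescope.

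First I would derive a one-step recursion. Because $\widehat{\pi}^{(t)}\in\Pi$ is an admissible candidate in the minimization defining $M_{b_0}(\pi^{(t+1)})$, the projection $\proj_\Pi$ is nonexpansive, and $\widehat{\pi}^{(t)}$ achieves $M_{b_0}(\pi^{(t)}) = \epivio{b_0}{\widehat{\pi}^{(t)}} + \smtcnt\norm*{\pi^{(t)} - \widehat{\pi}^{(t)}}_2^2$, expanding the projected step $\pi^{(t+1)} = \proj_\Pi(\pi^{(t)} - \alpha\grad^{(t)})$ yields
\[
M_{b_0}(\pi^{(t+1)}) \leq M_{b_0}(\pi^{(t)}) - 2\alpha\smtcnt\langle \grad^{(t)}, \pi^{(t)} - \widehat{\pi}^{(t)}\rangle + \alpha^2\smtcnt\norm*{\grad^{(t)}}_2^2\;.
\]

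The crux—and the main obstacle—is lower-bounding $\langle \grad^{(t)}, \pi^{(t)} - \widehat{\pi}^{(t)}\rangle$, since $\grad^{(t)}$ is \emph{not} an honest subgradient of $\epivionot{b_0}$ at $\pi^{(t)}$: the index $n^{(t)}$ is only an approximate maximizer. The fix is to work with the single selected constraint function $J_{c_{n^{(t)}},\U}$, which is genuinely $\smtcnt$-weakly convex and for which an honest subgradient $\grad_\star \in \partial J_{c_{n^{(t)}},\U}(\pi^{(t)})$ lies within $\epsgrd$ of $\grad^{(t)}$ (\cref{assumption:grad-approx}). By \cref{assumption:eval-oracle}, the approximate selection still guarantees $J_{c_{n^{(t)}},\U}(\pi^{(t)}) - b_{n^{(t)}} \geq \epivio{b_0}{\pi^{(t)}} - 2\epsest$, while $J_{c_{n^{(t)}},\U}(\widehat{\pi}^{(t)}) - b_{n^{(t)}} \leq \epivio{b_0}{\widehat{\pi}^{(t)}}$ trivially. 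Applying the weak-convexity inequality (\cref{def:weak-convex}) to $J_{c_{n^{(t)}},\U}$ between $\pi^{(t)}$ and $\widehat{\pi}^{(t)}$ with $\grad_\star$, and bounding the gradient-error term via $\norm*{\pi^{(t)} - \widehat{\pi}^{(t)}}_2 \leq 2\sqrt{S}$ (as in \cref{eq:pi-pi bound}), gives
\[
\langle \grad^{(t)}, \pi^{(t)} - \widehat{\pi}^{(t)}\rangle \geq \epivio{b_0}{\pi^{(t)}} - \epivio{b_0}{\widehat{\pi}^{(t)}} - \tfrac{\smtcnt}{2}\norm*{\pi^{(t)} - \widehat{\pi}^{(t)}}_2^2 - 2\epsest - 2\epsgrd\sqrt{S}\;.
\]

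Finally I would convert this gap into $\norm*{\nabla M_{b_0}}_2^2$. Strong convexity of the proximal objective yields $\epivio{b_0}{\pi^{(t)}} - M_{b_0}(\pi^{(t)}) \geq \tfrac{\smtcnt}{2}\norm*{\pi^{(t)} - \widehat{\pi}^{(t)}}_2^2$, hence $\epivio{b_0}{\pi^{(t)}} - \epivio{b_0}{\widehat{\pi}^{(t)}} \geq \tfrac{3\smtcnt}{2}\norm*{\pi^{(t)} - \widehat{\pi}^{(t)}}_2^2$; substituting and using $\norm*{\nabla M_{b_0}(\pi^{(t)})}_2^2 = 4\smtcnt^2\norm*{\pi^{(t)} - \widehat{\pi}^{(t)}}_2^2$ collapses the recursion to $M_{b_0}(\pi^{(t+1)}) \leq M_{b_0}(\pi^{(t)}) - \tfrac{\alpha}{2}\norm*{\nabla M_{b_0}(\pi^{(t)})}_2^2 + 4\alpha\smtcnt\epsest + 4\alpha\smtcnt\epsgrd\sqrt{S} + \alpha^2\smtcnt\norm*{\grad^{(t)}}_2^2$. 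Summing over $t$, the left side telescopes to $M_{b_0}(\pi^{(0)}) - M_{b_0}(\pi^{(T)})$, which I bound by evaluating the envelope at the global minimizer of $\epivionot{b_0}$ and using $\norm*{\pi^{(0)} - \cdot}_2 \leq 2\sqrt{S}$ to get $O(\smtcnt S)$; bounding $\norm*{\grad^{(t)}}_2^2 \leq 2(\Lipcnt^2 + \epsgrd^2)$ via the Lipschitz subgradient bound of \cref{lemma:V-lipschitz} and \cref{assumption:grad-approx}; and then multiplying through by $2/\alpha$. Collecting the resulting constants (absorbing $\epsgrd^2 \leq \epsgrd$) produces the stated inequality; this last step is routine bookkeeping once the descent inequality above is in place.
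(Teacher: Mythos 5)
Your proposal is correct and takes essentially the same route as the paper's proof: a Moreau-envelope descent analysis of the inexact projected subgradient step, with the identical error decomposition---$2\epsest$ for the approximate index selection via \cref{assumption:eval-oracle} (including the observation that only the one-sided bound at the proximal point is needed), $2\epsgrd\sqrt{S}$ for the gradient error via the diameter bound \cref{eq:pi-pi bound}, the weak-convexity inequality for the single selected return $J_{c_{n^{(t)}},\U}$, and telescoping with $\paren*{\ME_{\frac{1}{2\smtcnt}}\comp\epivionot{b_0}}(\pi^{(0)})-\paren*{\ME_{\frac{1}{2\smtcnt}}\comp\epivionot{b_0}}(\pi^{(T)})\leq 4\smtcnt S$. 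Your two small deviations only tighten constants and still land inside the stated bound: the $\tfrac{3\smtcnt}{2}$ quadratic-growth bound at the proximal point (the paper only uses $\epivio{b_0}{\pi^{(t)}}\geq \paren*{\ME_{\frac{1}{2\smtcnt}}\comp\epivionot{b_0}}(\pi^{(t)})$, yielding $\tfrac{\alpha}{4}$ where you get $\tfrac{\alpha}{2}$), and the correct bound $\norm*{\grad^{(t)}}_2^2\leq 2(\Lipcnt^2+\epsgrd^2)$ (the paper's claimed $\norm*{\grad^{(t)}}_2^2\leq\Lipcnt^2+\epsgrd^2$ drops the cross term), with the $\epsgrd^2\leq\epsgrd$ absorption being harmless under the theorem's settings.
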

\begin{proof}
\looseness=-1
Recall that
$$
\begin{aligned}
&\pi^{(t+1)}\in  
\argmin_{\pi \in \Pi}
\left\langle 
\grad^{(t)}, \pi - \pi^{(t)}
\right\rangle + \frac{1}{2\alpha} \norm*{\pi - \pi^{(t)}}^2
= \proj_\Pi\paren*{\pi^{(t)} - \alpha \grad^{(t)}}\;.
\end{aligned}
$$


\looseness=-1
Define $\barpi^{(t)} \df \argmin_{\pi'} \epivio{b_0}{\pi'} + \smtcnt \norm*{\pi^{(t)} - \pi'}^2_2$.
Then, we have
\begin{equation}\label{eq:Moreau-pi-t1-bound-pre}
\begin{aligned}
\paren*{\ME_{\frac{1}{2\smtcnt}} \comp \epivionot{b_0}} (\pi^{(t+1)})
&= \min_{\pi \in \Pi} \epivio{b_0}{\pi} + \smtcnt \norm*{\pi^{(t+1)} - \pi}^2_2 \\
&\leq \epivio{b_0}{\barpi^{(t)}} + \smtcnt \norm*{\pi^{(t+1)} - \barpi^{(t)}}^2_2 \\
&= \epivio{b_0}{\barpi^{(t)}} + \smtcnt \norm*{\proj_\Pi\paren*{\pi^{(t)} - \alpha \grad^{(t)}} - \proj_{\Pi}\paren*{\barpi^{(t)}}}^2_2 \\
&\leq \epivio{b_0}{\barpi^{(t)}} + \smtcnt \norm*{\pi^{(t)} - \alpha \grad^{(t)} - \barpi^{(t)}}^2_2 \\
&= 
\underbrace{\epivio{b_0}{\barpi^{(t)}} + \smtcnt \norm*{\pi^{(t)} - \barpi^{(t)}}^2_2}_{=\paren*{\ME_{\frac{1}{2\smtcnt}}\comp \epivionot{b_0}}\paren*{\pi^{(t)}}}
+ 2\smtcnt \alpha \underbrace{\left \langle\grad^{(t)}, 
\barpi^{(t)} - \pi^{(t)} \right \rangle}_{\fd \; \circled{1}}
+ 
\alpha^2 \smtcnt \underbrace{\norm*{\grad^{(t)}}^2_2}_{\fd \; \circled{2}}\;.
\end{aligned}
\end{equation}
We further upper bound $\circled{1}$ and $\circled{2}$.
Recall $n^{(t)} \in \argmax_{n \in \bbrack{0, N}} \evaloracle_n(\pi^{(t)}) - b_n$ and $\grad^{(t)} = \gradoracle_{n^{(t)}}(\pi^{(t)})$.
Due to \cref{assumption:grad-approx}, there exists a vector $\grad' \in \R^{\aSA}$ that satisfies
\begin{equation}\label{eq:g-dash-def}
\grad' \in 
\brace*{\nabla J_{c_{n^{(t)}}, P^{(t)}}(\pi^{(t)}) \given 
P^{(t)} \in \argmax_{P \in \U} J_{c_{n^{(t)}}, P}(\pi^{(t)})
}
\;\text{ and }\;
\norm*{\grad^{(t)} - \grad'}_2^2 \leq \epsgrd^2\;.
\end{equation}

\looseness=-1
Thus, using \cref{lemma:V-lipschitz}, we have
$
\circled{2}
\leq 
\norm*{\grad'}^2_2 + \norm*{\grad^{(t)} - \grad'}^2_2
\leq \Lipcnt^2 + \epsgrd^2
$.
Furthermore, we have
$$
\begin{aligned}
\circled{1} &=  \left\langle\grad^{(t)}, \barpi^{(t)} - \pi^{(t)} \right \rangle
= \left\langle\grad', \barpi^{(t)} - \pi^{(t)} \right \rangle + 
\left\langle\grad^{(t)} - \grad', \barpi^{(t)} - \pi^{(t)} \right \rangle\\
&\numeq{\leq}{a}
\left\langle\grad', \barpi^{(t)} - \pi^{(t)} \right \rangle + 
\norm*{\grad^{(t)} - \grad'}_2 
\norm*{\barpi^{(t)} - \pi^{(t)}}_{2}
\numeq{\leq}{b}
\left\langle\grad', \barpi^{(t)} - \pi^{(t)} \right \rangle + 
2\epsgrd\sqrt{S}
\end{aligned}
$$
where (a) is due to the Cauchy–Schwarz inequality and (b) uses \cref{eq:pi-pi bound}.

\looseness=-1
By inserting the above inequalities to \cref{eq:Moreau-pi-t1-bound-pre}, using $\grad'$ defined in \cref{eq:g-dash-def}, we have
\begin{equation}\label{eq:Mphi-diff-bound-tmp}
\begin{aligned}
&\circled{3} \df 
2\smtcnt \alpha \left \langle\grad', \pi^{(t)} - \barpi^{(t)}\right \rangle\\
\leq &
\paren*{\ME_{\frac{1}{2\smtcnt}}\comp \epivionot{b_0}}(\pi^{(t)}) 
- 
\paren*{\ME_{\frac{1}{2\smtcnt}} \comp \epivionot{b_0}} (\pi^{(t+1)})
+ 
\alpha^2 \smtcnt (\Lipcnt^2 + \epsgrd^2)
+ 4\smtcnt\alpha \epsgrd \sqrt{S}
\;.
\end{aligned}
\end{equation}

\looseness=-1
Next, we are going to derive the lower bound of $\circled{3}$.
Define $\Delta_{b_0}^{(t)}(\pi)$ such that
\begin{equation}
\Delta^{(t)}_{b_0}(\pi) \df J_{c_{n^{(t)}}, \U}(\pi) - b_{n^{(t)}}\;.
\end{equation}

\looseness=-1
Additionally, let $\delta_n$ and $\widehat{\delta}_n$ be shorthand such that
$
\delta_n \df J_{c_{n}, \U}(\pi) - b_{n}
\; \text{ and }\;
\widehat{\delta}_n \df \evaloracle_n(\pi) - b_{n}
$.
Then, 
Due to \cref{assumption:eval-oracle}, for any $\pi$, we have
\begin{equation}\label{eq:Delta-t-gap}
\begin{aligned}
\abs*{
\Delta^{(t)}_{b_0}(\pi) - \epivio{b_0}{\pi}}
\numeq{\leq}{a}&
\underbrace{\abs*{
\Delta^{(t)}_{b_0}(\pi) 
- \widehat{\delta}_{n^{(t)}}}}_{\leq \epsest \text{ by \cref{assumption:eval-oracle}}}
+\abs*{\max_{n \in \bbrack{0, N}} \widehat{\delta}_n 
- \epivio{b_0}{\pi}
}\\
\numeq{\leq}{b}& 
\epsest + 
\max_{n \in \bbrack{0, N}}
\abs*{\widehat{\delta}_n - \delta_n}
\leq
2\epsest\;.
\end{aligned}
\end{equation}
where (a) is due to the definition of $n^{(t)}$ and (b) uses \cref{lemma:max difference}.

\looseness=-1
Due to the weak convexity of $\Delta^{(t)}_{b_0}(\pi)$ with respect to $\pi$ (\cref{lemma:weak-convexity}) and since $\grad' \in \partial\Delta^{(t)}_{b_0}(\pi^{(t)})$, 
$$
\begin{aligned}
&\circled{3} / 2\smtcnt \alpha =
\left\langle 
\grad', \pi^{(t)} - \barpi^{(t)} 
\right\rangle\\
&\geq 
\Delta^{(t)}_{b_0}(\pi^{(t)})
- \Delta^{(t)}_{b_0}(\barpi^{(t)})
- \frac{\smtcnt}{2}\norm*{\barpi^{(t)} - \pi^{(t)}}_2^2\\
&\geq 
-\underbrace{\abs*{\Delta^{(t)}_{b_0}(\pi^{(t)})
-\epivio{b_0}{\pi^{(t)}}
}}_{\leq \epsest \text{ by  \cref{eq:Delta-t-gap}}}
- \underbrace{\abs*{\epivio{b_0}{\barpi^{(t)}}
- \Delta^{(t)}_{b_0}(\barpi^{(t)})
}}_{\leq \epsest \text{ by \cref{eq:Delta-t-gap}}}
+\epivio{b_0}{\pi^{(t)}}
- \epivio{b_0}{\barpi^{(t)}}
- \frac{\smtcnt}{2}\norm*{\barpi^{(t)} - \pi^{(t)}}_2^2\\
&= \epivio{b_0}{\pi^{(t)}} + \smtcnt\norm*{\pi^{(t)} - \pi^{(t)}}_2^2
- \epivio{b_0}{\barpi^{(t)}} - \smtcnt\norm*{\barpi^{(t)} - \pi^{(t)}}_2^2
+ \frac{\smtcnt}{2}\norm*{\barpi^{(t)} - \pi^{(t)}}_2^2 - 2\epsest\\
&= \epivio{b_0}{\pi^{(t)}} + \smtcnt\norm*{\pi^{(t)} - \pi^{(t)}}_2^2
- \min_{\pi' \in \Pi} \paren*{\epivio{b_0}{\pi'} + \smtcnt\norm*{\pi' - \pi^{(t)}}_2^2}
+ \frac{\smtcnt}{2}\norm*{\barpi^{(t)} - \pi^{(t)}}_2^2- 2\epsest\\
&\geq \frac{\smtcnt}{2}\norm*{\barpi^{(t)} - \pi^{(t)}}_2^2- 2\epsest\\
&\numeq{=}{a} \frac{\smtcnt}{2} \norm*{\frac{1}{2\smtcnt}\nabla \paren*{\ME_{\frac{1}{2\smtcnt}}\comp \epivionot{b_0}}(\pi^{(t)})}_2^2
= \frac{1}{8\smtcnt} \norm*{\nabla \paren*{\ME_{\frac{1}{2\smtcnt}}\comp \epivionot{b_0}}(\pi^{(t)})}_2^2- 2\epsest\;,
\end{aligned}
$$
where (a) uses \cref{lemma:Moreau-gradient}.
By inserting this to \cref{eq:Mphi-diff-bound-tmp}, 
\begin{equation*}
\begin{aligned}
&\frac{\alpha}{4} \norm*{\nabla \paren*{\ME_{\frac{1}{2\smtcnt}}\comp \epivionot{b_0}}(\pi^{(t)})}_2^2- 8\alpha \smtcnt \epsest \\
\leq &
\paren*{\ME_{\frac{1}{2\smtcnt}}\comp \epivionot{b_0}}(\pi^{(t)}) 
- 
\paren*{\ME_{\frac{1}{2\smtcnt}} \comp \epivionot{b_0}} (\pi^{(t+1)})
+ 
\alpha^2 \smtcnt (\Lipcnt^2 + \epsgrd^2)
+ 4\alpha \smtcnt\epsgrd \sqrt{S}
\;.
\end{aligned}
\end{equation*}

\looseness=-1
By taking summation over $\sum^{T-1}_{t=0}$, 
$$
\begin{aligned}
\frac{\alpha}{4} 
\sum^{T-1}_{t=0}
\norm*{\nabla \paren*{\ME_{\frac{1}{2\smtcnt}}\comp \epivionot{b_0}}(\pi^{(t)})}_2^2
\leq &
\paren*{\ME_{\frac{1}{2\smtcnt}}\comp \epivionot{b_0}}(\pi^{(0)}) 
- 
\paren*{\ME_{\frac{1}{2\smtcnt}} \comp \epivionot{b_0}} (\pi^{(T)})\\
&+ 
T\paren*{\alpha^2 \smtcnt (\Lipcnt^2 + \epsgrd^2)
+ 4\alpha \smtcnt\epsgrd \sqrt{S}
+ 8\alpha \smtcnt \epsest 
}\;.
\end{aligned}
$$

Note that
$$
\begin{aligned}
&\paren*{\ME_{\frac{1}{2\smtcnt}}\comp \epivionot{b_0}}(\pi^{(0)}) 
- 
\paren*{\ME_{\frac{1}{2\smtcnt}} \comp \epivionot{b_0}} (\pi^{(T)})\\
=& 
\min _{\pi \in \Pi}\brace*{\epivio{b_0}{\pi}+\smtcnt\norm*{\pi^{(0)}-\pi}^2_2}-
\min _{\pi \in \Pi}\brace*{\epivio{b_0}{\pi}+\smtcnt\norm*{\pi^{(T)}-\pi}^2_2}\\
=& 
\epivio{b_0}{\barpi^{(0)}}+\smtcnt\norm*{\pi^{(0)}-\barpi^{(0)}}^2_2-
\epivio{b_0}{\barpi^{(T)}}-\smtcnt\norm*{\pi^{(T)}-\barpi^{(T)}}^2_2\\
\leq&\epivio{b_0}{\barpi^{(T)}}+\smtcnt\norm*{\pi^{(0)}-\barpi^{(T)}}^2_2-
\epivio{b_0}{\barpi^{(T)}}-\smtcnt\norm*{\pi^{(T)}-\barpi^{(T)}}^2_2\\
\leq &
\smtcnt\norm*{\pi^{(0)}-\barpi^{(T)}}^2_2 \leq 4\smtcnt S\;,
\end{aligned}
$$
where the last inequality uses \cref{eq:pi-pi bound}.

\looseness=-1
By combining all the results, we obtain
$$
\sum^{T-1}_{t=0}
\norm*{\nabla \paren*{\ME_{\frac{1}{2\smtcnt}}\comp \epivionot{b_0}}(\pi^{(t)})}_2^2
\leq 
\frac{16\smtcnt S}{\alpha}
+ 
4T\paren*{\alpha \smtcnt (\Lipcnt^2 + \epsgrd^2)
+ 4\smtcnt \epsgrd \sqrt{S}
+ 2 \smtcnt \epsest
}\;.
$$
This concludes the proof.
\end{proof}

We are now ready to prove \cref{theorem:policy grad convergence-restate}.
\begin{proof}[Proof of \cref{theorem:policy grad convergence-restate}]

\looseness=-1
Let $\pi^\star_{b_0} \in \argmin_{\pi \in \Pi}\epivio{b_0}{\pi}$.
Then,
$$
\begin{aligned}
&\min_{t\in \bbrack{0, T-1}}
\epivio{b_0}{\pi^{(t)}} - \epivio{b_0}{\pi_{b_0}^\star}\\
&\leq
\frac{1}{T}\sum^{T-1}_{t=0}\epivio{b_0}{\pi^{(t)}} - \epivio{b_0}{\pi_{b_0}^\star}\\
&\numeq{\leq}{a} \frac{1}{T}C\sum^{T-1}_{t=0}\norm*{\nabla \paren*{\ME_{\frac{1}{2\smtcnt}} \comp \epivionot{b_0}}(\pi)}_2\\
&\leq C\sqrt{\frac{1}{T}\sum^{T-1}_{t=0}\norm*{\nabla \paren*{\ME_{\frac{1}{2\smtcnt}} \comp \epivionot{b_0}}(\pi)}_2^2}\\
&\numeq{\leq}{b} C\sqrt{
\frac{16\smtcnt S}{T\alpha}
+ 
4\paren*{\alpha \smtcnt (\Lipcnt^2 + \epsgrd^2)
+ 4\smtcnt \epsgrd \sqrt{S}
+ 2 \smtcnt \epsest
}}\\
&\numeq{=}{c} C \sqrt{
\frac{16\smtcnt S}{\delta\sqrt{T}}
+ 
\frac{4\delta}{\sqrt{T}}\smtcnt (\Lipcnt^2 + \epsgrd^2)
+ 16\smtcnt \epsgrd \sqrt{S}
+ 8 \smtcnt \epsest
}\\
&\numeq{\leq}{d} 4C\sqrt{\smtcnt S \delta^{-1}} T^{-\frac{1}{4}}
+ 
2C\sqrt{\smtcnt (\Lipcnt^2 + \epsgrd^2) \delta} T^{-\frac{1}{4}} + 
4 C \sqrt{\smtcnt} \paren*{S^{\frac{1}{4}} \sqrt{\epsgrd} + \sqrt{\epsest}}
\\
&\numeq{=}{e}
4C\sqrt{\smtcnt}S^{\frac{1}{4}} (\Lipcnt^2 + \epsgrd^2)^{\frac{1}{4}} T^{-\frac{1}{4}}
+ 
4 C \sqrt{\smtcnt} \paren*{S^{\frac{1}{4}} \sqrt{\epsgrd} + \sqrt{\epsest}}\;.
\end{aligned}
$$
where (a) uses \cref{theorem: Phi gradient dominance-Moreau}, (b) uses \cref{lemma:sum-grad-norm-square}, (c) replaces $\alpha$ with $\delta / \sqrt{T}$, (d) uses $\sqrt{x + y} \leq \sqrt{x} + \sqrt{y}$,
and (e) sets $\delta = \sqrt{S / (\Lipcnt^2 + \epsgrd^2)}$.

\looseness=-1
Therefore, when $\epsest$, $\epsgrd$, $\alpha$, and $T$ satisfy:
\begin{align*}
&\epsgrd = \frac{\varepsilon^2}{1024 C^2 \smtcnt \sqrt{S}}\;, \;
\epsest = \frac{\varepsilon^2}{1024 C^2 \smtcnt }\;, \\
&T = 4096 C^4 \smtcnt^2 S(\Lipcnt^2 + \epsgrd^2)\varepsilon^{-4}\;,
\;\text{ and }\; 
\alpha = \frac{\delta}{\sqrt{T}} = \frac{\varepsilon^{2}}{64C^2 \smtcnt (\Lipcnt^2 + \epsgrd^2)}\;,
\end{align*}
we have
$$
\min_{t\in \bbrack{0,T-1}}
\epivio{b_0}{\pi^{(t)}} 
\leq \epivio{b_0}{\pi_{b_0}^\star} + \frac{3}{4}\varepsilon\;.
$$

\looseness=-1
Finally, $t^\star \in \argmin_{t \in \bbrack{0,T-1}} \estvio^{(t)}$ satisfies that
$$
\begin{aligned}
\epivio{b_0}{\pi^{(t^\star)}}
&= \estvio^{(t^\star)}  + \epivio{b_0}{\pi^{(t^\star)}}- \estvio^{(t^\star)} \\
&\leq \min_{t\in \bbrack{0,T-1}} \estvio^{(t)} + \epsest \\
&\leq \min_{t\in \bbrack{0,T-1}} \epivio{b_0}{\pi^{(t)}} + \estvio^{(t)} - \epivio{b_0}{\pi^{(t)}} + \epsest \\
&\leq \min_{t\in \bbrack{0,T-1}} \epivio{b_0}{\pi^{(t)}} + 2\epsest \\
&\leq \epivio{b_0}{\pi_{b_0}^\star} + \frac{3}{4}\varepsilon  + 2\epsest\\
&\leq \epivio{b_0}{\pi_{b_0}^\star} + \varepsilon\;.
\end{aligned}
$$

This concludes the proof.
\end{proof}

\end{document}